\def\eqref#1{equation~\ref{#1}}
\def\1{\bm{1}}
\def\ve{{\bm{e}}}
\def\vw{{\bm{w}}}
\def\vx{{\bm{x}}}
\def\vy{{\bm{y}}}
\def\vz{{\bm{z}}}
\DeclareMathAlphabet{\mathsfit}{\encodingdefault}{\sfdefault}{m}{sl}
\SetMathAlphabet{\mathsfit}{bold}{\encodingdefault}{\sfdefault}{bx}{n}
\newcommand{\pdata}{p_{\vx}}
\newcommand{\ptrain}{p_{\D}}
\newcommand{\E}{\mathbb{E}}
\newcommand{\R}{\mathbb{R}}
\newcommand{\Z}{\mathbb{Z}}
\DeclareMathOperator*{\argmin}{arg\,min}
\DeclareMathOperator{\Tr}{Tr}
\DeclareMathOperator{\diag}{diag}
\renewcommand{\Im}[1]{\mathrm{Im}( {#1})}
\newcommand{\Haus}{\mathcal{H}}
\newcommand{\supp}[1]{\mathrm{supp} \left( {#1}  \right)}
\newcommand{\norm}[1]{\left\| #1 \right \|}
\newcommand{\Normal}[1]{\mathcal{N}\left( {#1} \right)}
\newcommand{\Id}{\mathrm{I}}
\newcommand{\id}{\iota}
\newcommand{\Mean}[1]{\mathbb{E} \left[ {#1} \right] }  % expectation
\newcommand{\X}{\mathbb{R}^N}    % \newcommand{\X}{\mathcal{X}}
\newcommand{\Y}{\mathbb{R}^M}    % \newcommand{\Y}{\mathcal{Y}}
\newcommand{\D}{\mathcal{D}}
\newcommand{\G}{\mathcal{G}}
\newcommand{\T}{\mathcal{T}}
\newcommand{\M}{\mathcal{M}}
\newcommand{\Mtrans}{\mathcal{M}_{\T}}     % Set of translation equivariant estimator
\newcommand{\Mtransloc}{\mathcal{M}_{\T, \text{loc}}} 
\newcommand{\inner}[1]{\left\langle #1 \right\rangle}
\newcommand{\x}{\bm{x}}
\newcommand{\y}{\bm{y}}
\newcommand{\rank}[1]{\mathrm{rank}\left( {#1} \right)}
\newcommand{\eqdef}{\stackrel{\mathrm{def}}{=}}     % Set of translation equivariant estimator
\newcommand{\xmmse}{\hat{x}_{\text{\tiny MMSE}}}
\newcommand{\xmmseaug}{\hat{x}^{\text{\tiny aug}}_{\text{\tiny MMSE}}} 
\newcommand{\xtrans}{\hat{x}_{\T}}
\newcommand{\xtransloc}{\hat{x}_{\T, \text{loc}}}
\newcommand{\neural}{N_{w}}
\newcommand{\figwithlabel}[3]{%
    \begin{minipage}{#2}
        \centering
        % #1 \\ [0.5ex]
        \makebox[\linewidth][c]{\parbox[t][0.8em][t]{\linewidth}{\centering #1}}\\[0.5ex]
        \includegraphics[width=\linewidth]{#3}
    \end{minipage}%
}
\newcommand{\figwithsize}[2]{%
    \begin{minipage}{#1}
        \centering
        \includegraphics[width=\linewidth]{#2}
    \end{minipage}%
}
\newtcolorbox{graybox}[1][]{%
    colback=gray!10, % background color
    colframe=gray!10, % no visible border
    boxrule=0pt,     % border thickness
    arc=2pt,         % rounded corners
    left=2pt,        % left padding
    right=2pt,       % right padding
    top=2pt,         % top padding
    bottom=2pt,      % bottom padding
    enlarge left by=-1mm, enlarge right by=-1mm, % tighten horizontal space
    boxsep=0pt,          % remove internal space
    % before skip=0pt, after skip=0pt, % remove vertical space before/after box
    % nobeforeafter,       % ignore paragraph skips before/after
    #1
}
\newcommand{\eg}{\emph{e.g.}\ }
\newcommand{\etc}{\emph{etc.}\ }
\newcommand{\ie}{\emph{i.e.}\ }
\theoremstyle{plain}
\newtheorem{theorem}{Theorem}[section]
\newtheorem{proposition}[theorem]{Proposition}
\newtheorem{lemma}[theorem]{Lemma}
\newtheorem{corollary}[theorem]{Corollary}
\theoremstyle{definition}
\newtheorem{definition}[theorem]{Definition}
\theoremstyle{remark}
\newtheorem{remark}[theorem]{Remark}
\icmltitlerunning{An analytic theory of convolutional neural network inverse problems solvers}
\begin{document}

\twocolumn[
    \icmltitle{An analytic theory of convolutional neural network inverse problems solvers}

    % It is OKAY to include author information, even for blind
    % submissions: the style file will automatically remove it for you
    % unless you've provided the [accepted] option to the icml2026
    % package.

    % List of affiliations: The first argument should be a (short)
    % identifier you will use later to specify author affiliations
    % Academic affiliations should list Department, University, City, Region, Country
    % Industry affiliations should list Company, City, Region, Country

    % You can specify symbols, otherwise they are numbered in order.
    % Ideally, you should not use this facility. Affiliations will be numbered
    % in order of appearance and this is the preferred way.
    \icmlsetsymbol{equal}{*}

    \begin{icmlauthorlist}
        \icmlauthor{Minh-Hai Nguyen}{irit}
        \icmlauthor{Quoc-Bao Do}{insa}
        \icmlauthor{Edouard Pauwels}{tse}
        \icmlauthor{Pierre Weiss}{irit}
    \end{icmlauthorlist}

    \icmlaffiliation{irit}{IRIT \& CBI, CNRS \& Université Toulouse, France}
    \icmlaffiliation{tse}{Toulouse School of Economics, Université Toulouse Capitole, France}
    \icmlaffiliation{insa}{INSA Toulouse, Université Toulouse, France}
    \icmlcorrespondingauthor{Minh-Hai Nguyen}{minh-hai.nguyen@irit.fr}

    % You may provide any keywords that you
    % find helpful for describing your paper; these are used to populate
    % the "keywords" metadata in the PDF but will not be shown in the document
    \icmlkeywords{Convolutional Neural Networks, Inverse Problems, MMSE Estimators, Analytical Formula}

    \vskip 0.3in
]

% this must go after the closing bracket ] following \twocolumn[ ...

% This command actually creates the footnote in the first column
% listing the affiliations and the copyright notice.
% The command takes one argument, which is text to display at the start of the footnote.
% The \icmlEqualContribution command is standard text for equal contribution.
% Remove it (just {}) if you do not need this facility.

\printAffiliationsAndNotice{}  % leave blank if no need to mention equal contribution
% \printAffiliationsAndNotice{\icmlEqualContribution} % otherwise use the standard text.Vincent Duval

%%%%%%%%%%%%%%%%%%%%%%%%%%%%%%%%%%%%%%%%%%%%%%%%%%%%%%%%%%%%%%%%%%%%%%%%%%%%%%%
%%%%%%%%%%%%%%%%%%%%%%%%%%%%%%%%%%%%%%%%%%%%%%%%%%%%%%%%%%%%%%%%%%%%%%%%%%%%%%%
% MAIN CONTENT
%%%%%%%%%%%%%%%%%%%%%%%%%%%%%%%%%%%%%%%%%%%%%%%%%%%%%%%%%%%%%%%%%%%%%%%%%%%%%%%
%%%%%%%%%%%%%%%%%%%%%%%%%%%%%%%%%%%%%%%%%%%%%%%%%%%%%%%%%%%%%%%%%%%%%%%%%%%%%%%
% MAIN CONTENT OF THE PAPER
\begin{abstract}
    Supervised convolutional neural networks (CNNs) are widely used to solve imaging inverse problems, achieving state-of-the-art performance in
    numerous applications.
    However, despite their empirical success, these methods are poorly understood from a theoretical perspective and often treated as black boxes.
    To bridge this gap, we analyze trained neural networks through the lens of the Minimum Mean Square Error (MMSE) estimator, incorporating
    functional constraints that capture two fundamental inductive biases of CNNs: translation equivariance and locality via finite receptive fields.
    Under the empirical training distribution, we derive an analytic, interpretable, and tractable formula for this constrained variant, termed
    Local-Equivariant MMSE (LE-MMSE).
    Through extensive numerical experiments across various inverse problems (denoising, inpainting, deconvolution, accelerated MRI), datasets (FFHQ,
    CIFAR-10, FashionMNIST, FastMRI), and architectures (U-Net, ResNet, PatchMLP), we demonstrate that our theory matches the neural networks outputs
    (PSNR \(\!\gtrsim\!25\)dB).
    Furthermore, we provide insights into the differences between \emph{physics-aware} and \emph{physics-agnostic} estimators, the impact of
    high-density regions in the training (patch) distribution, and the influence of other factors (dataset size, patch size, \etc).
\end{abstract}

\section{Introduction}
\label{sec:intro}
% -----------------------------------------
% \input{tex_figure/teasor.tex}
% \input{tex_figure/teasor_v2.tex}
\begin{figure}[ht!]
    \centering
    \includegraphics[trim=30mm 0 0 0,clip,width=\linewidth]{./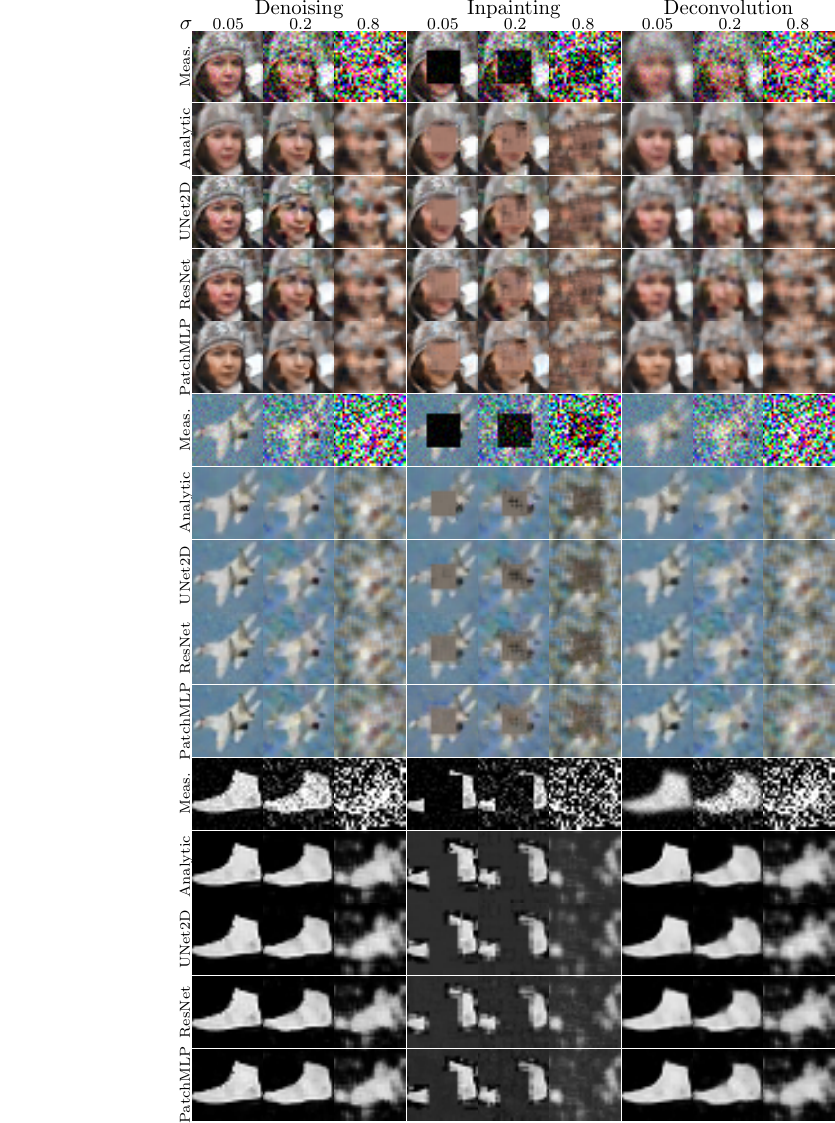}
    % \vskip -0.1in
    \caption{
        Our analytic theory accurately predicts neural network outputs across settings. We consider three inverse problems: denoising, inpainting,
        and deconvolution (left to right), on FFHQ, CIFAR10, and FashionMNIST datasets (top to bottom) with varying noise levels \(\sigma\)
        (columns). For each setting, we show the measurements (top row), our analytic LE-MMSE estimator (second row), and outputs of trained UNet,
        ResNet, and PatchMLP models (last three rows).
        Theory closely matches network outputs.
    }
    \label{fig:teaser}
    % \vskip -0.1in
\end{figure}%
% -----------------------------------------
Let \(A: \mathbb{R}^{N}\to \mathbb{R}^M\) be a linear
mapping, \(\vx\) a random vector in \(\X\). We consider the forward model with additive white Gaussian noise
\begin{equation}
    \label{eq:forward}
    \vy = A \vx + \ve \qquad \text{ where } \qquad \ve \sim \Normal{0, \sigma^2 \Id_M}
\end{equation}
and aim to construct an estimator \(\hat{x}(\vy)\) of \(\vx\) given \(\vy\).
A variety of neural network architectures have been proposed for this task.
Some are \emph{physics-agnostic}, \ie do not depend on \(A\) explicitly and often rely on fully connected layers~\citep{zhu2018image}.
Others are \emph{physics-aware}, and rely on an approximate inverse of \(A\).
In this work, we focus on linear operators \(A\), but the analytical formulas extend to nonlinear operators as well.
We consider estimators of the following form
\begin{equation} %\label{eq:estimator_shape}
    \hat{x}(\vy) = \neural(B \vy),
\end{equation}
where \(\neural\) is a neural network with weights \(w \in \Theta\) and \(B \in \mathbb{R}^{N\times M}\) is a linear operator such as the identity
when \(M=N\) (physics-agnostic), or an approximate inverse of \(A\) (physics-aware) such as the transpose \(A^\top\), the pseudo-inverse \(A^+\), or
a Tikhonov-regularized inverse.
Such designs were popularized in~\citep{kim2016accurate, jin2017deep, schlemper2017deep, wurfl2016deep} and are still widely used in
practice~\citep{mccann2017convolutional, zhou2022dudodr, wang2020deep}.
They can be considered as a robust baseline, and be interpreted as a single-step variant of unrolled neural
networks~\citep{adler2018learned,Celledoni_2021_equivariant_neural_networks}, which often achieve top performance in empirical
benchmarks~\citep{muckley2021results}.

\paragraph{Empirical MMSE estimators}
%In this paper, we study neural networks $\phi^\star_\M$ trained as follows.
%The following estimator is central to our analysis.
\begin{definition}[Constrained empirical MMSE estimators\label{def:constrained_mmse}]
    Let \(\D\) be a dataset of clean images.
    We define the empirical MMSE estimators as \(\hat x_{\M} \eqdef \phi^\star_{\M} \circ B\), where % $\phi^\star_\M$ is the solution of
    \begin{equation}\label{eq:constrained_MMSE}
        \phi^\star_\M \eqdef \argmin_{\phi \in \M} \; \frac{1}{2} \Mean{\norm{\phi (B \vy) - \vx}^2}.
    \end{equation}
    Here, \(\vx\) follows the empirical distribution of the dataset \(p_{\D} \!=\! \frac{1}{|\D|} \sum_{x \in \D} \delta_{x}\) and \(\vy\) defined
    by~(\ref{eq:forward}).
    The set \(\M\) encodes the range of functions reachable by the neural network,  \(\M \!=\! \{\neural : w \in \Theta\}\), with \(\Theta\) a set of
    admissible weights.
\end{definition}
A precise characterization of the set \(\M\) is not available for a given neural network architecture \(\neural\).
However, we can relate the trained neural network with statistical estimators constrained to function classes encoding architectural constraints of
the neural networks as follows.
\begin{definition}[MMSE, E-MMSE, LE-MMSE]
    \label{def:mmse}
    %Our estimators of interest are of the form $\hat x = \phi^\star_\M\circ B$, where $\phi^\star_\M$ is the solution of~(\ref{eq:constrained_MMSE}).
    \cref{tab:summarize_mmse_def} defines the constraints \(\M\) and estimators studied in this work.
    \begin{center}
        \begin{table}[H]
            \vskip -0.1in
            \caption{Different constraint sets correspond to different variants of the MMSE estimator.}\label{tab:summarize_mmse_def}
            \vskip -0.1in
            \begin{small}
                % \resizebox{\columnwidth}{!}{
                \setlength{\tabcolsep}{3pt}
                \begin{sc}
                    \begin{tabular}{llll}
                        \toprule
                        & Constraint \(\M\) & Estimator & See\\
                        \midrule
                        (1) & Any measurable func. & MMSE -- \(\xmmse\) &(\ref{prop:expression_MMSE}) \\
                        (2) & (1) + Translation equiv. & E-MMSE -- \(\xtrans\) &(\ref{thm:equivariant_mmse}) \\
                        & (2) + Locality & LE-MMSE -- \(\xtransloc\) &(\ref{theorem:local_trans_estimator}) \\
                        \bottomrule
                    \end{tabular}
                \end{sc}
                % }
            \end{small}
        \end{table}
        \vskip -0.4in
    \end{center}
\end{definition}%
These function classes capture different architectural constraints of the neural network \(\neural\).
It is well known that sufficiently deep and wide multi-layer perceptrons (MLPs) can approximate any measurable function~\citep{hornik1989multilayer},
hence the MMSE \(\xmmse\) is a proxy for unconstrained MLPs.
The E-MMSE \(\xtrans\) approximates CNNs without constraints on the receptive field, while the LE-MMSE \(\xtransloc\) approximates the set of
functions reachable by CNNs with finite receptive fields.

In what follows, for results involving neural networks, we let \(\M\) denote the function class reachable by the considered architecture, and for
results involving analytical formulas, we let \(\M\) denote the abstract function class in~\cref{tab:summarize_mmse_def}.

\paragraph{Related works}
In the context of generative diffusion models, \citep{kamb2025an,scarvelisclosed} derive closed-form expressions for the MMSE under additive Gaussian
noise (denoising) with architectural constraints such as equivariance and locality, and use them to study memorization and generalization.
Locality has also been examined in~\citep{kadkhodaie2023learning} for modeling patch distributions.
For imaging inverse problems, equivariance has been widely explored in self-supervised learning~\citep{chen2023imaging,terris2024equivariant}.
In the supervised setting for linear inverse problems, it is standard to assume that a trained neural network approximates the MMSE~\citep{adler2018deep}.
To the best of our knowledge, for general inverse problems, our work is the first to introduce additional functional constraints to more accurately
characterize the action of neural networks and to derive closed-form expressions for the resulting constrained MMSE estimators.

We provide an analytic LE-MMSE formula and show that it precisely predicts trained CNN outputs across tasks, datasets, and architectures,
see~\cref{fig:teaser} for an overview,
\cref{fig:neural_vs_analytical_unet2d_patch_5,fig:neural_vs_analytical_resnet_patch_11,fig:neural_vs_analytical_patchmlp_patch_11} for quantitative
comparisons and~\cref{sec:supp_numerical} for more results.

\paragraph{Contributions}
In the context of inverse problems outlined above, our contributions and outline are as follows:
\begin{enumerate}[leftmargin=*,itemsep=.2em,topsep=0.pt]
    \item We review the MMSE estimator~(\cref{sec:mmse}).
    \item We define constrained MMSE variants that capture inductive biases of CNNs and derive their closed-form expressions (\cref{sec:analytical_formula}).
        Remarkably, we find tractable and interpretable formulas that generalize those of~\citep{kamb2025an} to arbitrary linear inverse problems.
    \item We analyze the theoretical behavior of these estimators (\cref{sec:analytical_formula}), including the influence of the pre-inverse operator \(B\). We demonstrate that while the MMSE and E-MMSE heavily rely on memorizing the training data, the LE-MMSE constructs a patchwork of training patches, leading to superior generalization. 
    
    \item We establish strong connections between the LE-MMSE and classical non-local means. This reveals that modern CNNs can be rigorously interpreted as structured, data-adaptive generalizations of these methods, providing a theoretical foundation that addresses the challenge of explainability and guides the design of future, physics-aware architectures.

    \item We validate our theory on extensive numerical experiments (\cref{sec:experiments}) with various architectures, inverse problems and dataset.
        We find strong agreement between network outputs and the LE-MMSE estimator (PSNR \(\gtrsim 25\) dB for all tasks). See \Cref{fig:teaser}.
    \item We show empirically that different choices of architectures converge to the same solution, which is shown to be very close to our
        analytical formula~(\cref{sec:neural_vs_analytical}).
    \item We characterize where the theory better matches the practice -- high-density regions of the \emph{patch distribution}, and explain how CNNs
        generalize.
        The influence of additional factors (training set size, noise level, receptive field size) is also investigated~(\cref{sec:hyper_parameter}).
\end{enumerate}

In what follows, we let bold lowercase letters (\eg \(\x, \y\)) denote random vectors, and non-bold lowercase letters (\eg \(x, y\)) denote their realizations.  
We let \(\bar{x}\) denote a fixed but unknown signal to recover, and \(y = A \bar{x} + e\) be a realization of the measurement
\(\vy\) defined in~(\ref{eq:forward}), with noise \(e \sim \Normal{0, \sigma^2 \Id_M}\).

% ##########################################################################################
%                                       THEORY
% ##########################################################################################

\section{Unconstrained MMSE estimator}\label{sec:mmse}
It is well known that the MMSE estimator coincides with the \emph{conditional mean} almost surely~\citep{kay1993fundamentals}:
\[
    \xmmse(y) = \Mean{\vx \vert B \vy = By} = \int x \cdot p(x \vert By) \, dx.
\]
When the data distribution \(\pdata\) is replaced by the empirical measure \(\ptrain = \frac{1}{|\D|}\sum_{x\in\D}\delta_x\), with a finite dataset
\(\D\), the empirical MMSE estimator becomes a kernel regressor~\citep{nadaraya1964estimating,watson1964smooth}:
\begin{proposition}[Closed-form of the MMSE \label{prop:expression_MMSE}]
    The MMSE estimator in \Cref{def:mmse} writes, for any \(y \in \Y\),
    \begin{equation}
        \xmmse(y) = \sum_{x\in\D} x \cdot w(x \vert y), \label{eq:empirical_mmse}  \\
    \end{equation}
    where \(w(x \vert y) = \frac{\Normal{B y; B A x, \sigma^2 B B^{\top}}}{\sum_{x'\in \D} \Normal{B y; B A x', \sigma^2 B B^{\top}}}\).
\end{proposition}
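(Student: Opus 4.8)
The plan is to combine the variational characterization of the MMSE estimator recalled above with an explicit Bayes computation under the empirical prior. Since in case (1) of \cref{tab:summarize_mmse_def} the constraint set $\M$ is the class of all measurable functions, the minimizer of $\phi \mapsto \tfrac12\Mean{\norm{\phi(B\vy)-\vx}^2}$ is, almost everywhere with respect to the law of $B\vy$, the conditional mean $\phi^\star(z) = \Mean{\vx \mid B\vy = z}$; this follows from the orthogonal decomposition $\Mean{\norm{\phi(B\vy)-\vx}^2} = \Mean{\norm{\phi(B\vy)-\Mean{\vx\mid B\vy}}^2} + \Mean{\norm{\Mean{\vx\mid B\vy}-\vx}^2}$, which is minimized exactly when $\phi(B\vy) = \Mean{\vx\mid B\vy}$. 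Hence $\xmmse(y) = \Mean{\vx\mid B\vy = By}$, and it remains to evaluate this posterior mean.

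First I would spell out the joint law of $(\vx,\vy)$ induced by \cref{def:mmse} with $\vx\sim\ptrain$: we draw $\vx$ uniformly from $\D$ and then set $\vy = A\vx+\ve$ with $\ve\sim\Normal{0,\sigma^2\Id_M}$. Conditionally on $\vx=x$, the vector $B\vy = BAx + B\ve$ is Gaussian with mean $BAx$ and covariance $\sigma^2 BB^\top$ (the pushforward of the white noise through $B$). Applying Bayes' rule for a discrete prior against a Gaussian likelihood then gives, for each $x\in\D$,
\[
    p(x \mid B\vy = By) = \frac{\tfrac{1}{|\D|}\,\Normal{By; BAx, \sigma^2 BB^\top}}{\sum_{x'\in\D} \tfrac{1}{|\D|}\,\Normal{By; BAx', \sigma^2 BB^\top}} = w(x\mid y),
\]
because the prior masses $1/|\D|$ and the shared normalizing constants of the Gaussian densities cancel between numerator and denominator. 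Integrating $\vx$ against this posterior yields $\xmmse(y)=\sum_{x\in\D} x\cdot w(x\mid y)$. Although the conditional mean is a priori defined only for a.e. $y$, the right-hand side is an explicit, everywhere-defined version of it, which legitimizes the statement ``for any $y\in\Y$''.

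I expect the main point needing care to be the Bayes step when $B$ is not surjective: then $\sigma^2 BB^\top$ is singular and $\Normal{\,\cdot\,;BAx,\sigma^2 BB^\top}$ is not a density on $\R^M$. The resolution is that $B\vy$ and every mean $BAx$ almost surely lie in $\Im{B}$, so $\Normal{\,\cdot\,;\,\cdot\,,\sigma^2 BB^\top}$ should be read as the Gaussian density on the subspace $\Im{B}$ (equivalently, with the pseudo-determinant of $\sigma^2 BB^\top$ in the normalization and the Moore--Penrose inverse in the exponent). With this convention the likelihood ratios, hence $w(x\mid y)$, are unchanged since the subspace-Lebesgue normalizations cancel as before; the formula is thus valid for arbitrary $B$, covering both the physics-agnostic ($B=\Id$) and physics-aware ($B=A^\top$, $A^+$, Tikhonov) choices. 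One could alternatively sidestep degenerate Gaussians by passing to an orthonormal basis of $\Im{B}$ and carrying out the computation there.
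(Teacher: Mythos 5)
Your proof is correct, but it takes a genuinely different route from the paper's. You invoke the classical characterization of the unconstrained MMSE as the conditional mean (via the Pythagorean/tower decomposition) and then compute the posterior explicitly by Bayes' rule for a uniform discrete prior against a Gaussian likelihood. The paper instead proves a first-order orthogonality condition (\cref{prop:supp_optimality_condition}: $\Mean{\inner{\varphi(B\vy),\phi^\star(B\vy)-\vx}}=0$ for all admissible $\varphi$) and verifies directly that the proposed $\phi^\star$ satisfies it, using a change-of-variables lemma for pushing Gaussian integrals through the linear map $B$ onto $\Im{B}$ (\cref{lemma:supp_integration_linear_subspace}). Your argument is more elementary and transparent for the unconstrained case, but the paper's machinery is chosen deliberately: the same optimality-condition-plus-change-of-variables template is reused verbatim for the E-MMSE and LE-MMSE, where the conditional-mean characterization is no longer available and a Bayes computation would not suffice. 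Your handling of the degenerate case is also sound and matches the paper's convention (\cref{def:supp_degenerate_gaussian}): since $BAx\in\Im{B}$ for every $x$ and $\Im{BB^\top}=\Im{B}$, all the Gaussians share the common support $\Im{B}$, the point $By$ always lies in that support, and the subspace normalization constants cancel in the ratio, so $w(x\mid y)$ is well defined for every $y\in\Y$.
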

A proof is provided in~\ref{prop:supp_unconstrained_mmse}.
\(\Normal{\cdot; \mu, \Sigma}\) refers to the density of a (possibly degenerate) Gaussian distribution with mean \(\mu\) and covariance \(\Sigma\),
see~\cref{def:supp_degenerate_gaussian}.
The weights \(w(x \vert y)\) is the posterior probability that training sample \(x\) explains \(By\). The denominator term ensures that they  sum to \(1\).

\begin{remark}
    When \(B \!=\! \Id\), \(\xmmse\) is exactly the \emph{posterior mean}: \(\xmmse(\vy) = \Mean{\vx \vert \vy}\).
\end{remark}

\paragraph{Physics-aware estimator}
To elucidate the role of \(B\), we remark that the weights have the following explicit form:
\begin{equation*}%\label{eq:empirical_mmse_with_B}
    w(x \vert y) \propto \exp \left( -\frac{1}{2\sigma^2}\norm{\Pi_{\Im{B^\top}}(A(\bar{x} - x) + e)}^2 \right)
\end{equation*}
Taking \(B=A^+\) or \(A^\top\), \(\Im{B^T} = \Im{A}\) and the inner term is \(A(\bar{x} - x) + \Pi_{\Im{A}}e\).
The noise \(e\) is projected onto the image of \(A\), reducing its magnitude.
Therefore, for the MMSE, physics-aware estimators is preferable to physics-agnostic ones, which is consistent with a popular belief and the usual practice.
However, beyond the projection effect, the choice of \(B\) only plays a little role, for example, the estimator is the same for any \(B\) such that
\(\Im{B^T} = \Im{A}\). We discuss this further in~\cref{subsec:agnostic_vs_aware}.

\paragraph{Memorization of the empirical MMSE}
The weights in~(\ref{eq:empirical_mmse}) satisfy \(w(x \vert y) \geq 0\) and \(\sum_{x \in \D} w(x \vert y) = 1\), hence \(\xmmse(y)\in
\mathrm{conv}(\D)\) --- the convex envelop of the dataset \(\D\), for every \(y\).
In particular, as \(\sigma \! \to \! 0\), \(\xmmse(y)\) converges to \(x \in \D\) such that \(BAx\) is the closest to \(By\) (\(1\)-nearest
neighbor); ties being averaged.
Therefore, the empirical MMSE ``memorizes'': it reconstructs by re-weighting training examples and never extrapolates outside \(\mathrm{conv}(\D)\),
see~\cref{fig:memorization_illustration}.
This aligns with recent findings related to memorization in generative models~\citep{kamb2025an,bertrand2025closed,scarvelisclosed}.
% ------------------------------------------------
\begin{figure}[tb]
    \centering
    \def\size{0.16\linewidth}
    % ---------- DENOISING ----------------
    \rotatebox{90}{\hspace*{-0.5cm}\tiny denoising}
    \figwithlabel{$\bar{x}$}{\size}{./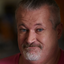}%
    \figwithlabel{$y$}{\size}{./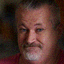}%
    \figwithlabel{$\xmmse(y)$}{\size}{./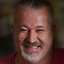}%
    \figwithlabel{$\xtrans(y)$}{\size}{./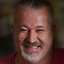}%
    \figwithlabel{$\xtransloc(y)$}{\size}{./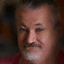}%
    \figwithlabel{\tiny nearest neighbor}{\size}{./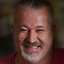}%
    % ------------------ CONVOLUTION --------------------
    \\\rotatebox{90}{\hspace*{-0.6cm} \tiny deconvolution}
    \figwithsize{\size}{./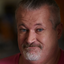}%
    \figwithsize{\size}{./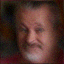}%
    \figwithsize{\size}{./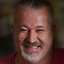}%
    \figwithsize{\size}{./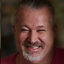}%
    \figwithsize{\size}{./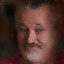}%
    \figwithsize{\size}{./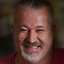}%
    \vskip -0.05in
    \caption{
    The MMSE and E-MMSE estimators memorize (yield the nearest neighbor), while the LE-MMSE estimator recombines training patches to give good reconstruction.    
    % Top: denoising. Bottom: deconvolution. 
    % The noise level is set to $\sigma=0.02$ and $\D$ contains 50k images from FFHQ at $64 \times 64$.
    }
    \label{fig:memorization_illustration}
    \vskip -0.15in
\end{figure}
% ------------------------------------------------

%%%%%%%%%%%%%%%%%%%%%%%%%%%%%%%%%%%%%%%%%%%%%%%%%%%%%%%%%%%%%%%%%%%%%%%%%%%%%%%%%%%

\section{Constrained MMSE estimators}\label{sec:analytical_formula}
We first provide details on the constraint classes and then describe our main theoretical results: analytical formulas and properties of constrained
MMSE estimators.
\subsection{Functional constraints: equivariance and locality}\label{subsec:constraint_classes}
Motivated by the success of CNNs in inverse problems, we study function classes \(\M\) that encode two core inductive biases -- \emph{equivariance}
to geometric transformations~\citep{cohen2016group,veeling2018rotation} and \emph{locality} via bounded receptive field~\citep{kadkhodaie2023learning}.
We start with formal definition of these concepts (see~\cref{sec:supp_preliminaries} for precise definitions).
Let \(x \in \R^N\) represent an image defined on a discrete 2D grid of size \(H \times W = N\).

\begin{definition}[Translation equivariant]
    \label{def:trans_equiv_fn}
    Let \(\T = \Z_{H} \times \Z_{W}\) be the group of 2D cyclic translations, where \(\Z_H = \{0, 1, \ldots, H-1\}\) and \(\Z_W = \{0, 1, \ldots, W-1\}\).
    For each \(g = (g_h, g_w) \in \T\), its action is represented by a permutation matrix \(T_g \in \R^{N \times N}\), where \(T_g x\) shifts the
    image \(x \in \R^N\) by \(g_h\) pixels vertically and \(g_w\) pixels horizontally with periodic boundary conditions.
    A measurable map \(\phi: \X \to \X\) is said to be \emph{translation equivariant} if it satisfies \(\phi(T_g x) = T_g \phi(x)\) for all \(x \in
    \X\) and \(g \in \T\).
    We denote by \(\Mtrans\) the set of all such maps.
\end{definition}

We now add locality constraint.
For each pixel \(n\),
consider the patch extractor \(\Pi_n : x \in \R^N \mapsto \Pi_n x = x[\omega_n] \in \R^P\) that extracts a square patch \(x[\omega_n]\) of size
\(\sqrt{P} \times \sqrt{P}\) centered at pixel \(n\), and \(\omega_n\) is the set of pixel indices in the patch with circular boundary conditions.

\begin{definition}[Local and translation equivariant]\label{def:transloc_equiv_fn}
    A measurable map \(\phi: \X \to \X\) is \emph{local and translation equivariant} if there exists a measurable map \(f: \R^P \to \R\) such that,
    for all \(x \in \X\) and any pixel \(n\), the output of \(\phi\) at pixel \(n\)  denoted by \(\phi(x)[n]\), depends only on the local patch
    \(x[\omega_n]\), that is \(\phi(x)[n] = f(\Pi_n x)\).
    We denote by \(\Mtransloc\) the set of all such functions.
\end{definition}

The class \(\Mtransloc\) captures standard CNNs with small kernels and weight sharing or patch-based models, \eg MLPs acting on patches~\citep{glimpse_local}.

\subsection{Preliminary remarks}\label{subset:preliminary_remark_constrained_estimators}
\paragraph{Constraints and projection} The constrained MMSE has a simple geometric interpretation, see proof in~\ref{prop:supp_projection_mmse}.
\begin{proposition}\label{prop:projection_mmse}
    For a closed set \(\M\), the \(\M\)-constrained MMSE estimator in~\cref{def:constrained_mmse} is the \textbf{orthogonal projection} in \(L^2\) of
    the posterior mean \(\Mean{\vx \vert \vy}\) onto the subspace of random vectors of the form \(\mathcal{X} = \{ \phi(B\vy) : \phi \in \M \}\),
    that is \(\hat{x}_{\M}(\vy) = \Pi_{\mathcal{X}} \left( \Mean{\vx \vert \vy} \right)\).
\end{proposition}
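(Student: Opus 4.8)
The plan is to recast the variational problem~\eqref{eq:constrained_MMSE} as a nearest-point problem in the Hilbert space $H = L^2(\Omega;\R^N)$ of square-integrable random vectors with inner product $\inner{U,V}_H = \Mean{\inner{U,V}}$, and then combine the Hilbert-space projection theorem with the defining orthogonality property of conditional expectation. First I would note that, up to the irrelevant factor $\tfrac12$, minimizing $\Mean{\norm{\phi(B\vy)-\vx}^2}$ over $\phi\in\M$ is the same as minimizing $\norm{Z-\vx}_H^2$ over $Z$ ranging in $\mathcal{X} = \{\phi(B\vy) : \phi\in\M\}\subseteq H$. Since each constraint class of~\cref{def:mmse} is a linear subspace of the measurable maps, $\mathcal{X}$ is a linear subspace of $H$, and the hypothesis that $\M$ is closed is exactly what is needed to guarantee that $\mathcal{X}$ is a \emph{closed} subspace. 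The projection theorem then yields a minimizer $Z^\star$, unique in $H$, equal to the orthogonal projection $\Pi_{\mathcal{X}}(\vx)$; writing $Z^\star = \phi^\star_\M(B\vy) = \hat x_\M(\vy)$ (well defined $\vy$-almost surely, even though $\phi^\star_\M$ itself is only determined up to equality $\vy$-a.s.) gives $\hat x_\M(\vy) = \Pi_{\mathcal{X}}(\vx)$, characterized by the normal equations $\inner{\vx - Z^\star, Z}_H = 0$ for every $Z\in\mathcal{X}$.

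It remains to replace $\vx$ by the posterior mean $\Mean{\vx\mid\vy}$ inside the projection. The key observation is that $B\vy$ is a measurable function of $\vy$, so every $Z=\phi(B\vy)\in\mathcal{X}$ is $\sigma(\vy)$-measurable; hence, by the tower property, $\inner{\vx - \Mean{\vx\mid\vy},\, Z}_H = \Mean{\inner{\Mean{\vx-\Mean{\vx\mid\vy}\mid\vy},\, Z}} = 0$, i.e.\ $\vx - \Mean{\vx\mid\vy}$ is orthogonal to $\mathcal{X}$. Now decompose $\vx - \Pi_{\mathcal{X}}\!\big(\Mean{\vx\mid\vy}\big) = \big(\vx - \Mean{\vx\mid\vy}\big) + \big(\Mean{\vx\mid\vy} - \Pi_{\mathcal{X}}(\Mean{\vx\mid\vy})\big)$: the first summand is orthogonal to $\mathcal{X}$ by the previous line, the second by the defining property of $\Pi_{\mathcal{X}}$. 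Therefore $\Pi_{\mathcal{X}}(\Mean{\vx\mid\vy})$ satisfies the normal equations characterizing $\Pi_{\mathcal{X}}(\vx)$, and by uniqueness $\hat x_\M(\vy) = \Pi_{\mathcal{X}}(\vx) = \Pi_{\mathcal{X}}(\Mean{\vx\mid\vy})$, as claimed.

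The only real obstacle is the functional-analytic point packaged in ``closed set $\M$'': one must verify that $\mathcal{X}$ is a closed subspace of $H$, which amounts to (i) the relevant maps landing in $H$, i.e.\ square-integrability of $\phi(B\vy)$, which is harmless under the empirical distribution $p_{\D}$ since one may restrict to $\phi$ bounded on the essential range of $B\vy$; and (ii) the constraints surviving $L^2$-limits, namely: if $\phi_k(B\vy)\to Z$ in $H$, a subsequence converges $\vy$-almost everywhere, and both the equivariance identity $\phi(T_g x)=T_g\phi(x)$ and the locality identity $\phi(x)[n]=f(\Pi_n x)$ pass to almost-everywhere limits, so $Z$ again has the required form. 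Once this is secured, the proof reduces to the two short Hilbert-space steps above; one may also observe that, since $\mathcal{X}\subseteq L^2(\sigma(\vy))$, the identical argument shows the projection could equally be applied to $\Mean{\vx\mid B\vy}$, which is the quantity actually accessible to the estimator.
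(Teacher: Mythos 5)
Your proof is correct and follows essentially the same route as the paper's: both hinge on the fact that $\vx - \Mean{\vx \vert \vy}$ is orthogonal in $L^2$ to every $\sigma(\vy)$-measurable random vector, which the paper packages as the Pythagorean decomposition $\Mean{\norm{\phi(B\vy)-\vx}^2} = \Mean{\norm{\phi(B\vy)-\Mean{\vx\vert\vy}}^2} + \Mean{\norm{\Mean{\vx\vert\vy}-\vx}^2}$ and you package as the normal equations characterizing $\Pi_{\mathcal{X}}$. If anything, you are more careful than the paper about the closedness of $\mathcal{X}$ in $L^2$ (the paper only verifies that $\M$ is a linear subspace), which is the hypothesis actually needed to invoke the projection theorem.
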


\paragraph{Architecture equivariance and data augmentation}
A standard approach to obtain equivariant estimators is by using data augmentation: the set \(\D\) is replaced by \(\T(\D) = \{T_g x : x \in \D,\, g
\in \T\}\) at training time.
This promotes \emph{reconstruction equivariance}~\citep{chen2021equivariant} of \(\hat{x}_{\M}\):
\begin{equation}
    \label{eq:reconstruction_equivariance}
    \hat{x}_{\M}(A T_g \bar{x} + e) = T_g \hat{x}_{\M}(A \bar{x} + T_g^{-1} e).
\end{equation}
The \emph{data-augmented} MMSE estimators, are defined below.
\begin{definition}[Data-augmented MMSE estimator]\label{def:data_augmented_mmse}
    The data-augmented MMSE estimator \(\hat{x}_{\M}^{\text{\tiny aug}}\) is defined as the MMSE estimator \(\hat{x}_\M\) with respect to the
    empirical measure on \(\T(\D)\).
\end{definition}

Reconstruction equivariance is often a desired property, but it differs from the \emph{structural equivariance} of
\(\phi\)~\citep{chen2020group,nordenforsdata}, as illustrated in~\cref{fig:illustration_translation}.

\subsection{Translation Equivariant MMSE estimator}
We start by the closed-form of the E-MMSE estimator \(\xtrans\) and its properties.
See~\cref{sec:supp_equivariant_mmse} for proofs.
\begin{theorem}[Closed-form of E-MMSE]\label{thm:equivariant_mmse}
    The E-MMSE estimator writes, for any \(y\in \R^M\)
    \begin{equation}\label{eq:formula_equivariant_mmse}
        \xtrans(y) = \sum_{x \in \D, g \in \T} T_g x \cdot w_g(x \vert y),
    \end{equation}
    where \(w_g(x \vert y) \propto \Normal{T_g^{-1} By; B A x, \sigma^2 BB^\top}\).
\end{theorem}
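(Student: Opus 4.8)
The plan is to combine two ingredients: the projection characterization of the constrained MMSE (Proposition~\ref{prop:projection_mmse}) and an averaging/symmetrization argument over the translation group $\T$. By Proposition~\ref{prop:projection_mmse}, $\xtrans = \Pi_{\mathcal{X}_{\T}}(\Mean{\vx \vert \vy})$ where $\mathcal{X}_{\T} = \{\phi(B\vy) : \phi \in \Mtrans\}$. The core idea is that for a translation-equivariant constraint, the orthogonal projection onto $\mathcal{X}_{\T}$ can be computed by averaging the unconstrained conditional-mean estimator over the group action, properly conjugated by $T_g$.

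First I would establish the symmetry structure of the problem. The key observation is that the forward model~(\ref{eq:forward}) together with the empirical measure on $\D$ has a covariance structure under simultaneous translation: since Gaussian white noise is isotropic, $T_g^{-1}\ve$ has the same law as $\ve$, so that if $\vx$ follows $p_{\D}$ then the pair $(T_g \vx, T_g \vy')$ — where $\vy' = A\vx + T_g^{-1}\ve$ — relates the conditional structure at $\vy$ and at $T_g \vy$. Concretely I would show that the optimal equivariant $\phi$ can be obtained from \emph{any} measurable minimizer $\phi_0$ (namely the unconstrained conditional mean composed appropriately) via the group average $\phi(z) = \frac{1}{|\T|}\sum_{g \in \T} T_g^{-1} \phi_0(T_g z)$, and that this average is genuinely equivariant and does not increase the $L^2$ risk — this is the standard fact that averaging a risk-minimizer over a symmetry group of the loss yields a symmetric minimizer (using convexity of $\|\cdot\|^2$ and Jensen). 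This reduces the problem to: take the unconstrained closed form from Proposition~\ref{prop:expression_MMSE}, but now evaluated with the data-augmented weights, and symmetrize.

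Next I would plug in the explicit unconstrained formula. The unconstrained MMSE for the empirical measure on the augmented set $\T(\D)$ is, by Proposition~\ref{prop:expression_MMSE}, $\sum_{x \in \D, g \in \T} T_g x \cdot \tilde{w}(T_g x \vert y)$ with $\tilde w(T_g x \vert y) \propto \Normal{By; BA T_g x, \sigma^2 BB^\top}$. To reach the stated form one rewrites $A T_g x$: if $A$ commutes with translations this is immediate, but in general one instead manipulates the Gaussian density using $\Normal{By; BAT_g x, \sigma^2 BB^\top}$ and the equivariance being imposed on $\phi$ (not on $A$). Here I would use the reconstruction-equivariance identity~(\ref{eq:reconstruction_equivariance}): applying $T_g^{-1}$ inside, the relevant likelihood becomes $\Normal{T_g^{-1} By; BA x, \sigma^2 BB^\top}$ because $T_g^{-1} B T_g$ acting on the Gaussian with isotropic-in-$g$ structure leaves the covariance form invariant (the $BB^\top$ term; more care is needed if $B$ itself is not translation-equivariant, in which case $T_g^{-1}By$ should be read literally as the transformed measurement). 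Matching terms then gives $w_g(x\vert y) \propto \Normal{T_g^{-1} By; BAx, \sigma^2 BB^\top}$ with the normalization $\sum_{x,g} w_g(x\vert y) = 1$ inherited from Proposition~\ref{prop:expression_MMSE}.

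The main obstacle I anticipate is the bookkeeping around whether $A$ and $B$ commute with the translation action, since the theorem is stated for \emph{arbitrary} linear $A$ and general pre-inverse $B$. The clean derivation needs the symmetrization to interact correctly with the fixed operators: the equivariance is demanded of $\phi$, but the data term involves $B A T_g x$, so one must be careful that the group average of the projected posterior mean really lands in $\mathcal{X}_{\T}$ and that the weights collapse to the claimed one-parameter-per-$g$ form rather than something depending on $g$ through $A$ in a more complicated way. I expect the resolution is that the formula~(\ref{eq:formula_equivariant_mmse}) is correct precisely because the \emph{summand} $T_g x$ ranges over the augmented dataset and the weight $w_g(x\vert y)$ absorbs all the $g$-dependence through $T_g^{-1} By$; verifying that this is exactly the orthogonal projection (and not merely an equivariant estimator) is the step that requires the projection lemma plus a uniqueness argument, and that is where I would spend the most care.
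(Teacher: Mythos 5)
There is a genuine gap. Your central reduction --- obtain the equivariant optimum by Reynolds-averaging the unconstrained minimizer, $\bar\phi(z)=\frac{1}{|\T|}\sum_g T_g^{-1}\phi_0(T_g z)$, justified by ``averaging a risk-minimizer over a symmetry group of the loss'' --- does not apply here, because $\T$ acting by conjugation is \emph{not} a symmetry group of this loss: the empirical measure $p_\D$ is not translation-invariant, and $A$, $B$ are arbitrary, so the joint law of $(T_g B\vy,\,T_g\vx)$ differs from that of $(B\vy,\vx)$. Even setting that aside, the Reynolds average of the unconstrained MMSE is a \emph{sum of separately normalized ratios}, $\frac{1}{|\T|}\sum_g \bigl(\sum_x T_g x\, q(T_g^{-1}v,x)\bigr)/\bigl(\sum_x q(T_g^{-1}v,x)\bigr)$, whereas the claimed formula is a single ratio with one joint normalization over all pairs $(x,g)$. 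These two expressions differ whenever the per-$g$ denominators $\sum_x q(T_g^{-1}v,x)$ depend on $g$, which is the generic situation; so the averaging route produces the wrong estimator, not merely an unjustified one. Your second step --- rewriting $\Normal{By;BAT_gx,\sigma^2BB^\top}$ as $\Normal{T_g^{-1}By;BAx,\sigma^2BB^\top}$ --- likewise requires $A$ and $B$ to commute with $T_g$ and $T_gBB^\top T_g^\top=BB^\top$; the paper's Corollary~\ref{cor:equivariant_mmse_properties} shows this identification with the data-augmented MMSE holds \emph{only} for invertible circular convolutions, so it cannot underpin a proof of the theorem for general $A,B$.

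The paper's proof avoids both issues by working with the first-order optimality condition (\cref{prop:supp_optimality_condition}) rather than with the estimator itself: it writes down the candidate $\phi^\star$ as the jointly normalized ratio, checks admissibility (numerator $\T$-equivariant, denominator $\T$-invariant), and then verifies $\Mean{\inner{\varphi(B\vy),\phi^\star(B\vy)-\vx}}=0$ for every equivariant $\varphi$ by symmetrizing the \emph{linear} functional over $g$ --- inserting $\varphi=T_g\circ\varphi\circ T_g^{-1}$ inside the integral and changing variables $T_g v\to v$. The symmetrization there is harmless because the sum over $g$ commutes with the integral of a linear expression; it is precisely the nonlinearity of the ratio that makes your averaging of the estimator fail. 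If you want to salvage your outline, replace the Reynolds average of $\phi_0$ by this verification of the orthogonality condition against the equivariant test functions.
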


\begin{remark}
    This result holds true for any group \(\G\) of orthogonal transformations, not just translations.
\end{remark}
The normalization constant of the weights \(w_g(x \vert y)\) ensures that they sum to \(1\) over all \(x \in \D\) and \(g \in \T\), we ignore it here
for brevity.
The E-MMSE estimator \(\xtrans\) is a weighted average of the augmented dataset \(\T(\D)\), which is similar to \(\xmmseaug\) in~\cref{def:data_augmented_mmse}.
In particular, for denoising (\(A=B=\Id\)), they coincide.
However, depending on the forward operator \(A\) and the pre-inverse \(B\), it can differ significantly from \(\xmmseaug\).
Some properties of the E-MMSE \(\xtrans\) are presented in~\cref{cor:equivariant_mmse_properties} and illustrated in~\cref{fig:illustration_translation}.
\begin{corollary}\label{cor:equivariant_mmse_properties}
    The E-MMSE estimator \(\xtrans\) in~\cref{thm:equivariant_mmse} satisfies the following properties:
    \begin{itemize} %[leftmargin=*,itemsep=.2em,topsep=0.pt]
        \item The E-MMSE estimator \(\xtrans\) \emph{memorizes} the augmented dataset: reconstructed images live in \(\mathrm{conv}(\T(\D))\).
        \item Data-augmentation and architecture equivariance are identical (\(\xtrans = \xmmseaug\)): if \(A\) and \(B\) are circular convolution,
            with \(B\) invertible, then the E-MMSE is reconstruction equivariant (satisfies~(\ref{eq:reconstruction_equivariance})). Moreover,
            physics-agnostic and physics-aware solvers are identical.
        \item This is near equivalent: for invertible \(A, B\), if \(w_g(x|y) = w(T_g x |y)\) then \(A\) and \(B\) are circular convolutions.
    \end{itemize}
\end{corollary}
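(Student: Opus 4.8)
The plan is to prove \cref{cor:equivariant_mmse_properties} one bullet at a time, leveraging the closed-form \eqref{eq:formula_equivariant_mmse} from \cref{thm:equivariant_mmse} and the unconstrained closed-form \eqref{eq:empirical_mmse} from \cref{prop:expression_MMSE}.

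\textbf{Memorization.} First I would observe that the weights $w_g(x\vert y)$ are, by construction, nonnegative and normalized: $\sum_{x\in\D, g\in\T} w_g(x\vert y) = 1$. Hence $\xtrans(y) = \sum_{x\in\D,g\in\T} T_g x \cdot w_g(x\vert y)$ is a convex combination of the elements $\{T_g x : x\in\D, g\in\T\} = \T(\D)$, so $\xtrans(y)\in\mathrm{conv}(\T(\D))$ for every $y$. This is immediate and mirrors the argument already given for $\xmmse$ in the ``Memorization of the empirical MMSE'' paragraph.

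\textbf{Equality with the data-augmented MMSE.} The key claim is that when $A$ and $B$ are circular convolutions with $B$ invertible, $\xtrans = \xmmseaug$. By \cref{def:data_augmented_mmse} and \cref{prop:expression_MMSE}, $\xmmseaug(y) = \sum_{z\in\T(\D)} z\cdot w(z\vert y)$ with $w(z\vert y)\propto \Normal{By; BAz, \sigma^2 BB^\top}$; writing $z = T_g x$ this is $\sum_{x\in\D,g\in\T} T_g x\cdot w(T_g x\vert y)$. Comparing with \eqref{eq:formula_equivariant_mmse}, it suffices to show the weights agree, i.e. $w_g(x\vert y) = w(T_g x\vert y)$, which reduces to showing $\Normal{T_g^{-1}By; BAx, \sigma^2 BB^\top} = \Normal{By; BAT_g x, \sigma^2 BB^\top}$. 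Since $A$, $B$ and all $T_g$ are circular convolutions on the same grid, they commute; thus $BAT_g x = T_g BA x$, and $T_g$ is orthogonal, so $\norm{T_g^{-1}(By - T_g BAx)}^2 = \norm{By - T_g BAx}^2$ after shifting. The remaining point is the covariance: one must check that the (possibly degenerate) Gaussian density is invariant under the orthogonal change of variables $T_g$ when the covariance $\sigma^2 BB^\top$ is also $T_g$-equivariant, i.e. $T_g^{-1}BB^\top T_g = BB^\top$, which again follows from commutativity of circular convolutions. Here I would invoke the degenerate-Gaussian conventions of \cref{def:supp_degenerate_gaussian} to make the ``density'' statement rigorous — the support and the quadratic form on it both transform consistently. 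Once the weights coincide, reconstruction equivariance \eqref{eq:reconstruction_equivariance} follows by substituting $y\mapsto AT_g\bar x + e$ into the formula and using commutativity again; and physics-agnostic ($B=\Id$) versus physics-aware ($B$ an approximate inverse) give the same estimator because, as noted after \cref{prop:expression_MMSE}, the MMSE depends on $B$ only through $\Im{B^\top}$, and an invertible circular $B$ has $\Im{B^\top} = \R^N$.

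\textbf{Necessity (near equivalence).} For the last bullet I would argue the converse: if $w_g(x\vert y) = w(T_g x\vert y)$ holds identically (for all $\D$, $y$, $g$), then comparing the quadratic forms forces $BAT_g = T_g BA$ as operators and $T_g^{-1}BB^\top T_g = BB^\top$ for all $g\in\T$; an operator commuting with all cyclic translations is a circular convolution, and one then pushes this back through to conclude $A$ and $B$ must themselves be invertible circular convolutions. The main obstacle I anticipate is precisely this converse direction — establishing that the identity of \emph{weights} (rather than of the estimators, which could coincide for accidental reasons on a particular $\D$) genuinely forces the commutation relations, and handling the degenerate case where $BB^\top$ is not full rank so that ``equality of densities'' must be interpreted carefully on the affine support. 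I would likely state the necessity claim for the weights (as the corollary does) rather than for the estimators, sidestepping the subtlety, and defer the full degenerate-Gaussian bookkeeping to the appendix referenced in \cref{sec:supp_equivariant_mmse}.
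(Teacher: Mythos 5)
Your treatment of the first two bullets is correct and follows essentially the same route as the paper: memorization is the immediate convex-combination observation, and for sufficiency the invertibility of $B$ makes $\sigma^2 BB^\top$ nondegenerate, so the weight comparison reduces to $\norm{B^{-1}T_g^{-1}By - Ax}^2$ versus $\norm{y - AT_gx}^2$, which coincide once $A$ and $B$ commute with every $T_g$ and $T_g$ is orthogonal. Your detour through degenerate-Gaussian conventions is unnecessary here (since $B$ is invertible there is nothing degenerate), and your appeal to ``the MMSE depends on $B$ only through $\Im{B^\top}$'' is a slightly different but valid way to get $B$-independence of $\xmmseaug$; the paper instead just reads it off the simplified weights. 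Reconstruction equivariance is handled in the paper by a separate lemma for $\xmmseaug$, which is exactly the substitution $y \mapsto AT_g\bar{x}+e$ you describe.

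The genuine gap is in the necessity direction, precisely where you flagged it. Matching the two exponents up to an additive function of $y$ yields, from the term bilinear in $(x,y)$, the identity $A^\top B^{-1}T_g^{-1}B = T_g^{-1}A^\top$ (together with $T_g^{-1}A^\top A T_g = A^\top A$ from the purely quadratic-in-$x$ term) --- \emph{not} the relations $BAT_g = T_gBA$ and $T_g^{-1}BB^\top T_g = BB^\top$ that you assert. Moreover, even those relations would only say that $BA$ and $BB^\top$ are circular convolutions, which does not force $A$ and $B$ to be convolutions individually: take $B$ a non-translation permutation and $A = B^\top C$ with $C$ a convolution, so that $BA = C$ and $BB^\top = \Id$ while neither factor commutes with all $T_g$. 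The step you defer (``pushes this back through'') is therefore the entire content of the claim. The paper closes it by exploiting that the right-hand side $w(T_gx\vert y)\propto\exp(-\norm{y-AT_gx}^2/(2\sigma^2))$ is independent of the invertible $B$, so the weight identity is required for \emph{all} such $B$: specializing to $B=\Id$ first gives $AT_g = T_gA$; substituting this back into the bilinear identity for general $B$ and cancelling $A^\top$ (the paper assumes $A$ invertible here) gives $B^{-1}T_g^{-1}B = T_g^{-1}$, i.e.\ $B$ commutes with every translation. Without some version of this ``vary $B$'' (or an equivalent decoupling of $A$ from $B$), your sketch cannot conclude.
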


% ------------------------------------------------
% \input{tex_figure/illustration_equivariace.tex}
\begin{figure}[ht]
    \centering
    % \vskip -0.1in
    \includegraphics[trim = 5mm 0 0 0, clip, width=\columnwidth]{./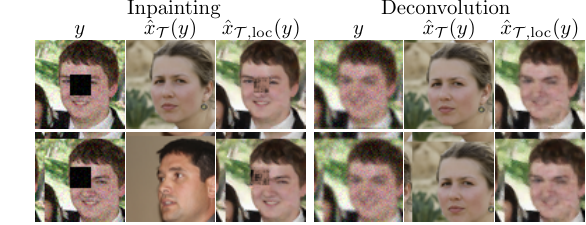}
    % \vskip -0.1in
    \caption{
        Architectural equivariance does not always guarantee reconstruction equivariance~(\ref{eq:reconstruction_equivariance}).
        Input \(y\) in 2nd row is shifted. The E-MMSE estimator \(\xtrans\) is reconstruction equivariant for deconvolution but not inpainting.
        The LE-MMSE estimator \(\xtransloc\) is reconstruction equivariant for deconvolution and shows reduced sensitivity to shifts for inpainting.
    }\label{fig:illustration_translation}
    % \vskip -0.1in
\end{figure}
% ------------------------------------------------

% %%%%%%%%%%%%%%%%%%%%%%%%%%%%%%%%%%%%%%%%%%%%%%%%%%%%%%%%%%%%%%%%%%%%%%%%%%%%%%%%%%%%%%%%%%%%%%%%%%
\subsection{Locality and Translation Equivariance}
We now analyze the effect of local receptive fields. See~\cref{sec:supp_local_trans_mmse} for proofs.
\begin{theorem}[Closed-form of LE-MMSE]\label{theorem:local_trans_estimator}
    Suppose that the \(N\) matrices \(Q_{n} \eqdef \Pi_n B \in \R^{P \times M}\) have the \emph{same rank} \(r>0\) for any \(n\)\footnote{This
    assumption can be relaxed: a general formula by rank stratification is provided in~\cref{prop:supp_local_trans_estimator_general} for completeness.}.
    The LE-MMSE estimator \(\xtransloc\) admits, for any \(y \in \R^M\) the following expression, defined pixel-wise for each pixel \(n'\):
    \vskip -0.1in
    \begin{equation}\label{eq:loc_equiv_formula}
        \xtransloc(y)[n'] = \sum_{x \in \D} \sum_{n = 1}^{N} x[n] \cdot w_{n', n}(x \vert y),
    \end{equation}
    where \(w_{n', n}(x \vert y) \propto \Normal{Q_{n'} y; Q_n A x, \sigma^2 Q_n Q_n^\top}\).
    % and the normalization constant is such that $\sum_{x \in \D} \sum_{n = 1}^{N} w_{n', n}(x \vert y) = 1$.
\end{theorem}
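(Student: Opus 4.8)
The plan is to exploit the weight-sharing structure of $\Mtransloc$ to collapse the constrained minimization into a single \emph{scalar} regression problem, solve that problem by a pooled bias--variance argument, and then evaluate the optimal kernel on a patch of $By$.

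\emph{Reduction to a scalar regression.} By \cref{def:transloc_equiv_fn}, a map belongs to $\Mtransloc$ iff it has the form $x \mapsto \big(f(\Pi_n x)\big)_{n}$ for a single measurable $f:\R^P\to\R$. Writing $Q_n=\Pi_n B$, so that $\Pi_n(B\vy)=Q_n\vy$, the objective of \cref{def:constrained_mmse} splits as
\begin{equation*}
    \tfrac12\,\Mean{\norm{\phi(B\vy)-\vx}^2}=\tfrac12\sum_{n'=1}^N\Mean{\big(f(Q_{n'}\vy)-\vx[n']\big)^2},
\end{equation*}
so $\xtransloc(y)[n']=f^\star(Q_{n'}y)$, where $f^\star$ minimizes the right-hand side over measurable $f$. (This is exactly the instance of \cref{prop:projection_mmse} in which the subspace $\mathcal X$ is parametrized by the shared kernel $f$.)

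\emph{Solving the pooled regression.} Let $\mu_{n'}$ be the law of $Q_{n'}\vy$ on $\R^P$, let $m_{n'}(u)\eqdef\Mean{\vx[n']\mid Q_{n'}\vy=u}$, and set $\mu\eqdef\sum_{n'=1}^N\mu_{n'}$. Conditioning each term on $Q_{n'}\vy$ gives the bias--variance identity $\Mean{(f(Q_{n'}\vy)-\vx[n'])^2}=\Mean{(f(Q_{n'}\vy)-m_{n'}(Q_{n'}\vy))^2}+\text{const}$; summing over $n'$ and using $\mu_{n'}\ll\mu$,
\begin{equation*}
    F(f)=\int_{\R^P}\sum_{n'=1}^N\big(f(u)-m_{n'}(u)\big)^2\,\tfrac{d\mu_{n'}}{d\mu}(u)\;d\mu(u)+\text{const}.
\end{equation*}
For fixed $u$ the inner sum is minimized over the scalar $f(u)$ by the convex combination $f^\star(u)=\sum_{n'}m_{n'}(u)\,\tfrac{d\mu_{n'}}{d\mu}(u)$ (the weights are nonnegative and sum to $1$ $\mu$-a.e.), and this pointwise minimizer is $\mu$-a.e.\ optimal for $F$.

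\emph{Instantiating the empirical distribution.} Under $\vx\sim p_{\D}$, conditioning on $\vx=x$ yields $Q_{n'}\vy=Q_{n'}Ax+Q_{n'}\ve$ with $Q_{n'}\ve\sim\Normal{0,\sigma^2Q_{n'}Q_{n'}^\top}$; hence $Q_{n'}\vy\mid\{\vx=x\}$ is the (possibly degenerate) Gaussian $\Normal{Q_{n'}Ax,\sigma^2Q_{n'}Q_{n'}^\top}$, supported on the affine subspace $Q_{n'}Ax_0+\Im{Q_{n'}}$ ($x_0\in\D$ arbitrary), which does not depend on $x$. The equal-rank hypothesis makes all of these $r$-dimensional, so their densities — with respect to $r$-dimensional Hausdorff measure on the relevant subspace, extended by $0$ off it, see \cref{def:supp_degenerate_gaussian} — are directly comparable. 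Thus $\mu_{n'}$ has density $q_{n'}(u)=\tfrac1{\lvert\D\rvert}\sum_{x\in\D}\Normal{u;Q_{n'}Ax,\sigma^2Q_{n'}Q_{n'}^\top}$, Bayes' rule with the uniform prior gives $m_{n'}(u)=\big(\sum_{x}x[n']\,\Normal{u;Q_{n'}Ax,\cdot}\big)\big/\big(\sum_{x'}\Normal{u;Q_{n'}Ax',\cdot}\big)$, and $\tfrac{d\mu_{n'}}{d\mu}(u)=q_{n'}(u)/\sum_m q_m(u)$. Substituting, the product $m_{n'}q_{n'}$ simplifies to $\tfrac1{\lvert\D\rvert}\sum_x x[n']\,\Normal{u;Q_{n'}Ax,\cdot}$, the factors $1/\lvert\D\rvert$ cancel, and one gets
\begin{equation*}
    f^\star(u)=\frac{\sum_{n=1}^N\sum_{x\in\D}x[n]\,\Normal{u;\,Q_nAx,\,\sigma^2 Q_nQ_n^\top}}{\sum_{m=1}^N\sum_{x'\in\D}\Normal{u;\,Q_mAx',\,\sigma^2 Q_mQ_m^\top}}.
\end{equation*}
Evaluating at $u=Q_{n'}y$ — where the denominator is strictly positive, since its $m=n'$ terms evaluate a Gaussian density on its own support — and reading off the coefficient of each $x[n]$ yields the weights $w_{n',n}(x\mid y)$ claimed.

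\emph{Main obstacle.} The delicate part is making the density manipulations of the last step rigorous when the supporting subspaces $\Im{Q_n}$ coincide for some indices $n$ (the genuinely patch-recombining regime, e.g.\ a translation-equivariant $B$) yet are transverse — hence the laws $\mu_n$ mutually singular — for others (e.g.\ a masking $B$ in inpainting), so that the identity $\tfrac{d\mu_{n'}}{d\mu}=q_{n'}/\sum_m q_m$ must be read with the convention that off-support Gaussian densities vanish. The equal-rank hypothesis is precisely what keeps every Gaussian a density against a Hausdorff measure of the \emph{same} dimension, so this ratio is dimensionally meaningful; dropping it forces one to stratify the pixels by $\rank{Q_n}$, which is the content of \cref{prop:supp_local_trans_estimator_general}. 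A secondary technicality is the closedness of $\{(f(Q_{n'}\vy))_{n'}: f \text{ measurable}\}$ in $\bigoplus_{n'}L^2$, needed for the projection in \cref{prop:projection_mmse} to exist, together with the passage from the $\mu$-a.e.\ minimizer to its pointwise value at $Q_{n'}y$ — legitimate because $Q_{n'}y\in\supp{\mu_{n'}}$, where $f^\star$ coincides with the explicit continuous ratio displayed above.
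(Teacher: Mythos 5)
Your argument is correct, and it reaches the formula by a genuinely different route than the paper. The paper's proof (\cref{prop:supp_local_trans_estimator}) is a guess-and-verify argument: it posits the candidate $\phi^\star_n = f_{\phi^\star}\circ\Pi_n$ with the explicit ratio, checks admissibility, and then verifies the first-order orthogonality condition of \cref{prop:supp_optimality_condition} by pushing the Gaussian through $Q_n$ via \cref{lemma:supp_integration_linear_subspace} and observing that the pooled integrand cancels. You instead \emph{derive} the minimizer constructively: the weight-sharing parametrization of $\Mtransloc$ by a single kernel $f$ collapses the problem to one scalar regression, the bias--variance decomposition reduces it to minimizing $\sum_{n'}(f(u)-m_{n'}(u))^2\,\tfrac{d\mu_{n'}}{d\mu}(u)$ pointwise in $u$, and the optimal convex combination of local posterior means is exactly the claimed ratio. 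The two proofs share the same computational core --- the degenerate-Gaussian pushforward and the pooling over $(n,x)$ --- and invoke the equal-rank hypothesis for the same reason (so that all the mixture components are densities against a Hausdorff measure of a common dimension $r$, making the ratio $q_{n'}/\sum_m q_m$ meaningful; without it one stratifies by rank as in \cref{prop:supp_local_trans_estimator_general}). What your route buys is that the formula emerges rather than being verified, and it makes transparent \emph{why} the estimator is a density-weighted average of per-pixel posterior means; what the paper's route buys is that it sidesteps Radon--Nikodym manipulations across mutually singular supports entirely, since the orthogonality check only ever integrates each Gaussian against its own support. Your closing remarks on the technical points (the a.e.\ versus pointwise issue, positivity of the denominator at $u=Q_{n'}y$, and the irrelevance of closedness once an explicit minimizer is exhibited) are accurate and correctly identify where the equal-rank assumption does its work.
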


The value at pixel \(n'\) of the LE-MMSE estimator is a weighted average of \emph{all} the pixels in the training images.
We show that the resulting estimator can be interpreted as a \emph{patchwork} of training patches in
\cref{fig:supp_patch_index_ffhq,fig:supp_patch_index_FashionMNIST}.
Interestingly, the analytic formula of the LE-MMSE estimator can be seen as a \emph{data-driven} counterpart of the classical non-local means (NLM) estimator~\citep{buades2005non}, where the patches are from a training set instead of the input image. 

The recombination in~\eqref{eq:loc_equiv_formula} can produce images outside \(\mathrm{conv}(\D)\) or \(\mathrm{conv}(\T(\D))\) (contrary to the MMSE or E-MMSE);
see~\cref{fig:memorization_illustration} and~\cref{sec:experiments}.
The weights \(w_{n', n}(x \vert y)\) can be seen as the posterior probability of the pixel \(x[n]\) given the patch \((B y)[\omega_{n'}] = Q_{n'} y\).

Recall that \(y = A \bar{x} + e\), with \(\bar{x}\) the signal to recover and
let \(\Delta_{n',n}(\bar x, x) \!\eqdef\! (B A \bar x) [\omega_{n'}] \!-\! (B A x) [\omega_n]\),
the weights can be rewritten as  \(w_{n', n}(x \vert y) \!\propto\! \exp\left( - \frac{\eta^2}{2\sigma^2}\right)\) with:
\begin{equation}\label{eq:weights_signal_vs_noise}
    \eta \eqdef \norm{Q_n^+ \Delta_{n',n}(\bar x, x) + Q_n^+ Q_{n'} e}.
\end{equation}
Hence, the weights measure the similarity between \emph{local patches} of measured-then-reconstructed images with the metric \(Q_n^+\). It is
perturbed by the noise term \(Q_n^+ Q_{n'} e\), which is responsible for the estimator's variance.
Some properties of the LE-MMSE \(\xtransloc\) are given in~\cref{cor:local_trans_estimator_properties}.
See~\cref{sec:supp_local_trans_mmse} for proofs and~\cref{sec:supp_mass_concentration} for further discussions and illustrations.
\begin{corollary}\label{cor:local_trans_estimator_properties}
    The LE-MMSE estimator \(\xtransloc\) in~\cref{theorem:local_trans_estimator} satisfies the following properties:
    \begin{itemize} %[leftmargin=*,itemsep=.2em,topsep=0.pt]
        \item The LE-MMSE estimator \(\xtransloc\) is a \emph{patchwork} (see~\cref{sec:supp_patch_works}) of training patches. As \(\sigma\! \to \!
            0\), each pixel is set to the central pixel of the best matching patch from \(\D\).
        \item It is not a posterior mean: if \(A\) is the identity (denoising), for a generic database \(\mathcal{D}\), there exists no prior
            distribution of \(\vx\) such that \(\xtransloc(y)\!=\!\Mean{\vx \vert \vy \!=\! y}\).
    \end{itemize}
\end{corollary}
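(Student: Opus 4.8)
The plan is to prove the two bullets separately. For the \emph{patchwork} statement, I would start from the closed form~(\ref{eq:loc_equiv_formula}): each output pixel $n'$ is $\xtransloc(y)[n'] = \sum_{x\in\D}\sum_{n} x[n]\, w_{n',n}(x|y)$, with $w_{n',n}(x|y) \propto \Normal{Q_{n'}y; Q_n Ax, \sigma^2 Q_n Q_n^\top}$. I would observe that the weights $w_{n',n}(x|y)$ are nonnegative and, by the normalization stated after the theorem, sum to $1$ over the pairs $(x,n)$. Hence each $\xtransloc(y)[n']$ is a convex combination of central pixels $x[n]$ of training patches; assembling over all $n'$ shows the reconstructed image is a patchwork in the sense of~\cref{sec:supp_patch_works}. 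For the $\sigma\to 0$ limit I would use the rewriting~(\ref{eq:weights_signal_vs_noise}): $w_{n',n}(x|y) \propto \exp(-\eta^2/2\sigma^2)$ where $\eta = \norm{Q_n^+\Delta_{n',n}(\bar x,x) + Q_n^+ Q_{n'} e}$. As $\sigma\to 0$ the softmax concentrates on the pair(s) $(x,n)$ minimizing $\eta$ (ties averaged), so pixel $n'$ is assigned the central pixel $x[n]$ of the patch in $\D$ whose measured-then-reconstructed version best matches $(By)[\omega_{n'}]$ in the $Q_n^+$ metric — exactly the claimed behavior. A small technical point: I should handle degenerate covariances $\sigma^2 Q_n Q_n^\top$ carefully, invoking~\cref{def:supp_degenerate_gaussian}, and note that the $\exp(-\eta^2/2\sigma^2)$ form already accounts for the projection $Q_n Q_n^+$.

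For the second bullet (not a posterior mean), I would argue by contradiction in the denoising case $A=\Id$, so $Q_n = \Pi_n B$ and, if $B=\Id$, $Q_n=\Pi_n$. Suppose some prior $p(\vx)$ makes $\xtransloc(y) = \Mean{\vx\mid \vy=y}$ for all $y$. The key structural obstruction is that $\xtransloc$ is \emph{local and translation equivariant} by construction (each output pixel depends only on one input patch via a fixed $f$), whereas a genuine posterior mean $y\mapsto \Mean{\vx\mid \vy=y}$ need not be — and more importantly, for a \emph{generic} finite dataset $\D$ the induced map cannot be a conditional expectation because of a consistency constraint: a posterior mean must be the $L^2$ projection of $\vx$ onto \emph{all} measurable functions of $\vy$, and in particular the pixelwise maps $f$ must be mutually consistent as marginals of a single joint law. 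I would exhibit a genericity argument: pick $\D$ (e.g. two or three images) for which the function $f$ extracted from~(\ref{eq:loc_equiv_formula}) is not affine, yet show that any valid posterior mean for a denoising model with Gaussian noise, restricted to a single pixel conditioned on a single patch, must itself be a conditional mean of a one-pixel marginal given a patch observation, and then derive a contradiction by comparing $\sigma\to 0$ and $\sigma\to\infty$ limits — e.g. the patchwork limit picks a training pixel (a point in $\D$'s pixel values) whereas the large-$\sigma$ limit of $\xtransloc$ tends to a fixed patch-independent average that generically disagrees with the prior mean one would need. Concretely, I would compute both limiting behaviors and show that no single prior reproduces both unless $\D$ satisfies non-generic algebraic relations.

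The main obstacle is the second bullet: making ``generic $\D$'' precise and producing a clean contradiction. The cleanest route is probably to reduce to a low-dimensional toy case — take $N$ small, $P=N$ or a single relevant patch, and exhibit an explicit $\D$ where $\xtransloc$ violates a necessary property of posterior means (for instance, that $\Mean{\vx\mid\vy}$ composed with the forward model satisfies a fixed-point / tower-property identity, or that its Jacobian in $y$ is a symmetric positive semidefinite matrix scaled by $1/\sigma^2$ — true for denoising posterior means by Tweedie/Stein, and violatable by the local equivariant structure). I would lean on the Tweedie identity: for $A=B=\Id$, any true posterior mean satisfies $\Mean{\vx\mid\vy=y} = y + \sigma^2 \nabla_y \log p_{\vy}(y)$, so its Jacobian $D\xmmse(y) = \Id + \sigma^2 \nabla^2 \log p_{\vy}(y)$ is symmetric. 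By contrast, the LE-MMSE Jacobian, because each output pixel is a function of only a local patch, has a banded/non-symmetric structure; exhibiting one $\D$ where $D\xtransloc(y)$ is not symmetric at some $y$ completes the proof, and a dimension count shows this holds off a measure-zero / proper Zariski-closed set of datasets, i.e. generically. I would make this the spine of the argument and relegate the explicit small example to the appendix referenced by~\cref{sec:supp_local_trans_mmse}.
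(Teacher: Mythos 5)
Your plan follows essentially the same route as the paper for both bullets: the first via concentration of the Gaussian softmax weights on the best-matching patch as $\sigma\to 0$ (the paper writes this out for $B=\Id$; your version with the $Q_n^+$ metric is the same argument), and the second via Tweedie's identity, the symmetry of the posterior-mean Jacobian, and a genericity argument. One caution on the second bullet: locality does \emph{not} by itself give a non-symmetric Jacobian. Since the patches $\omega_n$ are centered squares, $m\in\omega_n$ if and only if $n\in\omega_m$, so the sparsity pattern of the Jacobian of $\xtransloc$ is symmetric, and the asymmetry must be extracted from the actual values of the cross-partials rather than from the ``banded structure'' you invoke. The paper does exactly this: it computes $\partial f_n/\partial y_m$ explicitly, exhibits a dataset containing a single image with one nonzero pixel for which $\partial f_2/\partial y_3$ and $\partial f_3/\partial y_2$ are distinct functions of $y$ (distinguished by differentiating once more in a coordinate lying in $\omega_3$ but not in $\omega_2$), and then uses real-analyticity of $\xtransloc$ for $\sigma>0$ together with the zero-measure-of-zeros property of nontrivial real-analytic functions and Fubini to upgrade this single counterexample to ``almost every dataset.'' Your ``dimension count / Zariski-closed set'' remark is the right instinct, but it needs this analyticity step (and the explicit counterexample) to be made precise; as stated, your plan leaves the decisive computation to be done.
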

\subsection{Physics-agnostic and physics-aware estimators}\label{subsec:agnostic_vs_aware}

The expectation of \(\eta^2\) in~(\ref{eq:weights_signal_vs_noise}) is given by
\begin{equation*}
    \mathbb{E}_{\ve}\left[\eta^2\right] = \norm{Q_n^+ \Delta_{n',n}(\bar x, x)}^2 + \sigma^2 \cdot \Tr \left( \mathrm{Cov}_{n',n}\right)
\end{equation*}
with \(\mathrm{Cov}_{n',n} = Q_n^+ Q_{n'} Q_{n'}^\top Q_n^{+\top}\).
This expression reveals that the pre-inverse \(B\) plays two distinct roles:

\begin{itemize} %[leftmargin=*,itemsep=.2em,topsep=0.pt]
    \item \emph{Signal discrimination}: the term \(Q_n^+ \Delta_{n',n}(\bar x, x)\) measures the amplification/reduction of difference between
        dissimilar/similar patches for the pre-inverse \(B\).
        A good choice of \(B\) should improve this discrimination.
    \item \ \emph{Noise robustness}: the second term \(\Tr \left( \mathrm{Cov}_{n',n}\right)\) measures noise amplification/reduction for \(B\). A
        good choice should reduce this amplification.
\end{itemize}

Ideally, one would like to choose \(B\) to optimize both criteria, but unfortunately they can be conflicting: preserving the signal \(BAx\approx x\)
may amplify the noise, in relation to the classical bias-variance decomposition. Most importantly this effect is operator dependent, as we now discuss.

For inpainting, choosing a physics-aware estimator with \(B=A^+\) is an effective way to reduce the noise sensitivity while keeping signal's
information. Indeed, with this choice, the noise within the masked region is eliminated, while the signal is preserved in the unmasked region.
For deconvolution, however, the situation is different. Choosing \(B=A^+\) can result in significant noise amplification (\ie \(\Tr \left(
\mathrm{Cov}_{n',n}\right)\) is large), which is not counter-balanced by a better signal's discrimination. In this setting, a regularized inverse,
balancing both aspects should likely be preferred.
This effect is illustrated in~\cref{fig:aware_vs_agnostic}.
When designing reconstruction algorithms, this trade-off between data-consistency and noise amplification must be carefully considered, in relation
to the inverse problem at hand.

\begin{figure}[ht]
    \centering
    \includegraphics[trim=8mm 0 0 0, clip,width=\columnwidth]{./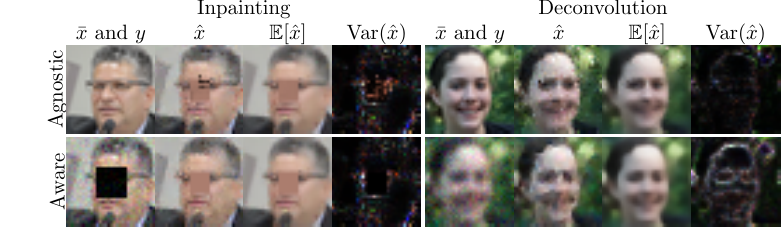}
    % \vskip -0.1in
    \caption{
        Physics-aware (\(B\!=\!A^+\), bottom) has lower variance for inpainting (left), while physics-agnostic (\(B\!=\!\Id\), top) has lower
        variance for deconvolution (right).
        Mean and pixel-wise variance are computed w.r.t \(50\) noise realizations.
    }
    \label{fig:aware_vs_agnostic}
    % \vskip -0.15in
\end{figure}

% ####################################################################################
% ------------------------- EXPERIMENTS --------------------------------
% ####################################################################################

\section{Numerical Experiments}\label{sec:experiments}
While the MMSE and E-MMSE estimators are interesting from a theoretical perspective, the LE-MMSE estimator is more relevant to practical neural
network architectures used in imaging inverse problems.
We validate our theoretical findings about the LE-MMSE on three representative inverse problems: denoising, inpainting, and deconvolution.

\subsection{Experimental setup}
We implement three different local and translation equivariant neural network architectures: UNet2D~\cite{ronneberger2015u}, ResNet~\cite{he2016deep}
and PatchMLP (an MLP acting on patches). All architectures are modified to be \emph{strictly} local and translation equivariant.

Implementing the analytical formula in~\cref{theorem:local_trans_estimator} requires computing a huge amount of pairwise distances, which is
computationally demanding. We therefore restrict our experiments to \(32\times 32\) color images and datasets of \(10^4\) images. To accumulate
Gaussian weights robustly and stably across batches, we use a careful online log-sum-exp implementation of the weighted averages. % (numerator and denominator).

For inpainting we use a square mask of size \(15 \times 15\) at the center. For deconvolution, we use an isotropic Gaussian kernel of standard
deviation of \(1.0\).
The noise level \(\sigma\) is varying uniformly between \(0\) and \(1.0\) during training.
Full details are described in~\cref{sec:supp_numerical} and the implementation is available at \url{\gitrepo}.

\subsection{Analytical formula and neural networks}\label{sec:neural_vs_analytical}
\paragraph{Case-by-case neural outputs prediction}
We verify that trained networks approximate the LE-MMSE estimator, by comparing their outputs to the formula in~\cref{theorem:local_trans_estimator}.
The PSNR between the trained UNet2D, the formula and the ground truth are reported in~\cref{fig:neural_vs_analytical_unet2d_patch_5}.
The reconstruction quality (orange and blue curves) degrades as noise level increases, which is expected as noise makes the problem harder.
However, the PSNR between neural networks and the analytical formula (green curves) remains high (\(\gtrsim 25\) dB) across all noise levels,
indicating a strong alignment between the two.
Similar behavior is observed in~\cref{tab:neural_vs_analytical_psnr} for other architectures and datasets.
For completeness, we also report the SSIM and LPIPS metrics in~\cref{fig:supp_ssim_lpips}.
\begin{figure}[ht]
    \centering
    \def\base{./images/neural_vs_analytical/localequivunet2dcondmodel_patch_5/plots}
    % \vskip -0.05in
    \includegraphics[width=\linewidth]{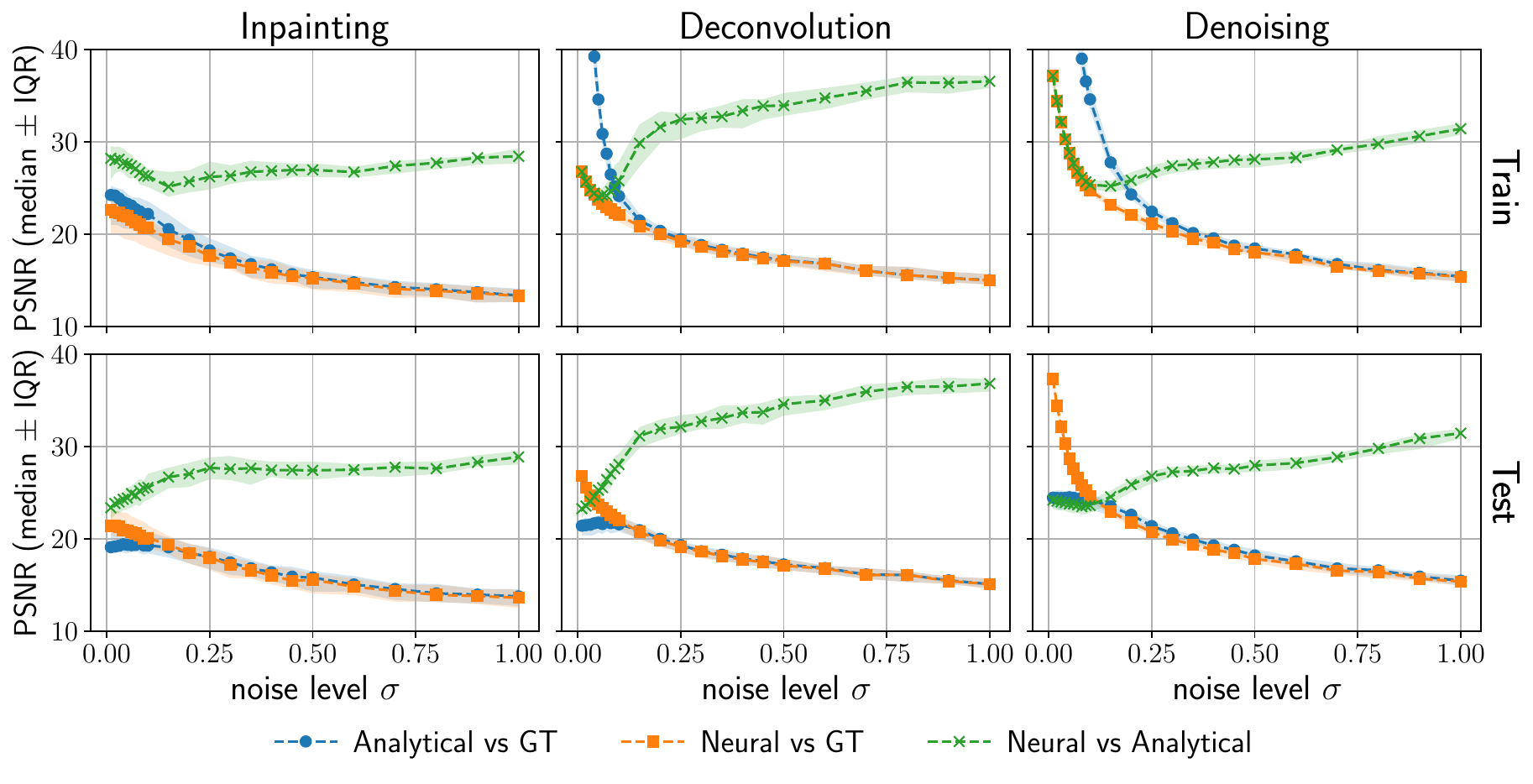}
    % \vskip -0.1in
    \caption{The green curves reveals a strong agreement (PSNR \(\gtrsim\!25\) dB) between the trained UNet2D and the analytical formula for all
        inverse problems.
        Median and interquartile range (IQR) using \(50\) images per \(\sigma\), \(P = 5\times 5\) and \(B\!=\!\Id\).
        See~\cref{fig:neural_vs_analytical_resnet_patch_11,fig:neural_vs_analytical_patchmlp_patch_11} for other architectures.
    }
    \label{fig:neural_vs_analytical_unet2d_patch_5}
    % \vskip -0.1in
\end{figure}%
We provide extensive numerical results on various settings (architectures, datasets, tasks, physics-agnostic/aware, patch sizes)
in~\cref{sec:supp_neural_vs_analytical}.
These results confirm that \emph{trained neural networks closely approximate the analytical LE-MMSE estimator}.
It is remarkable that different architectures consistently yield a similar output, as shown
in~\cref{fig:teaser,fig:qualitative_neural_vs_analytical_patch_5,fig:qualitative_neural_vs_analytical_patch_7,fig:qualitative_neural_vs_analytical_patch_9,fig:qualitative_neural_vs_analytical_patch_11}.

\begin{table*}[ht]
    \centering
    \caption{
        \cref{theorem:local_trans_estimator} is verified across architectures, tasks, and datasets.
        We show the median of the PSNR between neural networks and the analytical formula with \(P = 5 \times 5\) and \(B\!=\!\Id\).
        The PSNR on FashionMNIST is consistently higher (\(2\sim 3\) dB) than on FFHQ and CIFAR10, which we attribute to the lower complexity of
        FashionMNIST images.
        A lower PSNR on the test set in low-noise regimes is explained by the low-density regions.
    }
    \label{tab:neural_vs_analytical_psnr}
    % \vspace*{-0.1cm}
    \includegraphics[width=.85\textwidth]{./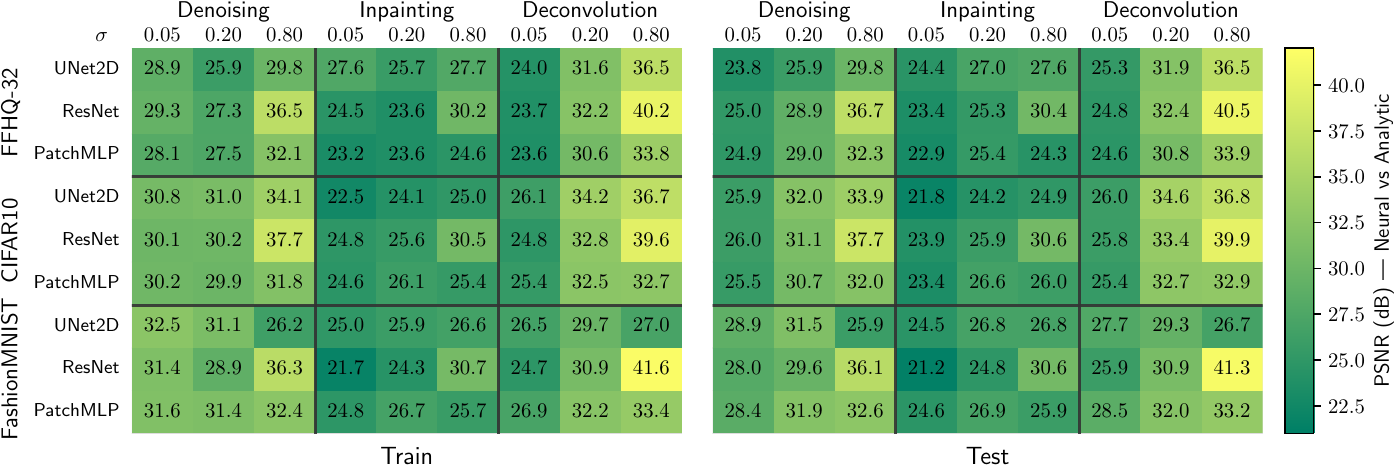}
    % \vskip -0.1in
\end{table*}

\paragraph{Low-density regions}\label{sec:neural_generalization}
A noticeable train-test gap remains in~\cref{fig:neural_vs_analytical_unet2d_patch_5,tab:neural_vs_analytical_psnr}, most pronounced at low noise level.
We attribute this to the \emph{measurement distribution} \(p(y)\) induced by the empirical distribution.
It is a Gaussian mixture, centered at the training measurements \(Ax\): \(p(y) = \frac{1}{|\D|} \sum_{x\in \D} \Normal{y; Ax, \sigma^2 \Id_M}\).
Its density is high near the centers \(A x, x \in \D\), or for large noise levels \(\sigma\), creating overlap between Gaussians. It becomes
negligible far from the centers in low noise regimes.
\begin{figure}[ht!]
    \centering
    % \vskip -0.1in
    % \includegraphics[width=\columnwidth]{./images/neural_generalization/localequivunet2dcondmodel_patch_5/plots/multiop_FFHQ_images32x32_top5000_sigma_0.1_subset_10000_neg_log_density_psnr_identity_inform.pdf}
    \includegraphics[width=\columnwidth]{./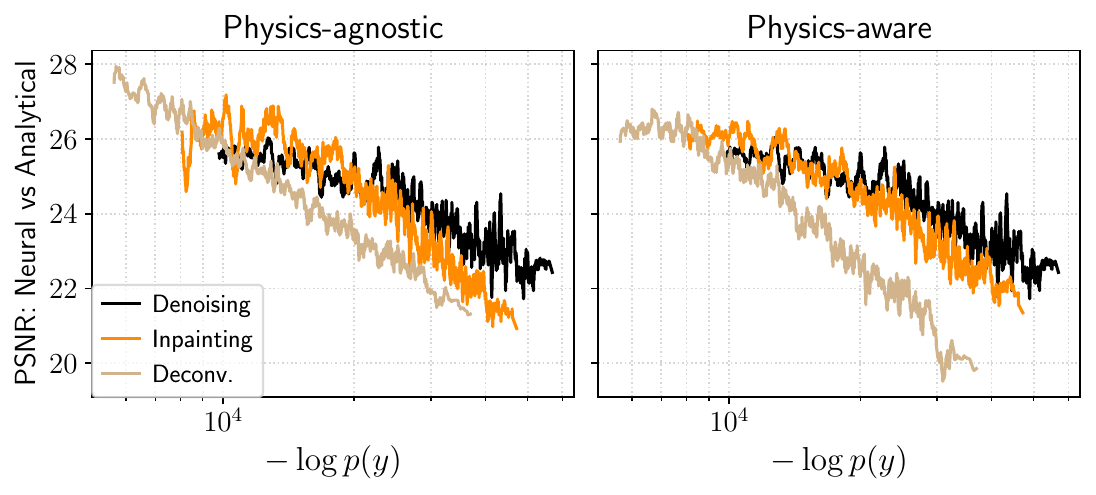}
    \vskip -0.12in
    \caption{
        Higher density yield better alignment.
        We select test images from FFHQ-32, compute \(-\!\log p(y)\) and plot it against the PSNR between the outputs of the LE-MMSE formula and a
        trained UNet2D, \(\sigma=0.05\) and \(P = 5 \times 5\).
    }
    \label{fig:neural_generalization_unet2d}
    % \vskip -0.12in
\end{figure}%

Both LE-MMSE and the networks are optimized with respect to the measurement distribution \(p(y)\).
The networks and the theoretical formula best align in high-density regions of \(p(y)\).
This phenomenon is illustrated in~\cref{fig:neural_generalization_unet2d}, where we show that the alignment degrades as \(-\!\log p(y)\) increases
(or equivalently as \(p(y)\) decreases).
%Note that here we select 5000 test images with equally spaced $-\!\log p(y)$ values to cover a wide range of densities.
Notice that for large \(\sigma\), the density is higher as the Gaussians overlap more, resulting in a better agreement.

\paragraph{Out-of-distribution: how do networks generalize?}
Neural networks often generalize well to out-of-distribution (OOD) data, but understanding this behavior remains an open question.
Interestingly, our analytical formulas provide insights to explain this phenomenon.
We consider here images \(\bar{x}\) that lie in another dataset \(\D'\) disjoint from the training set \(\D\).
\begin{figure}[ht!]
    \centering
    \includegraphics[trim=17mm 0 0 0,clip,width=\linewidth]{./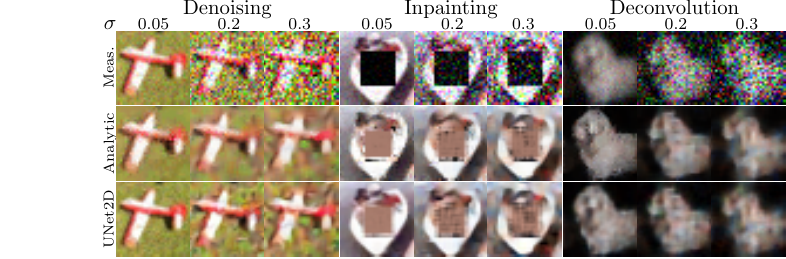}
    \caption{Our theory can predict the neural network outputs for out-of-distribution data: both UNet2D and the LE-MMSE formula are trained (or
        computed) on \(\D = \) FFHQ-32.
    They are tested on \(\D' = \) CIFAR10.}
    \label{fig:ood_unet2d_patch_5}
    % \vskip -0.12in
\end{figure}
In~\cref{fig:ood_unet2d_patch_5}, we compare the output of UNet2D and the LE-MMSE formula.
For large \(\sigma\), we observe very close outputs between the neural network and the analytical formula.
For small \(\sigma\), as the OOD images lie in low-density regions of \(p(y)\), the alignment degrades as expected from the previous discussion.
However, even in this regime, the PSNR between the two outputs remains reasonably high (see~\cref{sec:supp_ood} for quantitative results).
This suggests that the network approximates the LE-MMSE estimator even on OOD data: generalization of the neural networks can be understood through
the lens of the LE-MMSE estimator -- it leverages local patches from the training set to reconstruct unseen images.

\subsection{Hyperparameter influence}\label{sec:hyper_parameter}
\paragraph{Patch size}
The patch size \(P\) in the LE-MMSE estimator plays a crucial role and should be chosen carefully depending on the inverse problem and noise level \(\sigma\).
For large noise levels, the inverse problem requires a higher degree of regularization, which can be achieved by using larger patches. On the
contrary, small patches are preferred for low noise levels to capture fine details.
This effect is illustrated in~\cref{fig:supp_analytic_vs_gt_patch_size}, where \(5\times 5\) patches should be preferred for the test set at low
noise levels, while \(11 \times 11\) patches perform better at high noise levels.

On the other hand, the match between the analytical formula and the neural networks output is higher for small patches, as they depend on the density
of the \emph{measurement patches} distribution \(p\!\left(y[\omega]\right)\), see~\cref{fig:supp_neural_analytic_vs_patch_size}. There are \(N \cdot
|\D|\) patches, but the space dimension is \(P\). Consequently, for a fixed dataset size \(|\D|\), increasing \(P\) lowers the density of
\(p\!\left(y[\omega]\right)\) significantly. A similar conclusion was already reached in denoising diffusion models, where~\citep{kamb2025an}
proposed to vary the patch sizes from large to small across noise levels to guarantee a better match.

\paragraph{Dataset size}
We further investigate the influence of the dataset size \(|\D|\) on the alignment between neural networks and the LE-MMSE formula.
Specifically, we vary \(|\D|\) from \(10^3\) to \(5\times 10^4\) images on FFHQ-32 and trained UNet2D with \(P = 5 \times 5\) and \(B = \Id\).
As shown in~\cref{fig:dataset_size_influence}, our theory holds for all dataset sizes, with small variations in PSNR between the neural network and the formula.
\subsection{Smoothed LE-MMSE and spectral bias}\label{subsec:smoothed_formula}
Deep neural networks also exhibit a \emph{spectral bias}~\citep{xu2019frequency,rahaman2019spectral}, \ie a preference for low-frequency functions,
which yields smoother reconstructions in practice.
Describing this spectral bias with functional constraints is missing from our derivation.
We model this effect by considering a \emph{randomized smoothing}~\citep{duchi2012randomized} variant:
for a smoothing parameter \(\epsilon > 0\) and \(\vz \sim \Normal{0, \Id_M}\), define
\(\xtransloc^{\mathrm{smooth}}(y) \eqdef \Mean{\xtransloc(y + \epsilon \cdot \vz)}\).
This averages the estimates over random perturbations, acting as a nonlinear low-pass filter that attenuates high-frequency variability in the output.
\begin{figure}[ht]
    \centering
    % \vskip -0.1in
    \includegraphics[width=\columnwidth]{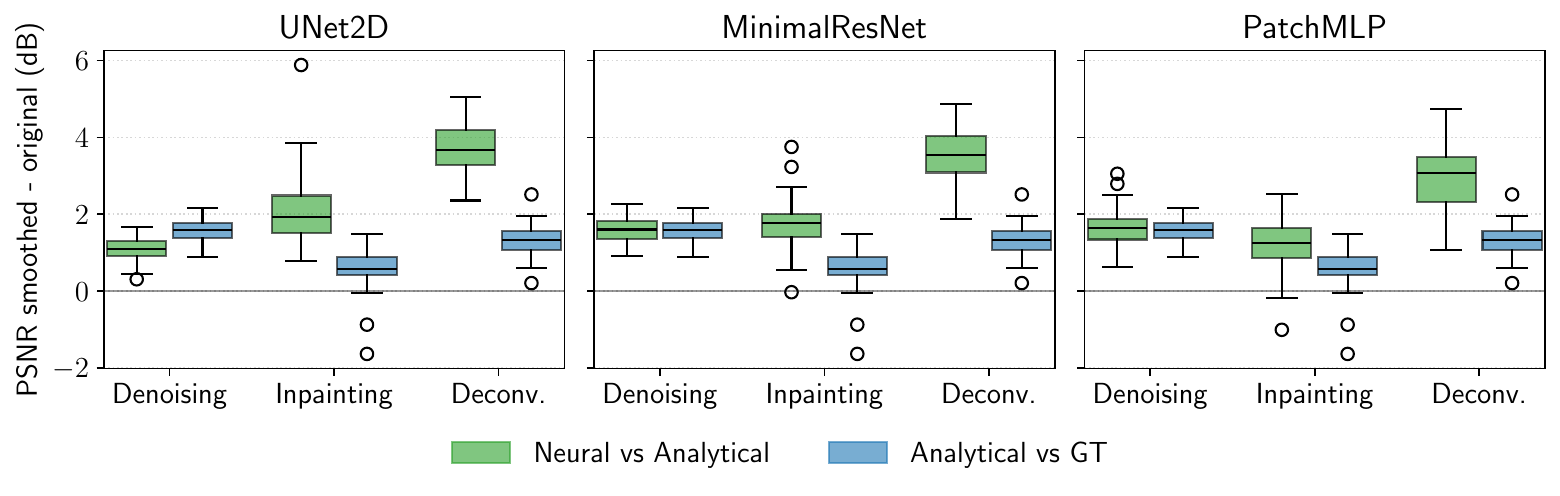}
    % \vskip -0.1in
    \caption{
    Smoothed LE-MMSE (\(\epsilon=0.05\)) better matches neural network and improves reconstruction quality.}
    \label{fig:smoothed_vs_original}
    % \vskip -0.1in
\end{figure}
It also mirrors standard neural network training, where different noise realizations are seen across epochs.
In~\cref{fig:smoothed_vs_original}, we compare the smoothed estimator \(\xtransloc^{\mathrm{smooth}}\) with the original \(\xtransloc\), both
computed with \(P = 5 \times 5, \sigma=0.05\) and \(B = \Id\) on FFHQ-32 images.
The expectation for the smoothed estimator is approximated by \(50\) Monte-Carlo samples.
We observe that the smoothed LE-MMSE better matches the neural network output (\(1\sim3\) dB) and improves reconstruction quality (\(\sim 1\)dB).
This suggests that we can better capture neural network behavior by incorporating carefully designed smoothness constraints.

\begin{figure*}[ht!]
    \centering
    \includegraphics[trim=16mm 0 0 0,clip,width=\linewidth]{./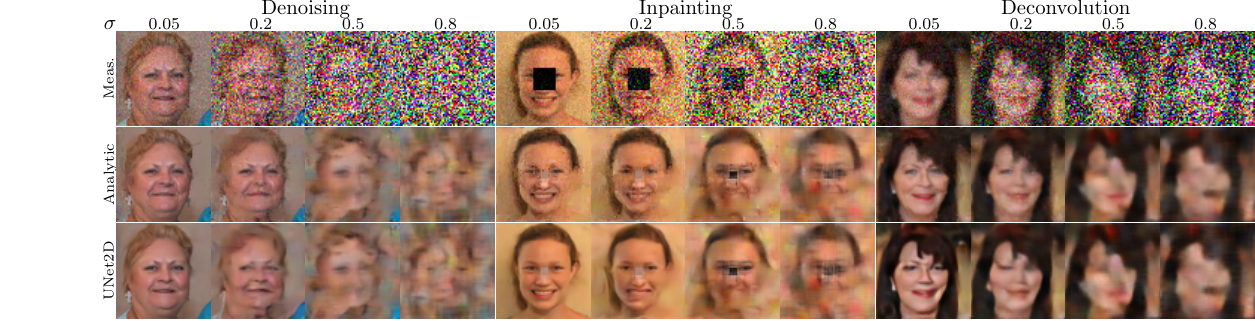}
    \caption{Our theory is validated at higher resolution:
    comparison of UNet2D output and the analytical formula on FFHQ-64.}
    % \vskip -0.1in
    \label{fig:qualitative_64x64}
\end{figure*}
\subsection{Validation on higher resolution images}
To further validate our theoretical findings, we conduct experiments on FFHQ images of size \(64 \times 64\) with \(10^4\) samples.
We train a UNet2D architecture with patch size \(P = 11 \times 11\) for denoising, inpainting, and deconvolution tasks.
We observe that the trained neural network closely approximates the analytical LE-MMSE estimator, further strengthening our conclusions
(see~\cref{fig:qualitative_64x64,fig:supp_additional_qualitative_64x64}).
Details about architecture, training and additional results are provided in~\cref{sec:supp_experiment_64x64}.
Note that computing the analytical formula at this resolution is computationally intensive, limiting the extent of experiments compared to the \(32
\times 32\) case.

Additional experiments on \(128 \times 128\) images are provided in~\cref{sec:supp_experiment_128x128}, where we observe the same conclusions.
\subsection{Beyond natural images: accelerated MRI}
To demonstrate the applicability of our theoretical framework beyond natural images, we conduct an additional experiment on \(4x\) accelerated MRI
reconstruction using the fastMRI dataset~\citep{zbontar2018fastmri}.
The forward operator \(A = S F\) is a subsampled Fourier transform, where \(F\) is the 2D Fourier transform and \(S\) is a binary mask that selects
\(25\%\) of the Fourier coefficients.
We trained a ResNet (\(P = 11 \times 11\)) on the knee singlecoil train subset (\(4865\) slices of size \(128 \times 128\)).
\begin{table}[t]
    \centering
    \caption{The alignment between trained ResNet and the analytical formula of the LE-MMSE estimator under different noise levels \(\sigma\).}
    \label{tab:fastmri_alignment}
    \resizebox{\linewidth}{!}{
        \begin{tabular}{lcccccc}
            \toprule
            \textbf{Train} \, (\(\sigma\))
            & \(0.01\)
            & \(0.02\)
            & \(0.05\)
            & \(0.1\)
            & \(0.2\)
            & \(0.3\) \\
            \midrule
            PSNR \(\uparrow\)  & 32.88 & 32.85 & 32.74 & 32.52 & 32.62 & 32.43 \\
            SSIM \(\uparrow\)  & 0.891 & 0.890 & 0.889 & 0.892 & 0.908 & 0.894 \\
            LPIPS \(\downarrow\) & 0.081 & 0.081 & 0.077 & 0.067 & 0.062 & 0.128 \\
            \midrule

            \textbf{Test} \, (\(\sigma\))
            & \(0.01\)
            & \(0.02\)
            & \(0.05\)
            & \(0.1\)
            & \(0.2\)
            & \(0.3\) \\
            \midrule
            PSNR \(\uparrow\)  & 28.66 & 28.67 & 28.83 & 29.39 & 30.24 & 30.40 \\
            SSIM \(\uparrow\)  & 0.827 & 0.828 & 0.839 & 0.863 & 0.895 & 0.880 \\
            LPIPS \(\downarrow\) & 0.237 & 0.232 & 0.212 & 0.152 & 0.105 & 0.156 \\
            \toprule
        \end{tabular}
    }
\end{table}
We observe in~\cref{tab:fastmri_alignment} and~\cref{fig:fastmri_qualitative} a good alignment between the neural and the LE-MMSE estimator both on
the train (\(\geq 32\)dB) and the
test set (\(\geq 28.6\)dB), showcasing the relevance of our framework for scientific inverse problems.
\begin{figure}[ht]
    \centering
    \includegraphics[width=\linewidth, trim=11.5mm 0 0 0,clip]{./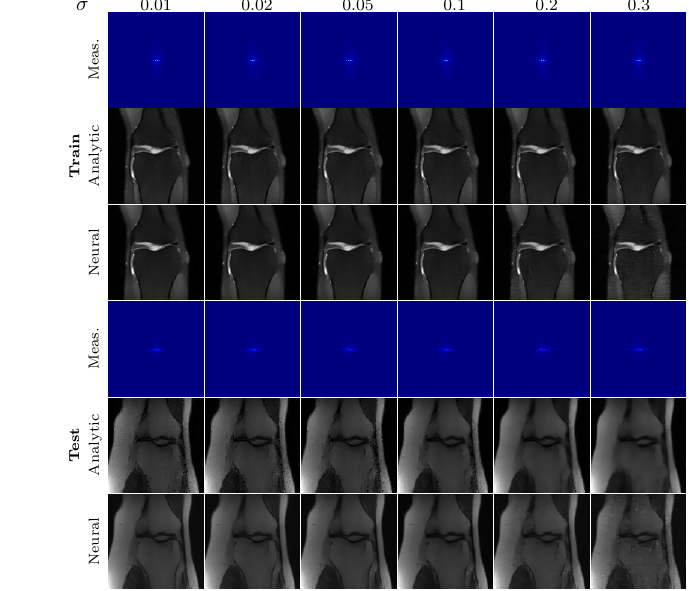}
    \caption{Qualitative comparison between the trained ResNet and the analytical formula of the LE-MMSE estimator on fastMRI.}
    \label{fig:fastmri_qualitative}
\end{figure}
\section{Discussions and conclusion}\label{sec:discussion_conclusion}
\subsection{Neural networks: compress and rearticulate data}
Although our analysis focuses on the local and translation equivariant MMSE estimator, the underlying conclusions extend to far more general
learning settings.
Neural networks compress and rearticulate the training data in an efficient manner.
This viewpoint is consistent with recent findings in deep networks~\citep{arpit17apmlr},
including large language models~\citep{BidermanNEURIPS} and image generation models~\citep{Somepalli_2023_CVPR},
where new samples are synthesized by recombining and transforming elements implicitly stored from the training distribution.

In our experiments, neural networks with roughly \(4 \!\times\! 10^6\) parameters are trained on datasets containing \(10^4\) images of resolution
\(3 \!\times \!32\! \times\! 32\), with approximately \(3.1 \!\times\! 10^7\) pixels.
The networks compress this information into a comparatively small number of parameters (roughly \(13\%\) of the number of pixels) while still
retaining the ability to reconstruct images by rearticulating training set patches.
Furthermore, neural network inference is substantially faster than evaluating the analytical estimator: \(\sim\!4\)ms versus \(\sim\!0.7\)s for
denoising, resulting in \(\sim\!175\!\times\) speedup.
This gain is even more pronounced for physics-aware estimator or other operators: the formula inverses the operators \(Q_n\) for each patch while the
neural networks learn to approximate it from training data.
This demonstrates the practical advantage of neural networks in approximating complex estimators such as the LE-MMSE: they not only compress the data
but also enable efficient, near-instantaneous evaluation.
Originally considered as black boxes,
our analysis provides a theoretical foundation for understanding how neural networks achieve this remarkable feature.
\subsection{Perspective and future works}
\paragraph{Reconstruction guarantee and stability}
Neural networks often lack theoretical guarantees regarding reconstruction accuracy and stability.
By modeling CNNs as LE-MMSE approximations, our closed-form expressions enable the theoretical analysis of these properties.
This framework moves beyond purely empirical validation, offering a rigorous basis for understanding the strengths and limitations of neural inverse
problem solvers.
\paragraph{Extensions beyond current assumptions}
While we assumed in this work additive Gaussian noise and local, translation equivariant functional classes, the framework can be broadened.
For example, vision transformers~\citep{dosovitskiy2020image} with self-attention capture long-range dependencies, but it is permutation
equivariant~\citep{xu2024permutation} without positional embeddings.
The Gaussian assumption can be relaxed (e.g., exponential-family) by using the appropriate likelihood \(p_{\vy \vert \vx}(y \vert x)\).
Finally, beyond MSE, practical training often uses \(\ell_1\) (yielding posterior median), perceptual, or mixed losses.
Analyzing the MMSE analogs under these alternatives is an interesting future venue.

\subsection{Conclusion}
We presented an analytic framework for CNN-based inverse problem solvers by considering MMSE with functional constraints.
We derived closed-form expressions that make explicit the impact the forward operator \(A\) and a pre-inverse \(B\) on the reconstruction.
The theory separates memorization and generalization behaviors: while the MMSE and E-MMSE memorize, the LE-MMSE forms patchwork reconstructions and
closely matches trained CNN outputs across settings.
This perspective turns black-box inverse problem CNNs into predictable estimators, enabling future theoretical analysis of generalization, stability,
and principled choices of pre-inverses.

\newpage
\section*{Acknowledgement}
The authors acknowledge a support from the ANR Micro-Blind (ANR-620
21-CE48-0008) and from the ANR CLEAR-Microscopy (ANR-25-CE45-3780). 
This work was performed using HPC resources from GENCI-IDRIS (Grant AD011012210).
EP thanks TSE-P and acknowledges the support of the AI Interdisciplinary Institute ANITI funding, through the ANR under the
France 2030 program (grant ANR-23-IACL-0002), Chair TRIAL, the Madlearning project under the France 2030 program (ANR-25-PEIA-0002) and the ANR PRC MAD (ANR-24-CE23-1529). 

\section*{Impact Statement}
This paper presents work whose goal is to advance the field
of Machine Learning. There are many potential societal
consequences of our work, none which we feel must be
specifically highlighted here.
%%%%%%%%%%%%%%%%%%%%%%%%%%%%%%%%%%%%%%%%%%%%%%%%%%%%%%%%%%%%%%%%%%%%%%%%%%%%%%%
%%%%%%%%%%%%%%%%%%%%%%%%%%%%%%%%%%%%%%%%%%%%%%%%%%%%%%%%%%%%%%%%%%%%%%%%%%%%%%%
% BIBLIOGRAPHY
%%%%%%%%%%%%%%%%%%%%%%%%%%%%%%%%%%%%%%%%%%%%%%%%%%%%%%%%%%%%%%%%%%%%%%%%%%%%%%%
%%%%%%%%%%%%%%%%%%%%%%%%%%%%%%%%%%%%%%%%%%%%%%%%%%%%%%%%%%%%%%%%%%%%%%%%%%%%%%%
% \newpage
\bibliography{biblio}

@inproceedings{kamb2025an,
  title     = {An analytic theory of creativity in convolutional diffusion models},
  author    = {Mason Kamb and Surya Ganguli},
  booktitle = {Forty-second International Conference on Machine Learning},
  year      = {2025},
  url       = {https://openreview.net/forum?id=ilpL2qACla}
}

@inproceedings{terris2024equivariant,
  title     = {Equivariant plug-and-play image reconstruction},
  author    = {Terris, Matthieu and Moreau, Thomas and Pustelnik, Nelly and Tachella, Julian},
  booktitle = {Proceedings of the IEEE/CVF Conference on Computer Vision and Pattern Recognition},
  pages     = {25255--25264},
  year      = {2024}
}

@article{chen2023imaging,
  title     = {Imaging with equivariant deep learning: From unrolled network design to fully unsupervised learning},
  author    = {Chen, Dongdong and Davies, Mike and Ehrhardt, Matthias J and Sch{\"o}nlieb, Carola-Bibiane and Sherry, Ferdia and Tachella, Juli{\'a}n},
  journal   = {IEEE Signal Processing Magazine},
  volume    = {40},
  number    = {1},
  pages     = {134--147},
  year      = {2023},
  publisher = {IEEE}
}

@article{bertrand2025closed,
  title   = {On the Closed-Form of Flow Matching: Generalization Does Not Arise from Target Stochasticity},
  author  = {Bertrand, Quentin and Gagneux, Anne and Massias, Mathurin and Emonet, R{\'e}mi},
  journal = {Advances in neural information processing systems},
  year    = {2025}
}

@article{jin2017deep,
  title     = {Deep convolutional neural network for inverse problems in imaging},
  author    = {Jin, Kyong Hwan and McCann, Michael T and Froustey, Emmanuel and Unser, Michael},
  journal   = {IEEE transactions on image processing},
  volume    = {26},
  number    = {9},
  pages     = {4509--4522},
  year      = {2017},
  publisher = {IEEE}
}

@article{mccann2017convolutional,
  title     = {Convolutional neural networks for inverse problems in imaging: A review},
  author    = {McCann, Michael T and Jin, Kyong Hwan and Unser, Michael},
  journal   = {IEEE Signal Processing Magazine},
  volume    = {34},
  number    = {6},
  pages     = {85--95},
  year      = {2017},
  publisher = {IEEE}
}

@article{zhu2018image,
  title     = {Image reconstruction by domain-transform manifold learning},
  author    = {Zhu, Bo and Liu, Jeremiah Z and Cauley, Stephen F and Rosen, Bruce R and Rosen, Matthew S},
  journal   = {Nature},
  volume    = {555},
  number    = {7697},
  pages     = {487--492},
  year      = {2018},
  publisher = {Nature Publishing Group UK London}
}

@article{adler2018learned,
  title     = {Learned primal-dual reconstruction},
  author    = {Adler, Jonas and {\"O}ktem, Ozan},
  journal   = {IEEE transactions on medical imaging},
  volume    = {37},
  number    = {6},
  pages     = {1322--1332},
  year      = {2018},
  publisher = {IEEE}
}

@article{muckley2021results,
  title     = {Results of the 2020 fastMRI challenge for machine learning MR image reconstruction},
  author    = {Muckley, Matthew J and Riemenschneider, Bruno and Radmanesh, Alireza and Kim, Sunwoo and Jeong, Geunu and Ko, Jingyu and Jun, Yohan and Shin, Hyungseob and Hwang, Dosik and Mostapha, Mahmoud and others},
  journal   = {IEEE transactions on medical imaging},
  volume    = {40},
  number    = {9},
  pages     = {2306--2317},
  year      = {2021},
  publisher = {IEEE}
}

@article{Celledoni_2021_equivariant_neural_networks,
  title     = {Equivariant neural networks for inverse problems},
  volume    = {37},
  issn      = {1361-6420},
  url       = {http://dx.doi.org/10.1088/1361-6420/ac104f},
  doi       = {10.1088/1361-6420/ac104f},
  number    = {8},
  journal   = {Inverse Problems},
  publisher = {IOP Publishing},
  author    = {Celledoni, Elena and Ehrhardt, Matthias J and Etmann, Christian and Owren, Brynjulf and Schönlieb, Carola-Bibiane and Sherry, Ferdia},
  year      = {2021},
  month     = jul,
  pages     = {085006}
}

@book{rao1973linear,
  title     = {Linear statistical inference and its applications},
  author    = {Rao, Calyampudi Radhakrishna and Rao, Calyampudi Radhakrishna and Statistiker, Mathematischer and Rao, Calyampudi Radhakrishna and Rao, Calyampudi Radhakrishna},
  volume    = {2},
  year      = {1973},
  publisher = {Wiley New York}
}

@book{federer2014geometric,
  title     = {Geometric measure theory},
  author    = {Federer, Herbert},
  year      = {2014},
  publisher = {Springer}
}

@book{kay1993fundamentals,
  title     = {Fundamentals of statistical signal processing: estimation theory},
  author    = {Kay, Steven M},
  year      = {1993},
  publisher = {Prentice-Hall, Inc.}
}

@article{nadaraya1964estimating,
  title     = {On estimating regression},
  author    = {Nadaraya, Elizbar A},
  journal   = {Theory of Probability \& Its Applications},
  volume    = {9},
  number    = {1},
  pages     = {141--142},
  year      = {1964},
  publisher = {SIAM}
}

@article{watson1964smooth,
  title     = {Smooth regression analysis},
  author    = {Watson, Geoffrey S},
  journal   = {Sankhy{\=a}: The Indian Journal of Statistics, Series A},
  pages     = {359--372},
  year      = {1964},
  publisher = {JSTOR}
}

@article{adler2018deep,
  title   = {Deep bayesian inversion},
  author  = {Adler, Jonas and {\"O}ktem, Ozan},
  journal = {arXiv preprint arXiv:1811.05910},
  year    = {2018}
}

@inproceedings{kim2016accurate,
  title     = {Accurate image super-resolution using very deep convolutional networks},
  author    = {Kim, Jiwon and Lee, Jung Kwon and Lee, Kyoung Mu},
  booktitle = {Proceedings of the IEEE conference on computer vision and pattern recognition},
  pages     = {1646--1654},
  year      = {2016}
}

@article{schlemper2017deep,
  title     = {A deep cascade of convolutional neural networks for dynamic MR image reconstruction},
  author    = {Schlemper, Jo and Caballero, Jose and Hajnal, Joseph V and Price, Anthony N and Rueckert, Daniel},
  journal   = {IEEE transactions on Medical Imaging},
  volume    = {37},
  number    = {2},
  pages     = {491--503},
  year      = {2017},
  publisher = {IEEE}
}

@inproceedings{wurfl2016deep,
  title        = {Deep learning computed tomography},
  author       = {W{\"u}rfl, Tobias and Ghesu, Florin C and Christlein, Vincent and Maier, Andreas},
  booktitle    = {International conference on medical image computing and computer-assisted intervention},
  pages        = {432--440},
  year         = {2016},
  organization = {Springer}
}

@article{zhou2022dudodr,
  title     = {DuDoDR-Net: Dual-domain data consistent recurrent network for simultaneous sparse view and metal artifact reduction in computed tomography},
  author    = {Zhou, Bo and Chen, Xiongchao and Zhou, S Kevin and Duncan, James S and Liu, Chi},
  journal   = {Medical Image Analysis},
  volume    = {75},
  pages     = {102289},
  year      = {2022},
  publisher = {Elsevier}
}

@article{wang2020deep,
  title     = {Deep learning for tomographic image reconstruction},
  author    = {Wang, Ge and Ye, Jong Chul and De Man, Bruno},
  journal   = {Nature machine intelligence},
  volume    = {2},
  number    = {12},
  pages     = {737--748},
  year      = {2020},
  publisher = {Nature Publishing Group}
}

@inproceedings{chen2021equivariant,
  title     = {Equivariant imaging: Learning beyond the range space},
  author    = {Chen, Dongdong and Tachella, Juli{\'a}n and Davies, Mike E},
  booktitle = {Proceedings of the IEEE/CVF International Conference on Computer Vision},
  pages     = {4379--4388},
  year      = {2021}
}

@inproceedings{cohen2016group,
  title        = {Group equivariant convolutional networks},
  author       = {Cohen, Taco and Welling, Max},
  booktitle    = {International conference on machine learning},
  pages        = {2990--2999},
  year         = {2016},
  organization = {PMLR}
}

@inproceedings{veeling2018rotation,
  title        = {Rotation equivariant CNNs for digital pathology},
  author       = {Veeling, Bastiaan S and Linmans, Jasper and Winkens, Jim and Cohen, Taco and Welling, Max},
  booktitle    = {International Conference on Medical image computing and computer-assisted intervention},
  pages        = {210--218},
  year         = {2018},
  organization = {Springer}
}

@inproceedings{he2016deep,
  title     = {Deep residual learning for image recognition},
  author    = {He, Kaiming and Zhang, Xiangyu and Ren, Shaoqing and Sun, Jian},
  booktitle = {Proceedings of the IEEE conference on computer vision and pattern recognition},
  pages     = {770--778},
  year      = {2016}
}

@inproceedings{ronneberger2015u,
  title        = {U-net: Convolutional networks for biomedical image segmentation},
  author       = {Ronneberger, Olaf and Fischer, Philipp and Brox, Thomas},
  booktitle    = {International Conference on Medical image computing and computer-assisted intervention},
  pages        = {234--241},
  year         = {2015},
  organization = {Springer}
}

@article{glimpse_local,
  title     = {Glimpse: Generalized Locality for Scalable and Robust CT},
  author    = {AmirEhsan Khorashadizadeh and Valentin Debarnot and Tianlin Liu and Ivan Dokmanić},
  journal   = {IEEE Transactions on Medical Imaging},
  year      = {2025},
  publisher = {IEEE},
  url       = {https://ieeexplore.ieee.org/stamp/stamp.jsp?arnumber=11018464}
}

@inproceedings{kadkhodaie2023learning,
  title     = {Learning multi-scale local conditional probability models of images},
  author    = {Zahra Kadkhodaie and Florentin Guth and St{\'e}phane Mallat and Eero P Simoncelli},
  booktitle = {The Eleventh International Conference on Learning Representations },
  year      = {2023},
  url       = {https://openreview.net/forum?id=VZX2I_VVJKH}
}

@misc{kingma2015adam,
  title         = {Adam: A Method for Stochastic Optimization},
  author        = {Kingma, Diederik P. and Ba, Jimmy},
  year          = {2014},
  eprint        = {1412.6980},
  archiveprefix = {arXiv},
  primaryclass  = {cs.LG},
  url           = {https://arxiv.org/abs/1412.6980}
}

@inproceedings{karras2019style,
  title     = {A style-based generator architecture for generative adversarial networks},
  author    = {Karras, Tero and Laine, Samuli and Aila, Timo},
  booktitle = {Proceedings of the IEEE/CVF conference on computer vision and pattern recognition},
  pages     = {4401--4410},
  year      = {2019}
}

@misc{krizhevsky2009learning,
  title  = {Learning multiple layers of features from tiny images.(2009)},
  author = {Krizhevsky, Alex and Hinton, Geoffrey and others},
  year   = {2009}
}

@article{xiao2017fashion,
  title   = {Fashion-mnist: a novel image dataset for benchmarking machine learning algorithms},
  author  = {Xiao, Han and Rasul, Kashif and Vollgraf, Roland},
  journal = {arXiv preprint arXiv:1708.07747},
  year    = {2017}
}

@article{paszke2019pytorch,
  title   = {Pytorch: An imperative style, high-performance deep learning library},
  author  = {Paszke, Adam and Gross, Sam and Massa, Francisco and Lerer, Adam and Bradbury, James and Chanan, Gregory and Killeen, Trevor and Lin, Zeming and Gimelshein, Natalia and Antiga, Luca and others},
  journal = {Advances in neural information processing systems},
  volume  = {32},
  year    = {2019}
}

@article{dosovitskiy2020image,
  title   = {An image is worth 16x16 words: Transformers for image recognition at scale},
  author  = {Dosovitskiy, Alexey},
  journal = {arXiv preprint arXiv:2010.11929},
  year    = {2020}
}

@inproceedings{arpit17apmlr,
  title     = {A Closer Look at Memorization in Deep Networks},
  author    = {Devansh Arpit and Stanis{\l}aw Jastrz{\c{e}}bski and Nicolas Ballas and David Krueger and Emmanuel Bengio and Maxinder S. Kanwal and Tegan Maharaj and Asja Fischer and Aaron Courville and Yoshua Bengio and Simon Lacoste-Julien},
  booktitle = {Proceedings of the 34th International Conference on Machine Learning},
  pages     = {233--242},
  year      = {2017},
  volume    = {70},
  series    = {Proceedings of Machine Learning Research},
  month     = {06--11 Aug},
  publisher = {PMLR}
}

@inproceedings{BidermanNEURIPS,
  author    = {Biderman, Stella and Prashanth, Sai and Sutawika, Lintang and Schoelkopf, Hailey and Anthony, Quentin and Purohit, Shivanshu and Raff, Edward},
  booktitle = {Advances in Neural Information Processing Systems},
  editor    = {A. Oh and T. Naumann and A. Globerson and K. Saenko and M. Hardt and S. Levine},
  pages     = {28072--28090},
  publisher = {Curran Associates, Inc.},
  title     = {Emergent and Predictable Memorization in Large Language Models},
  url       = {https://openreview.net/pdf?id=Iq0DvhB4Kf},
  volume    = {36},
  year      = {2023}
}

@inproceedings{Somepalli_2023_CVPR,
  author    = {Somepalli, Gowthami and Singla, Vasu and Goldblum, Micah and Geiping, Jonas and Goldstein, Tom},
  title     = {Diffusion Art or Digital Forgery? Investigating Data Replication in Diffusion Models},
  booktitle = {Proceedings of the IEEE/CVF Conference on Computer Vision and Pattern Recognition (CVPR)},
  month     = {June},
  year      = {2023},
  pages     = {6048-6058}
}

@article{hornik1989multilayer,
  title     = {Multilayer feedforward networks are universal approximators},
  author    = {Hornik, Kurt and Stinchcombe, Maxwell and White, Halbert},
  journal   = {Neural networks},
  volume    = {2},
  number    = {5},
  pages     = {359--366},
  year      = {1989},
  publisher = {Elsevier}
}

@inproceedings{nordenforsdata,
  title     = {Data Augmentation and Regularization for Learning Group Equivariance},
  author    = {Nordenfors, Oskar and Flinth, Axel},
  booktitle = {15th International Conference on Sampling Theory and Applications},
  year      = {2025}
}

@article{chen2020group,
  title   = {A group-theoretic framework for data augmentation},
  author  = {Chen, Shuxiao and Dobriban, Edgar and Lee, Jane H},
  journal = {Journal of Machine Learning Research},
  volume  = {21},
  number  = {245},
  pages   = {1--71},
  year    = {2020}
}

@article{scarvelisclosed,
  title   = {Closed-Form Diffusion Models},
  author  = {Scarvelis, Christopher and de Oc{\'a}riz Borde, Haitz S{\'a}ez and Solomon, Justin},
  journal = {Transactions on Machine Learning Research},
  year    = {2025}
}

@inproceedings{rahaman2019spectral,
  title        = {On the spectral bias of neural networks},
  author       = {Rahaman, Nasim and Baratin, Aristide and Arpit, Devansh and Draxler, Felix and Lin, Min and Hamprecht, Fred and Bengio, Yoshua and Courville, Aaron},
  booktitle    = {International conference on machine learning},
  pages        = {5301--5310},
  year         = {2019},
  organization = {PMLR}
}

@article{duchi2012randomized,
  title     = {Randomized smoothing for stochastic optimization},
  author    = {Duchi, John C and Bartlett, Peter L and Wainwright, Martin J},
  journal   = {SIAM Journal on Optimization},
  volume    = {22},
  number    = {2},
  pages     = {674--701},
  year      = {2012},
  publisher = {SIAM}
}

@article{xu2019frequency,
  title   = {Frequency principle: Fourier analysis sheds light on deep neural networks},
  author  = {Xu, Zhi-Qin John and Zhang, Yaoyu and Luo, Tao and Xiao, Yanyang and Ma, Zheng},
  journal = {arXiv preprint arXiv:1901.06523},
  year    = {2019}
}

@inproceedings{xu2024permutation,
  title     = {Permutation equivariance of transformers and its applications},
  author    = {Xu, Hengyuan and Xiang, Liyao and Ye, Hangyu and Yao, Dixi and Chu, Pengzhi and Li, Baochun},
  booktitle = {Proceedings of the IEEE/CVF Conference on Computer Vision and Pattern Recognition},
  pages     = {5987--5996},
  year      = {2024}
}

@article{mityagin2020zero,
  title     = {The Zero Set of a Real Analytic Function},
  author    = {Mityagin, Boris},
  journal   = {Mathematical Notes},
  volume    = {107},
  number    = {3},
  pages     = {529--530},
  year      = {2020},
  publisher = {Springer}
}

@article{zbontar2018fastmri,
  title   = {fastMRI: An open dataset and benchmarks for accelerated MRI},
  author  = {Zbontar, Jure and Knoll, Florian and Sriram, Anuroop and Murrell, Tullie and Huang, Zhengnan and Muckley, Matthew J and Defazio, Aaron and Stern, Ruben and Johnson, Patricia and Bruno, Mary and others},
  journal = {arXiv preprint arXiv:1811.08839},
  year    = {2018}
}

@inproceedings{buades2005non,
  title        = {A non-local algorithm for image denoising},
  author       = {Buades, Antoni and Coll, Bartomeu and Morel, J-M},
  booktitle    = {2005 IEEE computer society conference on computer vision and pattern recognition (CVPR'05)},
  volume       = {2},
  pages        = {60--65},
  year         = {2005},
  organization = {Ieee}
}
\bibliographystyle{icml2026}

%%%%%%%%%%%%%%%%%%%%%%%%%%%%%%%%%%%%%%%%%%%%%%%%%%%%%%%%%%%%%%%%%%%%%%%%%%%%%%%
%%%%%%%%%%%%%%%%%%%%%%%%%%%%%%%%%%%%%%%%%%%%%%%%%%%%%%%%%%%%%%%%%%%%%%%%%%%%%%%
% APPENDIX
%%%%%%%%%%%%%%%%%%%%%%%%%%%%%%%%%%%%%%%%%%%%%%%%%%%%%%%%%%%%%%%%%%%%%%%%%%%%%%%
%%%%%%%%%%%%%%%%%%%%%%%%%%%%%%%%%%%%%%%%%%%%%%%%%%%%%%%%%%%%%%%%%%%%%%%%%%%%%%%
\newpage
\appendix
\onecolumn
\section{Mathematical formalism}
\subsection{Preliminaries}\label{sec:supp_preliminaries}
\paragraph{Notation conventions}
In what follows, we use the following notation:
\begin{itemize}
    \item Sets are indicated by calligraphic letters, \eg \(\D, \M, \T, \G\).
    \item Random vectors are denoted by bold lowercase letters, \eg \(\vx, \vy, \vz\).
    \item \(A \in \R^{M \times N}\) denotes a linear operator from \(\R^N\) to \(\R^M\) (\eg convolution, inpainting).
    \item \(\Normal{x; \mu, \Sigma}\) denotes the probability density function (PDF) of a Gaussian distribution with mean \(\mu\) and covariance
        \(\Sigma\) evaluated at \(x\). The covariance matrix \(\Sigma\) can be non-singular or singular (see \cref{def:supp_degenerate_gaussian}).
    \item The \(n\)-th coordinate of a vector \(x\in \R^N\) is denoted either \(x_n\) or \(x[n]\). Similarly, the coordinates of a multivalued
        function \(\phi:\R^N\to \R^N\)  can be denoted either \(\phi_n(x)\) or \(\phi(x)[n]\).
    \item The empirical data distribution \(\ptrain\) is defined by
        \begin{equation}
            \ptrain = \frac{1}{|\D|} \sum_{x\in \D} \delta_{x},
        \end{equation}
        where \(\D\) is the training dataset of finite size \(|\D|\).
\end{itemize}

\paragraph{Degenerate Gaussian distribution}
We can define the multivariate Gaussian distribution for positive semi-definite covariance
matrices~\citep{rao1973linear}. Below, we recall a simple definition of the positive semi-definite covariance
multivariate Gaussian, sometimes called the degenerate or singular multivariate Gaussian.
\begin{definition}[Degenerate Gaussian distribution]\label{def:supp_degenerate_gaussian}
    A random vector \(\vz \in \R^N\) has a Gaussian distribution with mean \(\mu \in \R^N\) and covariance matrix \(\Sigma \in \R^{N \times N},
    \Sigma \succeq 0\) if its probability density function (PDF) is given by:
    \begin{equation}
        \Normal{z; \mu, \Sigma} = \frac{1}{(2\pi)^{r/2} \sqrt{|\Sigma|_{+}}} \exp\left(-\frac{1}{2} (z - \mu)^\top \Sigma^{+} (z - \mu)\right)
        \mathbbm{1}_{\supp{\vz}}(z),
    \end{equation}
    on the support \(z \in \supp{\vz} = \mu + \Im{\Sigma}\) and zero elsewhere.
    In this equation, \(r = \mathrm{rank}(\Sigma)\), \(\Sigma^{+}\) denotes the Moore-Penrose pseudo-inverse of \(\Sigma\) and \(|\Sigma|_{+}\) is
    the pseudo-determinant of \(\Sigma\) (product of non-zero eigenvalues).
\end{definition}
The support \(\supp{\vz}\) of the distribution is the \(r-\) dimensional affine subspace of \(\R^N\), \(r\) is the rank of \(\Sigma\). The density is
with respect to the Lebesgue measure restricted to \(\supp{\vz}\), constructed via the pushforward measure of the standard Lebesgue measure on
\(\R^r\) by the affine map \(v \mapsto \mu + \Sigma_r v\), where \(\Sigma = \Sigma_r \Sigma_r^\top\).
This measure also coincides (up to a normalization constant, equal to the pseudo-determinant of \(\Sigma\)) with the \(r-\)dimensional Hausdorff measure.
For completeness, we provide a derivation of the density of \(\vz\) with respect to the \(r\)-dimensional Hausdorff measure \(\mathcal{H}^r\) on the
affine support \(\supp{\vz}\).

Suppose \(\Sigma_r \in \mathbb{R}^{n \times r}\) satisfies \( \Sigma = \Sigma_r \Sigma_r^\top \) and \(\mathrm{rank}(\Sigma) = r\).
Define the random vector \(\vz = \mu + \Sigma_r \vw\), where \(\vw \sim \mathcal{N}(0, I_r)\).
Consider the affine map \( \Phi: \mathbb{R}^r \to \mathbb{R}^N \) defined by \( \Phi(w) = \mu + \Sigma_r w \).
This map is a smooth bijection from \(\mathbb{R}^r\) onto the support \(\supp{\vz} = \mu + \Im{\Sigma}\).

The probability measure of \(\vz\), denoted \(\mathbb{P}_{\vz}\), is the pushforward of the standard Gaussian measure \(\gamma^r\) via \(\Phi\). The
standard Gaussian density on \(\mathbb{R}^r\) is \(g(w) = (2\pi)^{-r/2} e^{-\frac{1}{2} \|w\|^2}\).
For any measurable set \(A \subset \supp{\vz}\), we have:
\begin{equation}
    \mathbb{P}_{\vz}(A) = \gamma^r(\Phi^{-1}(A)) = \int_{\Phi^{-1}(A)} g(w) \, d\lambda^r(w).
\end{equation}
To find the density with respect to the Hausdorff measure \(\mathcal{H}^r\) (the standard volume measure on the support), we apply the Area
Formula~\citep{federer2014geometric} (change of variables). The Jacobian determinant of \(\Phi\) is:
\[ J\Phi = \sqrt{\det(\Sigma_r^\top \Sigma_r)} = \sqrt{|\Sigma|_+}. \]
Using the change of variables \(z = \Phi(w)\), the volume elements relate via \(d\mathcal{H}^r(z) = J\Phi \, d\lambda^r(w)\). Therefore:
\[ d\lambda^r(w) = \frac{1}{\sqrt{|\Sigma|_+}} d\mathcal{H}^r(z). \]
Substituting this into the integral and using \(w = \Phi^{-1}(z) = \Sigma_r^+ (z - \mu)\):
\begin{align*}
    \mathbb{P}_{\vz}(A) &= \int_{A} g(\Phi^{-1}(z)) \frac{1}{\sqrt{|\Sigma|_+}} d\mathcal{H}^r(z) \\
    &= \int_{A} \frac{1}{(2\pi)^{r/2}\sqrt{|\Sigma|_+}} \exp\left( -\frac{1}{2} \|\Sigma_r^+ (z - \mu)\|^2 \right) d\mathcal{H}^r(z).
\end{align*}
Noting that \(\|\Sigma_r^+ (z - \mu)\|^2 = (z - \mu)^\top (\Sigma_r^+)^\top \Sigma_r^+ (z - \mu) = (z - \mu)^\top \Sigma^+ (z - \mu)\), we recover
the density given in Definition \ref{def:supp_degenerate_gaussian}.

When the covariance matrix is non-singular, we recover the standard PDF of a Gaussian distribution.

\paragraph{Degenerate Multivariate Gaussian and Mahalanobis Distance}
The Mahalanobis distance quantifies the distance from \(z\) to \(\mu\) relative to the covariance structure and is defined as:

\begin{equation}
    D_\Sigma(z, \mu) = \sqrt{(z - \mu)^\top \Sigma^+ (z - \mu)}.
\end{equation}

When \(z \not\in \supp{\vz}\), the density is zero, corresponding to an infinite effective distance outside the support.

\begin{itemize}
    \item Eigenvalue Decomposition. Consider the spectral decomposition of the covariance matrix:
        \begin{equation*}
            \Sigma = U \Lambda U^\top,
        \end{equation*}
        where:
        \begin{itemize}
            \item \(U\) is an orthogonal matrix whose columns are the eigenvectors of \(\Sigma\).
            \item \(\Lambda = \diag(\lambda_1, \lambda_2, \dots, \lambda_d)\) is a diagonal matrix with eigenvalues \(\lambda_1 \geq \lambda_2 \geq
                \dots \geq \lambda_r > 0 = \lambda_{r+1} = \dots = \lambda_d\), sorted in descending order, with zeros corresponding to the
                degenerate directions.
        \end{itemize}
        The pseudo-inverse is:
        \begin{equation}
            \Sigma^+ = U \Lambda^+ U^\top,
        \end{equation}
        where \(\Lambda^+ = \diag(1/\lambda_1, \dots, 1/\lambda_r, 0, \dots, 0)\).

        Transform the coordinates to the eigen-basis by defining \(y = U^\top (x - \mu)\). The Mahalanobis distance simplifies to:

        \begin{equation}
            D_\Sigma(x, \mu) = \sqrt{y^\top \Lambda^+ y} = \sqrt{\sum_{i=1}^r \frac{y_i^2}{\lambda_i}},
        \end{equation}
        since \(y_i = 0\) for \(i > r\) on the support \(\supp{\vz}\). The exponent in the density becomes:
        \begin{equation}
            -\frac{1}{2} \sum_{i=1}^r \frac{y_i^2}{\lambda_i}.
        \end{equation}

    \item Distance in Different Directions

        The effect of deviations from the mean \(\mu \) along the directions of the eigenvectors (principal axes) depends on the corresponding eigenvalues:

        \begin{itemize}
            \item \textbf{Directions with large eigenvalues (\(\lambda_i \gg 0 \))}: The term \(\frac{y_i^2}{\lambda_i} \) grows slowly as \(|y_i| \)
                increases. Deviations along these high-variance directions contribute less to reducing the density, effectively scaling the distance
                by \(\sqrt{\lambda_i} \). This allows larger Euclidean deviations in these directions.
            \item \textbf{Directions with small positive eigenvalues (\(0 < \lambda_i \ll 1 \))}: The term \(\frac{y_i^2}{\lambda_i} \) grows rapidly
                even for small \(|y_i| \). Deviations are heavily penalized, scaling the distance by \(1 / \sqrt{\lambda_i} \), making small
                deviations appear ``far'' in the Mahalanobis sense.
            \item \textbf{Directions with zero eigenvalues (\(\lambda_i = 0 \))}: These correspond to the null space of \(\Sigma\). Here, \(y_i \)
                must be exactly zero for the density to be non-zero, enforced by the Dirac delta measure. Any non-zero deviation results in zero
                density, equivalent to an infinite Mahalanobis distance, indicating no variability in these directions.
        \end{itemize}
\end{itemize}

\paragraph{Integration on linear subspaces}
The following lemma is useful for change of variable with linear mapping \citep{federer2014geometric}.
\begin{lemma}[Integration on linear subspaces]\label{lemma:supp_integration_linear_subspace}
    Let \(\Normal{y; \mu, \Sigma}\) be the PDF of a Gaussian distribution in \(\R^N\), with mean \(\mu \in \R^N\) and covariance matrix \(\Sigma \in
    \R^{N \times N}\) of rank \(r_0 \leq N\).
    For any linear application \(B: \R^N \to \R^P\), and any measurable \(f\), we have:
    \begin{equation}
        \int_{\R^N} f(B y) \Normal{y; \mu, \Sigma} d \Haus^{r_0}(y) = \int_{\R^P} f(v) \Normal{ v; B \mu, B \Sigma B^\top} d \Haus^{r}(v),
    \end{equation}
    where \(\Normal{ v; B \mu, B \Sigma B^\top}\) is the PDF of a (possibly degenerate) Gaussian distribution with respect to the \(r-\)dimensional
    Hausdorff measure \(\Haus^{r}\), with \(r\) being the rank of \(B \Sigma B^\top\).
    This PDF is supported on the \(r-\)dimensional subspace \(B \mu + \Im{B \Sigma B^\top} \subset \Im{B} \subset \R^P\).
\end{lemma}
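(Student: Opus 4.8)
The plan is to read the claimed identity as the statement that the pushforward of the (possibly degenerate) Gaussian law $\Normal{\cdot;\mu,\Sigma}$ under the linear map $B$ is again a Gaussian, namely $\Normal{\cdot;B\mu,B\Sigma B^\top}$, and to make this precise at the level of the Hausdorff-measure densities of \cref{def:supp_degenerate_gaussian}. First I would reduce to the centered case $\mu=0$: since $\Haus^{r_0}$ is translation invariant on the affine support $\mu+\Im{\Sigma}$, substituting $y\mapsto y-\mu$ and replacing $f$ by $v\mapsto f(v+B\mu)$ leaves both sides unchanged in form. Next, factor $\Sigma=\Sigma_{r_0}\Sigma_{r_0}^\top$ with $\Sigma_{r_0}\in\R^{N\times r_0}$ of full column rank $r_0$; then, by the Area-Formula computation carried out just after \cref{def:supp_degenerate_gaussian} (which identifies $\Normal{y;0,\Sigma}\,d\Haus^{r_0}(y)$ with the pushforward of the standard Gaussian $\gamma^{r_0}$ on $\R^{r_0}$ under $w\mapsto\Sigma_{r_0}w$), the left-hand side equals $\int_{\R^{r_0}}f(Cw)\,d\gamma^{r_0}(w)$, where $C\eqdef B\Sigma_{r_0}\in\R^{P\times r_0}$, so that $CC^\top=B\Sigma B^\top$ and $\mathrm{rank}(C)=r$.

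The core is then to evaluate this standard-Gaussian integral through $C$, accounting for the possible rank drop $r\le r_0$ introduced by $B$. I would decompose $\R^{r_0}=\ker{C}\oplus(\ker{C})^\perp$ orthogonally; because a standard Gaussian is invariant under orthogonal coordinate changes, $\gamma^{r_0}$ is a product of standard Gaussians on the two factors, and since $Cw$ depends only on the $(\ker{C})^\perp$-component of $w$, integrating out the $\ker{C}$-component gives
\[
\int_{\R^{r_0}}f(Cw)\,d\gamma^{r_0}(w)=\int_{(\ker{C})^\perp}f(Cw)\,d\gamma^{r}(w),
\]
with $\gamma^{r}$ the standard Gaussian on the $r$-dimensional subspace $(\ker{C})^\perp$. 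On that subspace the restriction $\widetilde C\eqdef C|_{(\ker{C})^\perp}$ is injective with image $\Im{C}$, so the Area Formula applied to the change of variables $v=\widetilde C w$ yields $d\Haus^{r}(v)=\sqrt{\det(\widetilde C^\top\widetilde C)}\,d\lambda^{r}(w)$ on $\Im{C}$, with $w=\widetilde C^{+}v$.

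Substituting these into the integral leaves $f(v)$ against the measure $(2\pi)^{-r/2}\exp(-\tfrac12\|\widetilde C^{+}v\|^2)\,\det(\widetilde C^\top\widetilde C)^{-1/2}\,d\Haus^{r}(v)$ on $\Im{C}$, and it remains to identify this with $\Normal{v;0,CC^\top}\,d\Haus^{r}(v)$ from \cref{def:supp_degenerate_gaussian}. Working in the singular value decomposition $C=USV^\top$ makes the two needed identities transparent: $(\ker{C})^\perp$ is spanned by the first $r$ right singular vectors, so $\det(\widetilde C^\top\widetilde C)=\prod_{i=1}^r s_i^2=|CC^\top|_{+}$ (the pseudo-determinant, since the nonzero eigenvalues of $CC^\top$ are exactly the squared nonzero singular values of $C$), and $\|\widetilde C^{+}v\|^2=\sum_{i=1}^r \langle u_i,v\rangle^2/s_i^2=v^\top(CC^\top)^{+}v$ for $v\in\Im{C}$. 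This matches the density (and the support $\Im{C}=\Im{B\Sigma B^\top}\subseteq\Im{B}$), and undoing the centering reduction delivers the general statement.

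The step I expect to be the main obstacle is the rank-drop bookkeeping: cleanly justifying the marginalization over $\ker{C}$ as an equality of integrals against Hausdorff measures on subspaces of differing dimension, and then matching the pseudo-inverse and pseudo-determinant of $CC^\top$ with the genuine inverse and determinant of $\widetilde C$ on $(\ker{C})^\perp$. The remaining ingredients — translation invariance, the product structure of $\gamma^{r_0}$, and the Area Formula — are either elementary or already invoked in the excerpt.
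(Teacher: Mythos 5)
Your proposal is correct, and it is essentially a fully unpacked version of an argument the paper dispatches in two lines. The paper's proof simply writes both sides as $\Mean{f(B\vy)}$ and $\Mean{f(\vz)}$ for $\vz = B\vy$, and invokes the fact that a linear image of a Gaussian is Gaussian with the stated parameters --- implicitly relying on the pushforward characterization given after \cref{def:supp_degenerate_gaussian}: since $\vy = \mu + \Sigma_{r_0}\vw$ with $\vw$ standard, $B\vy = B\mu + (B\Sigma_{r_0})\vw$ and $(B\Sigma_{r_0})(B\Sigma_{r_0})^\top = B\Sigma B^\top$, so $B\vy$ is again of the required affine-pushforward form. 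What your proof adds, and what the paper glosses over, is the verification that the \emph{density formula} of \cref{def:supp_degenerate_gaussian} is consistent with this pushforward when the factor $C = B\Sigma_{r_0}$ has a nontrivial kernel: the paper's Area-Formula derivation after \cref{def:supp_degenerate_gaussian} only treats factorizations $\Sigma = \Sigma_r\Sigma_r^\top$ with $\Sigma_r$ of full column rank, whereas here one must first marginalize the standard Gaussian over $\ker{C}$ before the Area Formula applies to the injective restriction $\widetilde C$. Your SVD bookkeeping --- $\det(\widetilde C^\top\widetilde C) = |CC^\top|_+$, $\|\widetilde C^+ v\|^2 = v^\top(CC^\top)^+ v$ on $\Im{C} = \Im{B\Sigma B^\top}$ --- correctly closes that gap. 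In short: the paper's route is a soft probabilistic identification that presupposes the lemma's content at the level of laws; yours is the explicit analytic verification that the degenerate-Gaussian density transforms correctly under a rank-dropping linear map. Both are valid; yours is more self-contained, at the cost of length.
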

\begin{proof}
    The proof of this result is relatively straightforward by using the density of a degenerate Gaussian distribution \cref{def:supp_degenerate_gaussian}.
    Let \(\vy \sim \Normal{\mu, \Sigma}\) and \(\vz = B \vy\), then \(\vz \sim \Normal{B \mu, B \Sigma B^\top}\). We have
    \begin{align}
        \int_{\R^N} f(B y) \Normal{y; \mu, \Sigma} d \Haus^{r_0}(y) &= \Mean{f(B \vy)} \\&= \Mean{f(\vz)} \\&= \int_{\R^P} f(v) \Normal{ v; B \mu, B
        \Sigma B^\top} d \Haus^{r}(v).
    \end{align}
\end{proof}
\begin{remark}
    When \(\Sigma\) is full rank, the Hausdorff measure \(\Haus^{r_0}\) coincides with the Lebesgue measure on \(\R^N\).
\end{remark}

\subsection{Minimum Mean Square Error (MMSE) estimator}
The MMSE estimator is the optimal estimator in the following sense
\begin{definition}[MMSE estimator]\label{def:supp_mmse}
    Given two random vectors \(\vx \in \X, \vy \in \Y\) and a linear operator \(B: \Y \to \X\).
    The MMSE estimator of \(\vx\) given \(B \vy\) is the best approximation \emph{random variable} \(\phi^\star(B \vy)\) to \(\vx\), in the least-square sense:
    \begin{equation}
        \xmmse = \phi^\star \circ B \quad \text{where} \quad \phi^{\star} = \argmin_{\phi : \X \to \X} \Mean{ \norm{\phi(B\vy) - \vx}^2}
    \end{equation}
\end{definition}
It is well-known that the MMSE estimator coincides with the \emph{conditional expectation}. That is, for any \(y\) such that \(p(y) > 0\), we have:
\begin{equation}
    \xmmse(y) = \Mean{\vx \vert B \vy = B y} = \int x \cdot p(x \vert B y) dx
\end{equation}
Composing \(\xmmse\) with the random variable \(\vy\), we get
\[\xmmse(\vy) = \Mean{\vx \vert B \vy}.\]
In particular, if \(B\!=\!\Id\) (in this case \(M = N\)), the MMSE estimator reduces to the classical \emph{posterior mean}: \(\xmmse(y) = \Mean{\vx
\vert \vy = y}\).
Note that both \(\Mean{\vx \vert \vy = y}\) and \(\Mean{\vx \vert \vy}\) are often called condition expectation, but these are different objects.
In particular, \(\Mean{\vx \vert \vy = \cdot} = \xmmse(\cdot)\) is a function \(\Y \to \X\) while \(\Mean{\vx \vert \vy}\) is a random variable
assuming values in \(\X\).
However, finding the MMSE estimator amounts to finding the optimal function \(\phi^\star\).

\subsection{Constrained Minimum Mean Square Error (MMSE) estimator}
The classical MMSE estimator is defined as the best approximation function over the space of measurable function from \(\Y\) to \(\X\).
When adding functional constraints to the estimator, we would like to find the best approximation function over a subspace \(\M\).
\begin{equation}
    \min_{\phi \in \M} \Mean{ \norm{\phi(B \vy) - \vx}^2}
\end{equation}
For example, the subspace \(\M\) could be the set of measurable functions from \(\X \to \X\) and translation equivariant.
\subsection{Optimality condition}
We first state a simple first-order sufficient and necessary optimality condition for solving the Constrained MMSE.
\begin{proposition}[Optimality condition]\label{prop:supp_optimality_condition}
    Let \(\vx \in \X, \vy \in \Y\) be two random variables and \(\M\) be a vector space of measurable functions from \(\X\) to \(\X\), \(B\) be a
    linear operator from \(\Y\) to \(\X\).
    Then \(\phi^\star\) is a minimizer of
    \begin{equation*}
        \min_{\phi \in \M} \Mean{ \norm{\phi(B \vy) - \vx}^2}
    \end{equation*}
    if and only if
    \begin{equation}
        \Mean{\inner{\varphi(B \vy), \phi^\star(B \vy) - \vx}} = 0 \qquad \text{ for all } \varphi \in \M.
    \end{equation}

\end{proposition}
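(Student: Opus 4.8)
The plan is to recognize this as the first-order (orthogonality) optimality condition for minimizing a convex quadratic functional over a linear subspace, and to prove it by the standard one-dimensional variational argument. Write $F(\phi) \eqdef \Mean{\norm{\phi(B\vy) - \vx}^2}$. Since $\M$ is a vector space, for any $\phi^\star, \varphi \in \M$ and any $t \in \R$ the function $\phi^\star + t\varphi$ again lies in $\M$, and expanding the squared norm pointwise and taking expectations gives
\[
F(\phi^\star + t\varphi) = F(\phi^\star) + 2t\,\Mean{\inner{\varphi(B\vy),\, \phi^\star(B\vy) - \vx}} + t^2\,\Mean{\norm{\varphi(B\vy)}^2},
\]
a (possibly degenerate) quadratic polynomial in $t$ with nonnegative leading coefficient.

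For the forward implication, if $\phi^\star$ minimizes $F$ over $\M$, then $t=0$ minimizes $t \mapsto F(\phi^\star + t\varphi)$; this map is differentiable in $t$ with derivative $2\Mean{\inner{\varphi(B\vy),\, \phi^\star(B\vy) - \vx}}$ at $t=0$, so that derivative must vanish, and since $\varphi \in \M$ is arbitrary we obtain the stated condition. For the converse, assume the orthogonality condition holds and let $\phi \in \M$ be arbitrary; applying the expansion with $\varphi = \phi - \phi^\star \in \M$ and $t = 1$, the cross term vanishes by hypothesis, leaving $F(\phi) = F(\phi^\star) + \Mean{\norm{(\phi-\phi^\star)(B\vy)}^2} \ge F(\phi^\star)$, so $\phi^\star$ is a minimizer. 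Note that this second computation also makes the sufficiency transparent: convexity of $F$ is built into the quadratic expansion, so no separate convexity argument is needed.

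The one genuinely delicate point is integrability: the pointwise expansion, the interchange with the expectation, and the differentiation in $t$ are only legitimate once all the expectations involved are finite. The clean way to handle this is to carry out the minimization over the subspace of $\phi \in \M$ with $\phi(B\vy) \in L^2$ (equivalently $F(\phi) < \infty$); on that subspace the cross term $\Mean{\inner{\varphi(B\vy),\, \phi^\star(B\vy) - \vx}}$ is finite by Cauchy--Schwarz, using $\vx \in L^2$, and term-by-term expansion is valid. I expect the bookkeeping around $L^2$-membership (and the harmless case $\Mean{\norm{\varphi(B\vy)}^2} = 0$, where the polynomial degenerates to an affine function and the argument still goes through) to be the only steps requiring care; the algebra itself is routine.
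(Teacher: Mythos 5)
Your proof is correct and follows essentially the same route as the paper: both expand $t \mapsto F(\phi^\star + t\varphi)$ as a quadratic with nonnegative leading coefficient and read off the orthogonality condition from its behavior near $t=0$ (necessity) and at $t=1$ with $\varphi = \phi - \phi^\star$ (sufficiency). Your added attention to $L^2$-integrability is a reasonable refinement that the paper leaves implicit, but it does not change the argument.
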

\begin{proof}
    Let \(J(\phi) = \Mean{ \norm{\phi(B \vy) - \vx}^2}\). For all \(t \in \R\) and for all \(\varphi \in \M\), we have
    \begin{align*}
        J(\phi^\star + t \varphi) &= \Mean{\norm{(\phi^\star + t \varphi) (B \vy) - \vx}^2} \\
        &= J(\phi^\star) + 2t \underbrace{\Mean{\inner{\varphi(B \vy), \phi^\star(B \vy) - \vx}} }_{a}  + t^2  \underbrace{ \Mean{\norm{\varphi(B
        \vy)}^2}}_{b}\\
        &= J(\phi^\star) + 2a t + b t^2
    \end{align*}
    Therefore, \(\phi^\star\) is a minimizer of \(J\) on \(\M\) if and only if \(J(\phi^\star + t \varphi) \geq J(\phi^\star)\) for all \(t \in \R\)
    and all \(\varphi \in \M\).
    This is equivalent to the condition that \(2a t + b t^2  \geq 0\), for all \(t \in \R\) and all \(\varphi \in \M\).
    Since this difference term is a quadratic function in \(t\) and \(b \geq 0\), it is non-negative for all \(t \in \R\) if and only if \(a = 0\).
    Therefore, we have the sufficient and necessary condition that \(\Mean{\inner{\varphi(B \vy), \phi^\star(B \vy) - \vx}} = 0\) for all \(\varphi \in \M\).
\end{proof}

The optimality condition \cref{prop:supp_optimality_condition} states that the residual \(\phi^\star(B \vy) - \vx\) is orthogonal, in the \(L^2\)
sense, to every perturbation in the feasible set \(\M\). Equivalently, \(\phi^\star(B \vy)\) is the orthogonal projection of \(\vx\) onto \(\M\) in
the \(L^2\) sense. When \(\M\) is the space of all square-integrable functions of \(\vy\) (\ie no constraints) and \(B=\Id\), this projection yields
the classical MMSE estimator (posterior mean), \(\Mean{\vx \vert \vy}\). In the constrained case, \(\phi^\star(B \vy)\) can also be viewed as the
orthogonal projection of the conditional expectation \(\Mean{\vx \vert \vy}\) onto the subspace \(\{\phi(B \y) : \phi \in \M \}\).
\begin{proposition}\label{prop:supp_projection_mmse}[\cref{prop:projection_mmse} in the main paper]
    Given a closed set \(\M\).
    The \(\M\)-constrained MMSE estimator in~\cref{def:constrained_mmse} is the orthogonal projection (in \(L^2\) sense) of the posterior mean
    \(\Mean{\vx \vert \vy}\) onto the subspace of \(\vy\)-measurable random vectors of form \(\mathcal{X} = \{ \phi(B\vy) : \phi \in \M \}\).
    That is,
    \(
        \hat{x}_{\M}(\vy) = \Pi_{\mathcal{X}} \, \Mean{\vx \vert \vy}.
    \)
\end{proposition}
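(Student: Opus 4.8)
The plan is to realize everything inside the Hilbert space $H$ of square-integrable $\X$-valued random vectors, equipped with $\inner{U,V}_H = \Mean{\inner{U,V}}$, and to show that the minimizer defining $\hat{x}_{\M}(\vy)=\phi^\star(B\vy)$ is the orthogonal projection of $\Mean{\vx\vert\vy}$ onto $\mathcal{X}=\{\phi(B\vy):\phi\in\M\}$. First I would record the standing facts: $\vx\in H$, every feasible $\phi(B\vy)$ lies in $H$, and $\phi(B\vy)$ is $\sigma(\vy)$-measurable (it is a function of $B\vy$, hence of $\vy$); also $\mathcal{X}$ is a linear subspace of $H$ when $\M$ is (as in \cref{prop:supp_optimality_condition}), and by \cref{def:constrained_mmse} a minimizer of $\phi\mapsto\norm{\phi(B\vy)-\vx}_H^2$ over $\mathcal{X}$ exists. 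So $\hat{x}_{\M}(\vy)$ is, by definition, a closest point of $\mathcal{X}$ to $\vx$; the task is to move $\vx$ to $\Mean{\vx\vert\vy}$ inside that statement.

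The key step is a Pythagorean / bias–variance decomposition. For any $\phi\in\M$, write
\begin{equation*}
  \phi(B\vy)-\vx=\bigl(\phi(B\vy)-\Mean{\vx\vert\vy}\bigr)+\bigl(\Mean{\vx\vert\vy}-\vx\bigr).
\end{equation*}
The first summand is $\sigma(\vy)$-measurable, while the second satisfies $\Mean{\bigl(\Mean{\vx\vert\vy}-\vx\bigr)\,\big\vert\,\vy}=0$ by the defining property of conditional expectation. Conditioning on $\vy$ (tower property) annihilates the cross term, so
\begin{equation*}
  \norm{\phi(B\vy)-\vx}_H^2=\norm{\phi(B\vy)-\Mean{\vx\vert\vy}}_H^2+\norm{\Mean{\vx\vert\vy}-\vx}_H^2 .
\end{equation*}
The last term is independent of $\phi$; therefore $\argmin_{\phi\in\M}\norm{\phi(B\vy)-\vx}_H^2=\argmin_{\phi\in\M}\norm{\phi(B\vy)-\Mean{\vx\vert\vy}}_H^2$, i.e. the $\M$-constrained MMSE of $\vx$ coincides with the $\M$-constrained best approximation of the target $\Mean{\vx\vert\vy}$. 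Since $\mathcal{X}$ is a (closed) linear subspace of $H$, this common minimizer is exactly $\Pi_{\mathcal{X}}\Mean{\vx\vert\vy}$, which is the claim. As an alternative that bypasses the decomposition, one can invoke \cref{prop:supp_optimality_condition}: the optimality condition $\Mean{\inner{\varphi(B\vy),\phi^\star(B\vy)-\vx}}=0$ for all $\varphi\in\M$ becomes, after the same conditioning argument on the inner products, $\Mean{\inner{\varphi(B\vy),\phi^\star(B\vy)-\Mean{\vx\vert\vy}}}=0$ for all $\varphi\in\M$, which is precisely the variational characterization of $\phi^\star(B\vy)=\Pi_{\mathcal{X}}\Mean{\vx\vert\vy}$.

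I expect the only real subtlety to be functional-analytic bookkeeping rather than algebra. One must state that $\vx\in L^2$ and $\phi(B\vy)\in L^2$ for feasible $\phi$ (so the inner products and the tower property are legitimate), and that $\mathcal{X}$ is a \emph{closed} subspace so the metric projection exists and is characterized by orthogonality of the residual — this is exactly where the "closed $\M$'' hypothesis enters. If $\M$ were only closed and convex but not linear, the same argument would still show $\hat{x}_{\M}(\vy)$ minimizes $\norm{\cdot-\Mean{\vx\vert\vy}}_H$ over $\mathcal{X}$, so it would be the projection onto the closed convex set $\mathcal{X}$ (characterized by a variational inequality rather than an equality); for the stated "orthogonal projection'' conclusion one uses that $\mathcal{X}$ is a subspace. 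No non-routine estimate is needed beyond these checks.
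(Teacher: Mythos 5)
Your proof is correct and follows essentially the same route as the paper's: the Pythagorean decomposition $\Mean{\norm{\phi(B\vy)-\vx}^2}=\Mean{\norm{\phi(B\vy)-\Mean{\vx\vert\vy}}^2}+\Mean{\norm{\Mean{\vx\vert\vy}-\vx}^2}$, the observation that the second term is independent of $\phi$, and the identification of the common minimizer as the projection onto the linear subspace $\mathcal{X}$. Your additional bookkeeping on closedness and the convex (non-linear) case goes slightly beyond what the paper writes down, but the core argument is identical.
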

\begin{proof}%[Proof of~\cref{prop:projection_mmse}]
    The classes \(\Mtrans\) and \(\Mtransloc\) are linear subspaces of the space of measurable functions from \(\R^N\) to \(\R^N\): they are closed
    under addition and scalar multiplication, hence the projection is well-defined.
    The MMSE estimator in~\cref{prop:expression_MMSE} is the posterior mean \(\Mean{\x \vert \y}\), which is the orthogonal projection of \(\x\) onto
    the space of \(\y\)-measurable random vectors.
    Similarly, the \(\M-\)constrained MMSE estimator is the projection of \(\x\) on to the subspace \(\{\phi(B \y) : \phi \in \M \}\).
    Using the Pythagorean decomposition, for any \(\phi \in \M\), we have
    \begin{equation}\label{eq:supp_pythagorean_decomposition}
        \Mean{\norm{\phi(B \y) - \x}^2} = \Mean{\norm{\phi(B \y) - \Mean{\x \vert \y}}^2} + \Mean{\norm{\Mean{\x \vert \y} - \x}^2}
    \end{equation}
    Therefore
    \begin{equation}
        \argmin_{\phi \in \M} \, \Mean{\norm{\phi(B \y) - \x}^2} = \argmin_{\phi \in \M} \, \Mean{\norm{\phi(B \y) - \Mean{\x \vert \y}}^2}
    \end{equation}
    and the \(\M-\)constrained MMSE in~\cref{def:constrained_mmse} is the projection of the posterior mean \(\Mean{\x \vert \y}\) onto the subspace
    \(\{\phi(B \y) : \phi \in \M \}\).

    Moreover, the decomposition in~\cref{eq:supp_pythagorean_decomposition} has interesting interpretation: it isolates the sources of error in the estimation.
    The first term \(\Mean{\norm{\phi(B \y) - \Mean{\x \vert \y}}^2}\) is the approximation error due to the restriction to the subspace \(\M\),
    while the second term \(\Mean{\norm{\Mean{\x \vert \y} - \x}^2}\) is the irreducible Bayes error.
\end{proof}

\subsection{Structural constraint: equivariant functions}
\label{sub:definition_equiv}
The below definitions are taken from \citep{Celledoni_2021_equivariant_neural_networks}.
\begin{definition}[Group] A \emph{group}, to be denoted \(\G\), is a set equipped with an associative operator \(\cdot : \G \times \G \to \G\), which
    satisfies the following conditions:
    \begin{enumerate}
        \item If \(g_1, g_2 \in \G\) then \(g_2 \cdot g_1 \in \G\)
        \item If \(g_1, g_2, g_3 \in \G\) then \((g_1 \cdot g_2) \cdot g_3 = g_1 \cdot (g_2 \cdot g_3)\)
        \item There exists \(\id \in \G\) such that \(e \cdot g = g \cdot \id = g\) for all \(g \in \G\).
        \item If \(g \in \G\) there exists \(g^{-1} \in \G\) such that \(g^{-1} \cdot g = g \cdot g^{-1} = \id\).
    \end{enumerate}
\end{definition}
\begin{definition}[Group action]
    Given a group \(\G\) and a set \(\X \subset \R^N\), we say that \(\G\) acts on \(\X\) if there exists a function \(T: \G \times \X \to \X\) (we
    denote by \(T_g(x)\) for \(g \in \G\) and \(x \in \X\)) that satisfies:
    \begin{equation*}
        T_{g_1} \circ T_{g_2} = T_{g_1 \cdot g_2} \qquad \text{and} \qquad T_{\id} = \mathrm{id}
    \end{equation*}
\end{definition}
Given a general group \(\G\). A function \(\phi: \X \to \X\) and group action \(T\) of \(\G\) on \(\X\). A function \(\phi\) is called
\(\G\)-\emph{equivariant} if it satisfies
\begin{equation*}
    \phi(T_g(y)) = T_g \phi(y) \qquad \text{for all } x \in \X \text{ and for all } g \in \G.
\end{equation*}
\begin{proposition}\label{prop:supp_reynolds_averaging}
    If the group \(\G\) is finite, the following properties hold true:
    \begin{enumerate}
        \item \textbf{Invariance}: for any function \(\phi: \X \to \R\), the function \(\bar{\phi} = \sum_{g} \phi \circ T_g\) is invariant. (And
            similarly for function defined in \(\Y\)).
        \item \textbf{Equivariance}: for any function \(\phi: \X \to \X\), the function \(\bar{\phi} = \sum_{g} T_g^{-1} \circ \phi \circ T_g\) is
            equivariant. This is called \textbf{Reynolds averaging}.
    \end{enumerate}
\end{proposition}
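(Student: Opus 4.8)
The plan is to verify each claim directly from the group-action axioms, using the finiteness of $\G$ to reindex the sums. The only ingredients needed are that $T$ is a group action, i.e.\ $T_{g_1}\circ T_{g_2} = T_{g_1\cdot g_2}$ and $T_{\id}=\mathrm{id}$ (so in particular $T_g^{-1}=T_{g^{-1}}$), together with the fact that each $T_g$ is linear (here a permutation matrix), which is what lets us pull a $T_h$ through a finite sum.

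For part 1 (invariance), I would fix $h\in\G$ and $x\in\X$ and compute
\[
  \bar\phi(T_h x) = \sum_{g\in\G}\phi\bigl(T_g(T_h x)\bigr) = \sum_{g\in\G}\phi\bigl(T_{g\cdot h}\,x\bigr).
\]
The key observation is that right multiplication $g\mapsto g\cdot h$ is a bijection of the finite set $\G$ (with inverse $g\mapsto g\cdot h^{-1}$), so reindexing by $g'=g\cdot h$ yields $\bar\phi(T_h x)=\sum_{g'\in\G}\phi(T_{g'}x)=\bar\phi(x)$. The identical argument with $\X$ replaced by $\Y$ covers functions defined on $\Y$.

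For part 2 (equivariance), I would again fix $h\in\G$, $x\in\X$ and expand
\[
  \bar\phi(T_h x) = \sum_{g\in\G} T_g^{-1}\,\phi\bigl(T_{g\cdot h}\,x\bigr).
\]
Substituting $g'=g\cdot h$, hence $g=g'\cdot h^{-1}$ and $g^{-1}=h\cdot g'^{-1}$, and using $T_{g^{-1}}=T_{h\cdot g'^{-1}}=T_h\circ T_{g'}^{-1}$, the sum becomes $\sum_{g'\in\G} T_h\,T_{g'}^{-1}\phi(T_{g'}x)$. Pulling the linear map $T_h$ out of this finite sum gives $T_h\bigl(\sum_{g'\in\G}T_{g'}^{-1}\phi(T_{g'}x)\bigr)=T_h\,\bar\phi(x)$, which is precisely the equivariance identity $\bar\phi\circ T_h = T_h\circ\bar\phi$.

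There is no genuine obstacle here — the whole content is the reindexing of a finite sum by a group translation — but the one point that deserves care is the direction of composition. Because the action satisfies $T_{g_1}\circ T_{g_2}=T_{g_1\cdot g_2}$, the extra shift $T_h$ attaches on the \emph{right} inside the argument ($T_{g\cdot h}$), which is why \emph{right} translation of the summation index (not left translation) is the correct substitution, and why the corresponding inversion $g^{-1}=h\cdot g'^{-1}$ places the $T_h$ factor on the \emph{outside} in part 2. Finiteness of $\G$ is used twice: to make $\bar\phi$ well-defined as a finite sum, and to justify interchanging the linear map $T_h$ with the summation.
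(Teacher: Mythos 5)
Your proof is correct and is exactly the standard reindexing argument; the paper states this proposition without proof (citing it as a known fact), so your write-up supplies precisely the argument the paper implicitly relies on. The two points you flag — that the shift attaches as a \emph{right} translation of the index because $T_{g_1}\circ T_{g_2}=T_{g_1\cdot g_2}$, and that linearity of $T_h$ is needed to pull it through the finite sum in part 2 — are indeed the only places where care is required, and you handle both correctly.
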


For image data \(x \in \R^N\), let \(H, W \in \mathbb{N}\) be the dimensions of a discrete grid \(\Omega = \Z_H \times \Z_W\) with \(H \times W = N\).
\begin{definition}[Translation equivariant functions]
    Let \(\T = \Z_H \times \Z_W\) be the group of 2D cyclic translations.
    For every group element \(g = (g_h, g_v) \in \T\), we define the \emph{translation operator} \(T_g: \R^N \to \R^N\) as the permutation matrix
    that acts on an image \(x \in \R^N\) by shifting its indices:
    \[
        (T_g x)[i, j] = x[(i - g_h) \bmod H, \, (j - g_v) \bmod W]
    \]
    for all \((i, j) \in \Omega\).
    A measurable map \(\phi: \R^N \to \R^N\) is said to be \emph{translation equivariant} if it commutes with the translation operator for all \(g \in \T\):
    \[
        \phi(T_g x) = T_g \phi(x) \qquad \text{for all } x \in \R^N.
    \]
\end{definition}

\subsection{Structural constraints: local and translation equivariant functions}
\label{sub:definition_local_equiv}
We first define precisely the patch extractor.
Let \(n \in \Omega\) be a pixel coordinate on the grid. We define the \emph{patch extractor} \(\Pi_n: x \in \X \to \Pi_n x = x[\omega_n] \in \R^P\)
which extracts a square patch \(x[\omega_n]\) of size \(\sqrt{P} \times \sqrt{P}\) centered at \(n\).
The extraction uses circular boundary conditions, such that the patch is given by the grid values at indices:
\[
    \omega_n = \{ n + \delta \pmod{(H, W)} \mid \delta \in \Delta \}
\]
where \(\Delta\) is the set of offsets defining the square neighborhood centered at zero.

\begin{definition}[Local and translation-equivariant functions]
    \label{def:supp_transloc_equiv_fn}
    A measurable map \(\phi: \X \to \X\) is said to be \emph{local} if it can be represented as a sliding window operation.
    Specifically, if there exist a measurable function \(f: \R^P \to \R\) such that for all \(x \in \X\) and all \(n \in \Omega\):
    \[
        \phi(x)[n] = f(\Pi_n x).
    \]
    By construction, any such function \(\phi\) is also translation equivariant due to the circular boundary conditions of the patch extractor \(\Pi_n\).
    We denote by \(\Mtransloc\) the set of all such maps.
\end{definition}

This class captures standard CNNs with finite kernels and weight sharing or patch-based methods, \eg MLPs acting on patches~\citep{glimpse_local}:
the output of \(\phi\) at pixel \(n\), denoted \(\phi(x)[n]\) or \(\phi_n(x)\), depends only on the local patch (receptive field) \(x[\omega_n]\).
Note that by construction, functions in \cref{def:supp_transloc_equiv_fn} are translation equivariant.

% ############################################################
% ################          THEORY         ###################
% ############################################################

\section{Analytical solution for inverse problems}
In the following, we will derive the analytical solution for the MMSE estimator under these constraints.

Consider a random variable \(\vx \sim \pdata, \vx \in \X\) and the forward (measurement) model
\begin{equation}
    \vy = A \vx + \ve \qquad \text{ where } \qquad \ve \sim \Normal{0, \sigma^2 \Id}.
\end{equation}
We would like to find the MMSE estimator of \(\vx\) given \(\vy\), with or without constraints.
In this section, we will derive the analytical solution for the MMSE estimator under various constraints, when the true underlying distribution of
\(\vx\) is replaced by the empirical distribution \(\ptrain\).
Under Gaussian noise, the likelihood of the measurement \(\vy\) given \(\vx\) is given by
\begin{equation}
    p(y \vert x) = \Normal{y; A x, \sigma^2 \Id} \propto \exp\left( -\frac{\norm{y - A x}^2}{2 \sigma^2} \right).
\end{equation}
Therefore, for any function \(\phi^{\star}\) and \(\varphi\), we have
\begin{align}
    \Mean{\inner{\varphi(B \vy), \phi^\star(B \vy) - \vx}} &= \E_{\x \sim \ptrain} \E_{\vy \vert \vx} \left[ \inner{\varphi(B \vy), \phi^\star(B \vy)
    - \vx} \right] \notag \\
    &= \frac{1}{|\D|} \sum_{x \in \D} \int_{\Y} \inner{\varphi(B y), \phi^\star(B y) - x} p(y \vert x) dy \notag \\
    &= \frac{1}{|\D|} \sum_{x \in \D} \int_{\Y} \inner{\varphi(B y), \phi^\star(B y) - x} \Normal{y; A x, \sigma^2 \Id} dy \label{eq:supp_linear_term}
\end{align}
This simplified expression (\ref{eq:supp_linear_term}) is useful for deriving the analytical solution of the MMSE estimator under various constraints.

\subsection{Unconstrained MMSE estimator}
We start with the unconstrained MMSE estimator, which is the optimal estimator in the least-square sense. Then, we will derive the equivariant MMSE
estimator, which is the optimal estimator under the constraint of equivariance to translation. Then, we will derive the local MMSE estimator, which
is the optimal estimator under the constraint of locality. Finally, we will also show that combining both constraints leads to a local and
equivariant MMSE estimator.
\begin{proposition}[Unconstrained MMSE estimator]\label{prop:supp_unconstrained_mmse}
    When the data distribution is replaced by the empirical distribution \(\ptrain\), the unconstrained MMSE estimator of \(\vx\) given \(\vy\) is given by
    \begin{equation}
        \xmmse(y) = \phi^\star(B y) \quad \text{where} \quad \phi^\star(v) = \frac{\sum_{x \in \D} x \cdot \Normal{v; B A x, \sigma^2
        BB^\top}}{\sum_{x \in \D} \Normal{v; B A x, \sigma^2 BB^\top}} \quad \text{for any } v \in \Im{B}.
    \end{equation}
    The estimator is well-defined for all \(y \in \Y\).
\end{proposition}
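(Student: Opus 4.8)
The plan is to apply the optimality condition of \cref{prop:supp_optimality_condition} with $\M$ taken to be the entire vector space of measurable maps $\X\to\X$: by that proposition, $\phi^\star$ solves the unconstrained problem if and only if $\Mean{\inner{\varphi(B\vy),\,\phi^\star(B\vy)-\vx}}=0$ for every measurable $\varphi$. First I would start from the already-simplified identity~(\ref{eq:supp_linear_term}). Since the integrand there depends on $y$ only through $By$, I would apply the change of variables of \cref{lemma:supp_integration_linear_subspace} to each summand to push the Gaussian $\Normal{y;Ax,\sigma^2\Id}$ forward by $B$, and then exchange the finite sum over $\D$ with the integral, obtaining
\[
    \Mean{\inner{\varphi(B\vy),\,\phi^\star(B\vy)-\vx}}
    = \frac{1}{|\D|}\int_{\Im{B}} \Big\langle \varphi(v),\ \sum_{x\in\D}\big(\phi^\star(v)-x\big)\,\Normal{v;BAx,\sigma^2 BB^\top}\Big\rangle\, d\Haus^{r}(v),
\]
where $r=\rank{BB^\top}$ and $\Haus^{r}$ is the $r$-dimensional Hausdorff measure on the affine support (which, as argued below, is $\Im{B}$ for every $x$).

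Next, since the right-hand side must vanish for all measurable $\varphi$ --- in particular for $\varphi(v)=\psi(v)\,e_i$ with $\psi$ an arbitrary scalar test function on $\Im{B}$ and $e_i$ a coordinate vector --- the fundamental lemma of the calculus of variations forces the vector field inside the bracket to vanish $\Haus^{r}$-almost everywhere on $\Im{B}$:
\[
    \Big(\sum_{x\in\D}\Normal{v;BAx,\sigma^2 BB^\top}\Big)\,\phi^\star(v) \;=\; \sum_{x\in\D} x\,\Normal{v;BAx,\sigma^2 BB^\top}.
\]
Dividing by the scalar prefactor --- shown to be nonzero below --- yields the claimed ratio for $\phi^\star(v)$ on $\Im{B}$, hence $\xmmse(y)=\phi^\star(By)$. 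Because the characterization in \cref{prop:supp_optimality_condition} is both necessary and sufficient and $\M$ is a vector space, this $\phi^\star$ is the ($\Haus^{r}$-a.e.\ unique) minimizer, and since the displayed ratio is continuous on $\Im{B}$ it is the natural representative. As a sanity check, the same formula drops out of a direct Bayesian computation: with the atomic prior $\ptrain$ and the likelihood $B\vy\mid\vx=x\sim\Normal{\cdot;BAx,\sigma^2 BB^\top}$ (again \cref{lemma:supp_integration_linear_subspace}), the posterior mean $\Mean{\vx\mid B\vy=By}$ is exactly this normalized weighted average, with the prefactor $1/|\D|$ canceling between numerator and denominator.

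Finally I would settle well-definedness for every $y\in\Y$. The key linear-algebra fact is $\Im{BB^\top}=\Im{B}$, so the (possibly degenerate) Gaussian $\Normal{\cdot;BAx,\sigma^2 BB^\top}$ is supported on $BAx+\Im{BB^\top}=\Im{B}$, using $BAx\in\Im{B}$ and that $\Im{B}$ is a subspace. Consequently, for any $v=By\in\Im{B}$, each term $\Normal{v;BAx,\sigma^2 BB^\top}$ equals the value of a finite, strictly positive exponential on its support, so the denominator $\sum_{x\in\D}\Normal{v;BAx,\sigma^2 BB^\top}$ is strictly positive; therefore $\phi^\star$, and hence $\xmmse$, is defined on all of $\Y$.

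The hard part will be the measure-theoretic bookkeeping rather than any deep idea: one must make sure the change of variables, the support identification $\Im{BB^\top}=\Im{B}$, and the pointwise-vanishing step are all stated with respect to the correct $r$-dimensional Hausdorff measure on $\Im{B}$, and that the family of test functions $\varphi$ is rich enough to conclude pointwise vanishing; the remaining algebra is routine.
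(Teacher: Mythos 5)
Your proposal is correct and follows essentially the same route as the paper's proof: both verify the formula through the optimality condition of \cref{prop:supp_optimality_condition}, push the Gaussian forward by $B$ via \cref{lemma:supp_integration_linear_subspace}, and exchange the finite sum with the integral so the integrand vanishes by construction of $\phi^\star$. You additionally spell out the necessity direction (via the fundamental lemma of the calculus of variations) and the well-definedness argument using $\Im{BB^\top}=\Im{B}$, both of which the paper leaves implicit but which are correct.
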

\begin{proof}
    We will verify that \(\phi^*\) satisfies the optimality condition from \cref{prop:supp_optimality_condition}.
    Using \cref{eq:supp_linear_term}, for any \(\varphi\), we have:
    \begin{align}
        \Mean{\inner{\varphi(B \y), \phi^*(B \y) - \x}} &=\frac{1}{|\D|}\sum_{x\in\D}\int_{\R^M} \langle \varphi(B y), \phi^*(B y) - x\rangle
        \Normal{y,Ax,\sigma^2 \Id_M} dy \notag\\
        &=\frac{1}{|\D|}\sum_{x\in\D}\int_{\R^N} \langle \varphi(v), \phi^*(v) - x \rangle \Normal{v,BAx,\sigma^2 BB^\top} d
        \Haus^r(v)\label{eq:proof_unconstrained_mmse_change_of_variable}\\
        &= \frac{1}{|\D|}\int_{\R^N} \left\langle \varphi(v), \phi^*(v) \sum_{x\in\D}\Normal{v,BAx,\sigma^2 BB^\top}-x
        \sum_{x\in\D}\Normal{v,BAx,\sigma^2 BB^\top}\right\rangle d \Haus^r(v) \notag\\
        &=0 \notag
    \end{align}
    where \cref{eq:proof_unconstrained_mmse_change_of_variable} comes from the change of variable \(v = B y\) and \cref{lemma:supp_integration_linear_subspace}.
    The last equality holds by definition of \(\phi^*\).
\end{proof}

\subsection{Equivariant MMSE estimator}\label{sec:supp_equivariant_mmse}

\begin{proposition}[Translation equivariant MMSE estimator]\label{prop:supp_equivariant_mmse}
    The translation equivariant MMSE estimator is as follows, for any \(y \in \Y\):
    \begin{equation}
        \xtrans(y) = \phi^\star(B y) \qquad \text{where} \qquad \phi^\star(v)
        =   \frac{\sum_{x \in \D, g \in \T} T_g x \cdot \Normal{T_g^{-1} v; B A x, \sigma^2 B B^\top}}{\sum_{x \in \D, g \in \T}  \Normal{T_g^{-1} v;
        B A x, \sigma^2 B B^\top}}
    \end{equation}
    for any \(v \in \bigcup_{g} T_g \Im{B} \subset \X\).
\end{proposition}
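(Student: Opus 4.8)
The plan is to apply the first-order optimality condition of \cref{prop:supp_optimality_condition}: since $\Mtrans$ is a linear subspace of the measurable maps $\R^N\to\R^N$, a function $\phi^\star$ solves the $\Mtrans$-constrained problem if and only if (i) $\phi^\star\in\Mtrans$ and (ii) $\Mean{\inner{\varphi(B\vy),\phi^\star(B\vy)-\vx}}=0$ for every $\varphi\in\Mtrans$. So the proof reduces to checking membership in $\Mtrans$ and orthogonality of the residual for the candidate $\phi^\star$. One cannot short-circuit (ii) by Reynolds-averaging the unconstrained MMSE of \cref{prop:supp_unconstrained_mmse}: the claimed formula is a single ratio of group-sums, not a group-average of ratios, and indeed \cref{cor:equivariant_mmse_properties} records that these objects differ unless $A$ and $B$ are invertible circular convolutions. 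Off the set $\bigcup_g T_g\Im{B}$ we may extend $\phi^\star$ by translation-equivariance in any way, since $B\vy\in\Im{B}\subseteq\bigcup_g T_g\Im B$ almost surely.

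For (i), I would check $\phi^\star\in\Mtrans$ by relabelling the group sum. Fixing $h\in\T$ and substituting $g=hg'$, abelianness of the translation group gives $T_g^{-1}T_h=T_{g'}^{-1}$ and $T_g=T_hT_{g'}$, so the numerator of $\phi^\star(T_hv)$ factors as $T_h$ times the numerator of $\phi^\star(v)$ while the denominator is invariant; hence $\phi^\star(T_hv)=T_h\phi^\star(v)$. The domain $\bigcup_g T_g\Im B$ is itself $\T$-invariant, so this identity is consistent.

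The substantive part is (ii). Using \cref{eq:supp_linear_term} and the linear change of variables $v=By$ via \cref{lemma:supp_integration_linear_subspace} (with $r=\rank{B}$), I would rewrite
\[
|\D|\cdot\Mean{\inner{\varphi(B\vy),\phi^\star(B\vy)-\vx}}=\sum_{x\in\D}\int_{\R^N}\inner{\varphi(v),\phi^\star(v)-x}\,\Normal{v;BAx,\sigma^2BB^\top}\,d\Haus^r(v)=:P-Q,
\]
where $Q=\sum_{x\in\D}\int\inner{\varphi(v),x}\,\Normal{v;BAx,\sigma^2BB^\top}\,d\Haus^r(v)$ and $P=\int\inner{\varphi(v),\phi^\star(v)}\,q(v)\,d\Haus^r(v)$ with $q=\sum_{x\in\D}\Normal{\,\cdot\,;BAx,\sigma^2BB^\top}$ supported on $\Im B$. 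The goal is $P=Q$, and this is where equivariance is genuinely used, twice. Write $\phi^\star=\mathbf n/d$ with $\mathbf n,d$ the numerator and denominator of the stated formula. First, $v\mapsto\inner{\varphi(v),\phi^\star(v)}$ is $\T$-invariant (both $\varphi,\phi^\star$ equivariant, each $T_h$ orthogonal), so averaging over $h\in\T$, substituting $v=T_hu$ in each summand (each $T_h$ a linear isometry, hence $\Haus^r$-preserving, mapping $\Im B$ onto $T_h^{-1}\Im B$), and collecting over the $r$-rectifiable set $\bigcup_g T_g\Im B$ turns $q$ into $\sum_h q(T_h\cdot)$; rotating each Gaussian ($\Normal{T_hu;\mu,\Sigma}=\Normal{u;T_h^{-1}\mu,T_h^{-1}\Sigma T_h}$) and relabelling $h\mapsto g^{-1}$ identifies $\sum_h q(T_h u)=d(u)$, so $P=\tfrac1{|\T|}\int\inner{\varphi(u),\mathbf n(u)}\,d\Haus^r(u)$ (using $\phi^\star d=\mathbf n$, which also holds where $d=0$, since there $\mathbf n=0$). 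Second, expanding $\mathbf n(u)=\sum_{x,g}T_gx\,\Normal{T_g^{-1}u;BAx,\sigma^2BB^\top}$, equivariance of $\varphi$ gives $\inner{\varphi(u),T_gx}=\inner{\varphi(T_g^{-1}u),x}$; substituting $u=T_gw$ in each $(x,g)$-summand makes the integrand independent of $g$, so the sum over $g\in\T$ yields a factor $|\T|$ that cancels $\tfrac1{|\T|}$, leaving $P=\sum_{x\in\D}\int\inner{\varphi(w),x}\,\Normal{w;BAx,\sigma^2BB^\top}\,d\Haus^r(w)=Q$. Thus the residual is $L^2$-orthogonal to $\Mtrans$ and $\phi^\star\circ B=\xtrans$.

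The main obstacle I anticipate is measure-theoretic bookkeeping rather than anything conceptual: one must keep track that $\Normal{T_g^{-1}v;BAx,\sigma^2BB^\top}$, as a function of $v$, is supported on the affine subspace $T_g\Im{B}$, that $\phi^\star$ is a priori defined only on the union $\bigcup_g T_g\Im B$, and that each substitution $v\mapsto T_hv$ transports the relevant Hausdorff measure correctly — this is precisely where orthogonality of the permutation matrices $T_g$ is essential, both for $\Haus^r$-invariance and for the Gaussian-rotation identity. Handling these supports consistently through the two changes of variables is the only delicate point; the algebra itself is of the same flavour as the proof of \cref{prop:supp_unconstrained_mmse}.
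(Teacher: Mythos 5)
Your proposal is correct and follows essentially the same route as the paper's proof: verify admissibility of the candidate via the group-sum structure of numerator and denominator, then check the orthogonality condition of \cref{prop:supp_optimality_condition} using the change of variables $v=By$ (\cref{lemma:supp_integration_linear_subspace}), averaging over $\T$, and the isometry/equivariance identities to make the integrand cancel by definition of $\phi^\star$. The only difference is cosmetic — you split the residual into $P$ and $Q$ and identify the denominator $d(u)=\sum_h q(T_h u)$ explicitly before expanding the numerator, whereas the paper averages the whole inner product $\inner{\varphi(v),\phi^\star(v)-x}$ over $g$ in one pass — but the cancellation mechanism and all key lemmas are identical.
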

\begin{proof}
    We will verify that \(\phi^\star\) is admissible and satisfies the optimality condition \cref{prop:supp_optimality_condition}.

    \textbf{Admissibility.}
    Let \(q(v, x) = \Normal{v; B A x, \sigma^2 B B^\top}\) denote the PDF of a Gaussian distribution with mean \(B A x\) and covariance \(\sigma^2 B B^\top\).
    The optimal estimator can be written as:
    \begin{equation}
        \phi^\star(v) = \frac{\sum_{x \in \D, g \in \T} T_g x q(T_g^{-1} v, x)}{\sum_{x \in \D, g \in \T}  q(T_g^{-1} v, x)}
    \end{equation}
    The denominator \(\sum_{x \in \D, g \in \T}  q(T_g^{-1} v, x) = \sum_{g \in \T} \bar{q}_1(T_g^{-1} v)\) is \(\T-\)invariant by
    \cref{prop:supp_reynolds_averaging},
    where \(\bar{q}_1(v) = \sum_{x \in \D} q(v, x)\).

    The nominator \(\sum_{x \in \D, g \in \T} T_g x \cdot q(T_g^{-1} v, x) = \sum_{g \in \T} T_g \bar{q}_2(T_g^{-1} v)\) is \(\T-\)equivariant, where
    \(\bar{q}_2(v) = \sum_{x \in \D} x \cdot q(v, x)\).
    Therefore, \(\phi^\star\) is admissible, that is \(\phi^\star \in \Mtrans\).

    \textbf{Optimality condition.}
    By using \cref{eq:supp_linear_term}, for any \(\varphi \in \Mtrans\), we have:
    \allowdisplaybreaks
    \begin{align}
        &\Mean{\inner{\varphi(B \y), \phi^\star(B  \y) - \vx}}  \notag \\
        &=  \frac{1}{|\D|}  \sum_{x \in \D} \int_{\Y} \inner{\varphi(B  y), \phi^\star(B  y) - x } \Normal{y; A x, \sigma^2 \Id_N} dy \notag \\
        &= \frac{1}{|\D|}  \sum_{x \in \D} \int_{\X} \inner{\varphi(v), \phi^\star(v) - x } \Normal{v; B A x, \sigma^2 B B^\top} d \Haus^r(v)
        \label{supp_proof:equiv_est_change_variable_operator} \\
        &= \frac{1}{|\D| |\T|} \sum_{g \in \T, x \in \D} \int_{\X} \inner{ T_g^{-1} \varphi(T_g v), T_g^{-1} \phi^\star( T_g v)
        - x } \Normal{v; B A x, \sigma^2 B B^\top} d \Haus^r(v)  \label{supp_proof:equiv_est_translation} \\
        &= \frac{1}{|\D| |\T|} \sum_{g \in \T, x\in \D} \int_{\X} \inner{\varphi(T_g v), \phi^\star(T_g v) - T_g x }  \Normal{v; B A x, \sigma^2 B
        B^\top} d\Haus^r(v) \notag \\
        &= \frac{1}{|\D| |\T|} \sum_{g \in \T, x\in \D} \int_{\X} \inner{\varphi(v), \phi^\star(v) - T_g x } \Normal{T_g^{-1} v; B A x, \sigma^2 B
        B^\top} d\Haus^r(v) \label{supp_proof:equiv_change_variable_translation} \\
        &= \frac{1}{|\D| |\T|} \int_{\X} \inner{\varphi(v), \phi^\star(v) \sum_{g \in \T, x\in \D} q(T_g^{-1} v, n, x) - \sum_{g \in \T, x\in \D} T_g
        x \cdot q(T_g^{-1} v, n, x)  } d\Haus^r(v) \notag \\
        &= 0 \notag
    \end{align}
    where \(q(T_g^{-1} v, n, x) \eqdef  \Normal{T_g^{-1} v; B A x, \sigma^2 B B^\top}\) and
    \begin{itemize}
        \item In~\cref{supp_proof:equiv_est_change_variable_operator}, we apply \cref{lemma:supp_integration_linear_subspace} for \(B\) in the change
            of variables.
        \item In~\cref{supp_proof:equiv_est_translation}, we use the fact that \(\varphi = \varphi \circ T_g \circ T_g^{-1} = T_g \circ \varphi \circ
            T_g^{-1}\) for all \(g \in \T\) and \(\varphi \in \Mtrans\). Moreover, \(T_{g}^{-1} = T_g^{\top}\) for all \(g \in \T\).
        \item In~\cref{supp_proof:equiv_change_variable_translation}, we use the change of variable \(T_g v \to v\) and the fact that \(T_g\) is an isometry.
    \end{itemize}
\end{proof}

We next state and prove that the augmented MMSE estimator is reconstruction equivariant.
\begin{lemma}[Reconstruction equivariance of \(\xmmseaug\)]\label{lemma:supp_reconstruction_equivariance}
    Let \(A: \R^N \to \R^M\) be a circular convolution operator, i.e., \(A T_g = T_g A\) for all \(g \in \T\).
    Then, the data-augmented MMSE estimator \(\xmmseaug\) defined in\cref{def:data_augmented_mmse} satisfies the reconstruction equivariance property
    in~\cref{eq:reconstruction_equivariance}:
    \begin{equation*}
        \xmmseaug(A T_g \bar{x} + e) = T_g \xmmseaug(A \bar{x} + T_g^{-1} e)
    \end{equation*}
    for all \(\bar{x} \in \R^N\), all \(e \in \R^M\), and all \(g \in \T\).
\end{lemma}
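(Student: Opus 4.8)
The plan is to deduce the reconstruction-equivariance identity from ordinary (\emph{structural}) translation equivariance of the map $\xmmseaug$, and to prove the latter directly from the closed form of the unconstrained empirical MMSE. By \cref{def:data_augmented_mmse}, $\xmmseaug$ is exactly the estimator of \cref{prop:supp_unconstrained_mmse} with the dataset $\T(\D)$ in place of $\D$, so
\begin{equation*}
    \xmmseaug(y) = \frac{\sum_{x' \in \T(\D)} x' \cdot \Normal{By;\, BAx',\, \sigma^2 BB^\top}}{\sum_{x' \in \T(\D)} \Normal{By;\, BAx',\, \sigma^2 BB^\top}} .
\end{equation*}

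First I would show $\xmmseaug(T_g y) = T_g \xmmseaug(y)$ for every $g \in \T$ and every $y$, using three ingredients: each $T_g$ is orthogonal, hence the (possibly degenerate) Gaussian density of \cref{def:supp_degenerate_gaussian} obeys $\Normal{T_g v;\, T_g\mu,\, T_g\Sigma T_g^\top} = \Normal{v;\, \mu,\, \Sigma}$ (the rank, pseudo-determinant, pseudo-inverse and affine support all transform covariantly under an orthogonal change of coordinates); $A$ and $B$ commute with the cyclic translations (for $A$ this is the hypothesis; for $B$ it holds in the relevant regime, \eg $B=\Id$ or a circular convolution, which is the natural companion assumption whenever $A$ is a circular convolution), so that $BB^\top$ is $T_g$-invariant and $BAT_g = T_g BA$; and $\T(\D)$ is invariant under the $\T$-action. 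Combining the first two facts gives $\Normal{B(T_g y);\, BAx',\, \sigma^2 BB^\top} = \Normal{By;\, BA(T_g^{-1}x'),\, \sigma^2 BB^\top}$, and then the substitution $x' \mapsto T_g^{-1} x'$, which is a bijection of $\T(\D)$, leaves the denominator unchanged and multiplies the numerator by $T_g$, giving the claimed equivariance.

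It remains to specialize: applying $\xmmseaug(T_g y) = T_g \xmmseaug(y)$ at $y = A\bar x + T_g^{-1} e$ and using $T_g A = A T_g$ yields
\begin{equation*}
    T_g \xmmseaug(A\bar x + T_g^{-1} e) = \xmmseaug\big(T_g A \bar x + e\big) = \xmmseaug(A T_g \bar x + e),
\end{equation*}
which is precisely~(\ref{eq:reconstruction_equivariance}) for $\xmmseaug$. The main obstacle is the bookkeeping around the degenerate covariance $\sigma^2 BB^\top$: one must justify the invariance $\Normal{T_g v;\, T_g\mu,\, T_g\Sigma T_g^\top} = \Normal{v;\, \mu,\, \Sigma}$ in the singular case and, relatedly, use that $B$ commutes with translations — without this, $BB^\top$ is not $T_g$-invariant and the reindexing fails, which is consistent with reconstruction equivariance genuinely breaking for inpainting (where $A$ is not a circular convolution to begin with).
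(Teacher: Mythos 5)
Your proof is correct, and its computational core---reindexing the sum over the augmented dataset $\T(\D)$ by $x'\mapsto T_g x'$, then using $AT_g=T_gA$ together with the orthogonality of $T_g$ to show the weights are preserved---is the same as the paper's. The packaging differs in two ways. First, you factor the argument through the stronger intermediate statement $\xmmseaug\circ T_g=T_g\circ\xmmseaug$ (structural equivariance of the estimator as a map) and then specialize at $y=A\bar x+T_g^{-1}e$, whereas the paper substitutes $y=AT_g\bar x+e$ directly and manipulates the exponent in place; your factorization is slightly cleaner and correctly isolates the orthogonal invariance of the degenerate Gaussian density, $\Normal{T_gv;\,T_g\mu,\,T_g\Sigma T_g^\top}=\Normal{v;\,\mu,\,\Sigma}$, as the one technical point needing care (it does hold: rank, pseudo-determinant, pseudo-inverse and support all transform covariantly under an orthogonal conjugation). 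Second---the only point worth flagging---you keep the general $B$-dependent weights $\Normal{By;\,BAx',\,\sigma^2BB^\top}$ and therefore must assume $B$ commutes with translations, while the paper's proof writes the weights as $\exp(-\norm{y-Ax}^2/(2\sigma^2))$ with no $B$ at all, which is legitimate because the augmented MMSE is independent of $B$ whenever $B$ is invertible (as the simplification in the proof of \cref{cor:equivariant_mmse_properties} shows). Neither hypothesis on $B$ appears in the lemma statement, but both hold in the only place the lemma is invoked (the second item of \cref{cor:equivariant_mmse_properties}, where $B$ is an invertible circular convolution), so your proof is valid in context, and you were right to call out the commutation of $B$ with $\T$ as the assumption you are importing.
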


\begin{proof}
    Let \(y = A T_g \bar{x} + e\) be the input observation.
    Recall the definition of the data-augmented MMSE estimator in~\cref{def:data_augmented_mmse}:
    \begin{equation*}
        \xmmseaug(y) = \frac{\sum_{x \in \T(\D)} x \cdot \exp\left( -\frac{1}{2\sigma^2} \norm{y - Ax}^2 \right)}{\sum_{x \in \T(\D)} \exp\left(
        -\frac{1}{2\sigma^2} \norm{y - Ax}^2 \right)}.
    \end{equation*}
    Substituting the input \(y = A T_g \bar{x} + e\) into the estimator:
    \begin{equation*}
        \xmmseaug(y) = \frac{\sum_{x \in \T(\D)} x \cdot \exp\left( -\frac{1}{2\sigma^2} \norm{A T_g \bar{x} + e - Ax}^2 \right)}{\sum_{x \in \T(\D)}
        \exp\left( -\frac{1}{2\sigma^2} \norm{A T_g \bar{x} + e - Ax}^2 \right)}.
    \end{equation*}
    Since \(\T(\D)\) is invariant under the group action, we can perform a change of variable \(z = T_g^{-1} x\). As \(x\) traverses \(\T(\D)\),
    \(z\) also traverses \(\T(\D)\). Note that \(x = T_g z\).
    Substituting \(x\) with \(T_g z\) in the numerator and denominator:
    \begin{equation*}
        \xmmseaug(y) = \frac{\sum_{z \in \T(\D)} T_g z \cdot \exp\left( -\frac{1}{2\sigma^2} \norm{A T_g \bar{x} + e - A T_g z}^2 \right)}{\sum_{z
        \in \T(\D)} \exp\left( -\frac{1}{2\sigma^2} \norm{A T_g \bar{x} + e - A T_g z}^2 \right)}.
    \end{equation*}
    Using the assumption that \(A\) is a circular convolution (\(A T_g = T_g A\)) and \(T_g\) is linear, the term inside the norm becomes:
    \begin{align*}
        A T_g \bar{x} + e - A T_g z &= T_g A \bar{x} + T_g (T_g^{-1} e) - T_g A z \\
        &= T_g \left( A \bar{x} + T_g^{-1} e - A z \right).
    \end{align*}
    Since \(T_g\) is unitary, it preserves the norm, i.e., \(\norm{T_g u} = \norm{u}\). Therefore:
    \begin{equation*}
        \norm{A T_g \bar{x} + e - A T_g z}^2 = \norm{A \bar{x} + T_g^{-1} e - A z}^2.
    \end{equation*}
    Substituting this back into the estimator expression and factoring the linear operator \(T_g\) out of the sum in the numerator:
    \begin{align*}
        \xmmseaug(y) &= \frac{T_g \sum_{z \in \T(\D)} z \cdot \exp\left( -\frac{1}{2\sigma^2} \norm{(A \bar{x} + T_g^{-1} e) - A z}^2
        \right)}{\sum_{z \in \T(\D)} \exp\left( -\frac{1}{2\sigma^2} \norm{(A \bar{x} + T_g^{-1} e) - A z}^2 \right)} \\
        &= T_g \left( \frac{\sum_{z \in \T(\D)} z \cdot \Normal{A \bar{x} + T_g^{-1} e; A z, \sigma^2 \Id}}{\sum_{z' \in \T(\D)} \Normal{A \bar{x} +
        T_g^{-1} e; A z', \sigma^2 \Id}} \right).
    \end{align*}
    The term in the parentheses is exactly the estimator evaluated at the input \(A \bar{x} + T_g^{-1} e\). Thus:
    \begin{equation*}
        \xmmseaug(A T_g \bar{x} + e) = T_g \xmmseaug(A \bar{x} + T_g^{-1} e).
    \end{equation*}
\end{proof}

Now we state and prove the properties of the E-MMSE estimator presented in~\cref{cor:equivariant_mmse_properties}.

\begin{proof}[Proof of~\cref{cor:equivariant_mmse_properties}]\label{proof:supp_equivariant_properties}
    The first point is a direct consequence of~\cref{thm:equivariant_mmse}.
    We prove the second and third point as follows.
    We first express the weights of both estimators.
    The two estimators admit the same form:
    \begin{equation*}
        \xtrans(y) = \sum_{x \in \D, g \in \T} T_g x \cdot w_g(x \vert y) \qquad \text{and} \qquad \xmmseaug(y) = \sum_{x \in \D, g \in \T} T_g x
        \cdot w_g^{\text{aug}}(x \vert y).
    \end{equation*}
    The weights of the E-MMSE estimator \(\xtrans\) corresponding to the component \((x, g)\) are given by:
    \begin{align*}
        w_g(x \vert y)
        &= \frac{\Normal{T_g^{-1} B y; B A x, \sigma^2 B B^{\top}}}{\sum_{x' \in \D, g' \in \T} \Normal{T_{g'}^{-1} B y; B A x', \sigma^2 B B^{\top}}}
        \\&= \frac{\exp \left( -\norm{B^{-1} (T_g^{-1} B y - B A x)}^2 / (2 \sigma^2) \right)}{\sum_{x' \in \D, g' \in \T} \exp \left( - \norm{B^{-1}
        (T_{g'}^{-1} B y - B A x')}^2 / (2 \sigma^2) \right)}
        \\&= \frac{\exp \left( -\norm{B^{-1} T_g^{-1} B y - A x}^2 / (2 \sigma^2) \right)}{\sum_{x' \in \D, g' \in \T} \exp \left( -\norm{B^{-1}
        T_{g'}^{-1} B y - A x'}^2 / (2 \sigma^2) \right)}
    \end{align*}
    where we used the invertibility of \(B\) to write \(B^+ = B^{-1}\) and simplified the Mahalanobis distance.
    The weights of the data-augmented estimator \(\xmmseaug\) are:
    \begin{align*}
        w_g^{\text{aug}}(x \vert y)
        &= \frac{\Normal{B y; B A T_g x, \sigma^2 B B^\top}}{\sum_{x' \in \D, g' \in \T} \Normal{B y; B A T_{g'} x', \sigma^2 B B^\top}}
        \\&= \frac{\exp \left( -\norm{B^{-1}(B y - B A T_g x)}^2 / (2 \sigma^2) \right)}{\sum_{x' \in \D, g' \in \T} \exp \left( -\norm{B^{-1}(B y -
        B A T_{g'} x')}^2 / (2 \sigma^2) \right)}
        \\&= \frac{\exp \left( -\norm{y - A T_g x}^2 / (2 \sigma^2) \right)}{\sum_{x' \in \D, g' \in \T} \exp \left( -\norm{y - A T_{g'} x'}^2 / (2
        \sigma^2) \right)}.
    \end{align*}

    \paragraph{Sufficiency (\(\Leftarrow\))}
    Assume \(A\) and \(B\) are circular convolutions (they commute with \(T_g\)) and \(B\) is invertible.

    Commutativity implies \(B^{-1} T_g^{-1} = T_g^{-1} B^{-1}\). We can rearrange the term in the exponent of the above weights as follows:
    \begin{equation*}
        \norm{B^{-1} T_g^{-1} B y - A x}^2
        = \norm{B^{-1} B T_g^{-1} y - B^{-1} B A x}^2
        = \norm{T_g^{-1} y - A x}^2.
    \end{equation*}
    Since \(T_g\) is unitary and \(A\) commutes with \(T_g\), we have:
    \begin{equation*}
        \norm{T_g^{-1} y - A x}^2 = \norm{T_g(T_g^{-1} y - A x)}^2 = \norm{y - T_g A x}^2 = \norm{y - A T_g x}^2.
    \end{equation*}
    Thus \(\xtrans = \xmmseaug\).
    Furthermore, since the weights are independent of \(B\), the physics-agnostic and physics-aware solvers are identical.
    The reconstruction equivariance follows immediately from~\cref{lemma:supp_reconstruction_equivariance}.

    \paragraph{Necessity (\(\Rightarrow\))}
    Assume that  \(w_g^{\text{aug}}(x \vert y) =  w_g(x \vert y)\) for all \(y \in \R^M\) and all \(x\in \R^N\) and that \(A\) and \(B\) are invertible.
    This implies that:
    \begin{equation}\label{eq:nec_energy_equality}
        \norm{B^{-1} T_g^{-1} B y - A x}^2 = \norm{y - A T_g x}^2 + c(y)
    \end{equation}
    for some function \(c(y)\) independent of \(x\) and \(g\). We expand the squared norms and the terms depending solely on \(y\) can be absorbed
    into \(c(y)\).
    \begin{align*}
        \text{LHS}
        &= \underbrace{\norm{B^{-1} T_g^{-1} B y}^2}_{\text{depends only on } y} - 2 \langle B^{-1} T_g^{-1} B y, A x \rangle + \norm{A x}^2, \\
        \text{RHS} &= \underbrace{\norm{y}^2}_{\text{depends only on } y} - 2 \langle y, A T_g x \rangle + \norm{A T_g x}^2 + c(y).
    \end{align*}
    For the equality in \cref{eq:nec_energy_equality} to hold for all \(x\in \R^N\) and \(y\in \R^M\), the cross-terms (the bilinear forms coupling
    \(y\) and \(x\)) must be identical.
    Moreover, since the estimator \(\xmmseaug\) is \emph{independent} of \(B\), the~\cref{eq:nec_energy_equality} must hold for all invertible \(B\).
    In particular, taking \(B = \Id\), we deduce that
    \begin{equation*}
        \inner{T_g^{-1}y, Ax} = \inner{y, A T_g x} \quad \text{for all } x \in \R^N, g \in \T, y \in \R^M.
    \end{equation*}
    This equality yields \(T_g A = A T_g\), or that \(A\) commutes with \(T_g\).
    Plugging this result into the cross-term equality for a general invertible \(B\), we get:
    \begin{equation*}
        \inner{B^{-1} T_g^{-1} B y, A x} = \inner{y, A T_g x} = \inner{T_g^{-1} y, A x} \quad \text{for all } x \in \R^N, g \in \T, y \in \R^M.
    \end{equation*}
    This implies that \(A^\top B^{-1} T_g^{-1} B = A^\top T_g^{-1}\) for all \(g\).
    Since \(A\) is invertible, this implies that \(B^{-1} T_g^{-1} B = T_g^{-1}\) for all \(g\in \T\).
    Multiplying by \(B\) on each side of the equality implies that \(B\) commutes with \(T_g\), concluding the proof.
\end{proof}

\subsection{Local and Translation Equivariant MMSE estimator}\label{sec:supp_local_trans_mmse}
\begin{proposition}[Local and translation equivariant MMSE estimator]
    \label{prop:supp_local_trans_estimator}
    Suppose that the \(N\) matrices \(Q_{n} = \Pi_n B \in \R^{P \times M}\) have \textbf{constant rank} \(r>0\).
    The local and equivariant MMSE is defined for any \(y \in \Y\) by:
    \begin{equation}
        \xtransloc(y) = \phi^\star(B y),
    \end{equation}
    where
    \begin{equation}
        \phi^\star_n = f_{\phi^\star} \circ \Pi_n \qquad \text{with} \qquad
        f_{\phi^\star}(v) = \frac{\sum_{x \in \D} \sum_{n = 1}^{N} x_n \Normal{v; Q_n A x, \sigma^2 Q_n Q_n^\top}}{\sum_{x \in \D} \sum_{n = 1}^{N}
        \Normal{v; Q_n A x, \sigma^2 Q_n Q_n^\top}}.
    \end{equation}
    % {\color{blue}(there is no $v$ in the formula for $f$, I guess this is $z$. Also in the proof, this could be harmonized as $z$ turn to $v$ sometimes.)}
\end{proposition}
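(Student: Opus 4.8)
The plan is to mirror the proofs of \cref{prop:supp_unconstrained_mmse,prop:supp_equivariant_mmse}: since $\Mtransloc$ is a linear subspace of measurable maps, \cref{prop:supp_optimality_condition} reduces the task to checking (i) that the candidate $\phi^\star$ lies in $\Mtransloc$ and (ii) that $\Mean{\inner{\varphi(B\vy),\phi^\star(B\vy)-\vx}}=0$ for every $\varphi\in\Mtransloc$.

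\emph{Admissibility.} By construction $\phi^\star$ is defined pixel-wise by $\phi^\star(x)[n]=f_{\phi^\star}(\Pi_n x)$ through a single measurable map $f_{\phi^\star}\colon\R^P\to\R$ that does not depend on $n$; this is exactly the form required in \cref{def:supp_transloc_equiv_fn}, hence $\phi^\star\in\Mtransloc$. It remains to check that $f_{\phi^\star}$ is well-defined at the patches that actually occur, i.e. at $v=Q_{n'}y$ for $y\in\Y$ and any pixel $n'$. Each $\Normal{\,\cdot\,;Q_nAx,\sigma^2 Q_nQ_n^\top}$ is a possibly degenerate Gaussian (\cref{def:supp_degenerate_gaussian}) supported on $Q_nAx+\Im{Q_nQ_n^\top}=\Im{Q_n}$, so the term $n=n'$ is strictly positive at $v=Q_{n'}y\in\Im{Q_{n'}}$ and the denominator of $f_{\phi^\star}(Q_{n'}y)$ is positive.

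\emph{Optimality.} Fix $\varphi\in\Mtransloc$ with $\varphi(v)[n]=f_\varphi(\Pi_n v)$. Starting from the simplified bilinear form in \cref{eq:supp_linear_term}, expanding $\inner{\cdot,\cdot}$ coordinate-wise, and using $\Pi_nB=Q_n$ so that $\Pi_n(By)=Q_ny$, we obtain
\[
    \Mean{\inner{\varphi(B\vy),\phi^\star(B\vy)-\vx}}=\frac{1}{|\D|}\sum_{x\in\D}\sum_{n=1}^{N}\int_{\Y}f_\varphi(Q_ny)\bigl(f_{\phi^\star}(Q_ny)-x_n\bigr)\,\Normal{y;Ax,\sigma^2\Id_M}\,dy.
\]
For each pair $(x,n)$ the integrand is a function of $Q_ny$ alone, so \cref{lemma:supp_integration_linear_subspace} applied to the linear map $Q_n$ (the starting covariance $\sigma^2\Id_M$ is full rank, so $dy$ is Lebesgue measure) rewrites the $\Y$-integral as an integral over $\R^P$ against $\Normal{\,\cdot\,;Q_nAx,\sigma^2 Q_nQ_n^\top}$ with respect to the $r$-dimensional Hausdorff measure $\Haus^r$, where $r=\rank{Q_nQ_n^\top}=\rank{Q_n}$ is the same for every $n$ by hypothesis. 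Writing $f_{\phi^\star}=g_1/g_0$ with $g_0(v)=\sum_{x,n}\Normal{v;Q_nAx,\sigma^2 Q_nQ_n^\top}$ and $g_1(v)=\sum_{x,n}x_n\,\Normal{v;Q_nAx,\sigma^2 Q_nQ_n^\top}$, summing over $(x,n)$ and collecting terms leaves $\int f_\varphi\,(f_{\phi^\star}g_0-g_1)\,d\Haus^r=0$, since $f_{\phi^\star}g_0\equiv g_1$. By \cref{prop:supp_optimality_condition}, $\phi^\star$ is the minimizer, and evaluating $\phi^\star(By)[n']=f_{\phi^\star}(Q_{n'}y)$ yields the pixel-wise expression, equivalently \cref{eq:loc_equiv_formula} with $w_{n',n}(x\vert y)\propto\Normal{Q_{n'}y;Q_nAx,\sigma^2 Q_nQ_n^\top}$.

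\emph{Main obstacle.} The delicate step is the last regrouping when the ranges $\Im{Q_n}$ differ across pixels: then $g_0$ and $g_1$ are sums of densities carried by distinct $r$-dimensional subspaces, so the per-$(x,n)$ integrals against $\Haus^r$ cannot be merged naively. This is handled by the observation that two distinct $r$-dimensional subspaces meet in dimension $<r$, hence $\Haus^r$-negligibly; grouping the $(x,n)$ terms by their support $V=\Im{Q_n}$, every term whose support differs from $V$ vanishes $\Haus^r$-a.e.\ on $V$, so on each $V$ the relevant partial sums of densities coincide a.e.\ with $g_0$ and $g_1$, and the identity $f_{\phi^\star}g_0-g_1\equiv 0$ forces the contribution of each group to vanish. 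The constant-rank assumption is precisely what lets a single reference measure $\Haus^r$ be used throughout; relaxing it requires stratifying by $\rank{Q_n}$, yielding the general formula of \cref{prop:supp_local_trans_estimator_general}.
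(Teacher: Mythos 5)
Your proof is correct and follows the paper's own argument essentially verbatim: verify membership in $\Mtransloc$ by the pixel-wise form $\phi^\star_n = f_{\phi^\star}\circ\Pi_n$, then check the orthogonality condition of \cref{prop:supp_optimality_condition} via \cref{eq:supp_linear_term} and the change of variables of \cref{lemma:supp_integration_linear_subspace} applied to each $Q_n$, after which the integrand cancels identically because $f_{\phi^\star}$ is exactly the ratio $g_1/g_0$. The only divergence is your closing ``main obstacle'' paragraph, which is careful but not actually needed: since $f_{\phi^\star}g_0-g_1\equiv 0$ pointwise (both sides vanish wherever $g_0$ does) and all densities are supported on the finite, $\Haus^r$-$\sigma$-finite union $\bigcup_n \Im{Q_n}$, linearity of the integral already annihilates every term without grouping by support --- the stratification you describe only becomes necessary when the ranks differ, as in \cref{prop:supp_local_trans_estimator_general}.
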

\begin{proof}
    We will verify that \(\phi^\star\) is admissible and satisfies the optimality condition \cref{prop:supp_optimality_condition}.

    \textbf{Admissibility.}
    Firstly, we show that \(\phi^\star \in \Mtransloc\). By construction, we have \(\phi^\star = (\phi^\star_1, \cdots \phi^\star_N) =
    (f_{\phi^\star} \circ \Pi_1, \cdots f_{\phi^\star} \circ \Pi_N)\), therefore \(\phi^{\star}\) is local.
    For any \(g \in \left\{ 1, \cdots N \right\}\), the group transformation (translation) \(T_g\) is defined as
    \(T_g : (x_1, \cdots, x_N) \in \X \mapsto (x_{1 - g}, \cdots, x_{N - g}) \in \X\), where \(x_{n - g} = x_{n - g \equiv N}\) (circular boundary).
    For any \(g\), we have
    \begin{align*}
        \phi^\star \circ T_g &=  (\phi^\star_1 \circ T_g , \cdots \phi^\star_N \circ T_g) \\
        &= (f_{\phi^\star} \circ \Pi_1 \circ T_g, \cdots f_{\phi^\star} \circ \Pi_N \circ T_g)\\
        &= (f_{\phi^\star} \circ \Pi_{1 - g}, \cdots f_{\phi^\star} \circ \Pi_{N - g})  \qquad \text{ where } \Pi_{n - g} = \Pi_{n - g \equiv N}
        \text{ (circular boundary)}\\
        &= T_g \circ \phi^\star
    \end{align*}
    Therefore, \(\phi^{\star}\) is translation equivariant. We deduce that \(\phi^\star \in \Mtransloc\).

    \textbf{Optimality condition.} We will verify that this estimator satisfies the optimality condition \cref{prop:supp_optimality_condition}.
    By using \cref{eq:supp_linear_term}, for any \(\varphi \in \Mtransloc\), we have:
    \allowdisplaybreaks
    \begin{align*}
        &\Mean{\inner{\varphi(B \y), \phi^\star(B  \y) - \x}}  \\
        &= \frac{1}{|\D|}  \sum_{x \in \D} \int_{\Y} \inner{\varphi(B  y), \phi^\star(B  y) - x } \Normal{y; A x, \sigma^2 \Id_N} dy \notag \\
        &= \frac{1}{|\D|}  \sum_{x \in \D} \sum_{n = 1}^{N} \int_{\Y} f_{\varphi} (\Pi_n B  y) \left( f_{\phi^\star} (\Pi_n B y) - x_n  \right)
        \Normal{y; A x, \sigma^2 \Id_N} dy \notag \\
        &= \frac{1}{|\D|}  \sum_{x \in \D} \sum_{n = 1}^{N} \int_{\R^P} f_{\varphi} (v) \left( f_{\phi^\star} (v) - x_n  \right)
        \underbrace{\Normal{v; Q_{n} A x, \sigma^2 Q_{n} Q_{n}^\top}}_{q(v, n, x)} d \Haus^r(v) \notag \\
        &= \frac{1}{|\D|} \int_{\R^P} f_{\varphi} (v) \left( f_{\phi^\star} (v) \sum_{x \in \D} \sum_{n = 1}^{N}  q(v, n, x) - \sum_{x \in \D}
        \sum_{n = 1}^{N} x_n q(v, n, x) \right) d \Haus^r(v) \notag \\
        &= 0
    \end{align*}
\end{proof}

The constant rank assumption in~\cref{prop:supp_local_trans_estimator} can be relaxed by stratifying the image space according to the rank of the
matrices \(Q_n\) as follows.
\begin{proposition}[Local and translation equivariant MMSE estimator -- rank stratification]
    \label{prop:supp_local_trans_estimator_general}
    Let \(Q_n = \Pi_n B \in \R^{P \times M}\). For each rank \(r \in \{0,1,\dots,P\}\), define the index set and the union of subspaces:
    \(
        I_r = \{ n \in [1\!:\!N] : \rank{Q_n} = r\}, \quad
        E_r = \bigcup_{n \in I_r} \Im{Q_n} \subset \R^P.
    \)
    We define the disjoint strata recursively: \(\bar{E}_r = E_r \setminus \bigcup_{r' < r} E_{r'}\).
    Then \(\bigcup_{r=0}^{P} \bar{E}_r = \bigcup_{n=1}^{N} \Im{Q_n}\) is a disjoint union and, for any \(r' < r\), we have \(\Haus^r(E_{r'}) = 0\).

    Define the weighted density sums for \(v \in \R^P\):
    \begin{align*}
        a_r(v) &= \sum_{x \in \D} \sum_{n \in I_r} x_n \Normal{v; Q_n A x, \sigma^2 Q_n Q_n^\top}, \\
        b_r(v) &= \sum_{x \in \D} \sum_{n \in I_r} \Normal{v; Q_n A x, \sigma^2 Q_n Q_n^\top}.
    \end{align*}
    The local and translation equivariant MMSE estimator \(\phi^\star\) is given component-wise by \(\phi^\star_n = f_{\phi^\star} \circ \Pi_n\), where:
    \begin{equation}
        f_{\phi^\star}(v) =
        \begin{cases}
            \sum_{r=0}^{P} \frac{a_r(v)}{b_r(v)} \mathbbm{1}_{\bar{E}_r}(v) & \text{if } v \in \bigcup \Im Q_n \text{ and } b_r(v) > 0, \\
            0 & \text{otherwise}.
        \end{cases}
    \end{equation}
\end{proposition}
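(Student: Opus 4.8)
The plan is to mirror the constant-rank proof of \cref{prop:supp_local_trans_estimator}: verify that $\phi^\star$ is admissible (lies in $\Mtransloc$) and that it satisfies the first-order optimality condition of \cref{prop:supp_optimality_condition}, i.e. $\Mean{\inner{\varphi(B\vy),\phi^\star(B\vy)-\vx}}=0$ for every $\varphi\in\Mtransloc$. The only new ingredient is a stratification argument to cope with the fact that, when the ranks of $Q_n=\Pi_nB$ vary, the push-forward Gaussians $\Normal{\cdot;Q_nAx,\sigma^2Q_nQ_n^\top}$ are supported on linear subspaces $\Im{Q_n}=\Im{Q_nQ_n^\top}$ (note $Q_nAx\in\Im{Q_n}$) of different dimensions. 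I would first record the elementary facts needed: the family $\{\bar E_r\}_{r=0}^P$ obtained by the recursive disjointification $\bar E_r=E_r\setminus\bigcup_{r'<r}E_{r'}$ is pairwise disjoint with $\bigcup_r\bar E_r=\bigcup_r E_r=\bigcup_n\Im{Q_n}$; and, since each $E_{r'}$ is a finite union of subspaces of dimension $r'$, one has $\Haus^r(E_{r'})=0$ whenever $r'<r$. Combining these, for $n\in I_r$ we get $\Haus^r(\Im{Q_n}\setminus\bar E_r)=0$, and since a degenerate Gaussian has full support on its linear span, every $v\in\bar E_r$ lies in some $\Im{Q_n}$ with $n\in I_r$, so $b_r(v)>0$ and $f_{\phi^\star}(v)=a_r(v)/b_r(v)$ on $\bar E_r$.

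Admissibility is verbatim the constant-rank argument: $\phi^\star=(f_{\phi^\star}\circ\Pi_1,\dots,f_{\phi^\star}\circ\Pi_N)$ is a sliding-window map built from a single scalar function $f_{\phi^\star}:\R^P\to\R$, hence local, and with circular boundary conditions such maps are translation equivariant, so $\phi^\star\in\Mtransloc$ (cf. \cref{def:supp_transloc_equiv_fn}). For the optimality condition, I would start from the reduced expression (\ref{eq:supp_linear_term}), expand the inner product pixelwise as $\inner{\varphi(By),\phi^\star(By)-x}=\sum_{n=1}^N f_\varphi(Q_ny)\bigl(f_{\phi^\star}(Q_ny)-x_n\bigr)$ using $\varphi,\phi^\star\in\Mtransloc$, and for each $n$ change variables $v=Q_ny$ via \cref{lemma:supp_integration_linear_subspace} to obtain an integral over $\R^P$ against $\Haus^{r_n}$ (with $r_n=\rank{Q_n}$) of $f_\varphi(v)\bigl(f_{\phi^\star}(v)-x_n\bigr)\Normal{v;Q_nAx,\sigma^2Q_nQ_n^\top}$. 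Grouping the double sum $\sum_{x\in\D}\sum_{n=1}^N$ according to $r=\rank{Q_n}$, all integrals in the block $n\in I_r$ are against $\Haus^r$; restricting each from $\Im{Q_n}$ to $\bar E_r$ costs a $\Haus^r$-null set by the facts above, on $\bar E_r$ we substitute $f_{\phi^\star}=a_r/b_r$, exchange the finite sum with the integral, and are left with $\int_{\bar E_r}f_\varphi(v)\bigl(\tfrac{a_r(v)}{b_r(v)}b_r(v)-a_r(v)\bigr)\,d\Haus^r(v)=0$. Summing over $r\in\{0,\dots,P\}$ yields $\Mean{\inner{\varphi(B\vy),\phi^\star(B\vy)-\vx}}=0$ for all $\varphi\in\Mtransloc$, and \cref{prop:supp_optimality_condition} finishes the argument.

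I expect the main obstacle to be precisely this measure-theoretic bookkeeping: making rigorous that, on the stratum $\bar E_r$, the single function $f_{\phi^\star}$ behaves exactly like the constant-rank formula with weights indexed by $I_r$ only. One must check that contributions from pixels of rank $r'<r$ are $\Haus^r$-negligible there (their supports are subspaces of dimension $r'<r$), that contributions from pixels of rank $r'>r$ live on subspaces disjoint from $\bar E_r$ (they are swallowed by $\bigcup_{r''<r'}E_{r''}$), and that $b_r>0$ on $\bar E_r$ so that $a_r/b_r$ is well-defined—after which the computation collapses stratum by stratum exactly as in \cref{prop:supp_local_trans_estimator}.
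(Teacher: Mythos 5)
Your proposal is correct and follows essentially the same route as the paper's proof: admissibility exactly as in the constant-rank case, then the optimality condition via the pixelwise expansion, the change of variables of \cref{lemma:supp_integration_linear_subspace}, grouping the sum over $n$ by rank, and restricting each integral from $\Im{Q_n}$ to $\bar E_r$ at the cost of a $\Haus^r$-null set before cancelling $f_{\phi^\star} b_r - a_r$ on each stratum. Your additional observation that $b_r>0$ everywhere on $\bar E_r$ (since any $v\in\bar E_r$ lies in some $\Im{Q_n}$ with $n\in I_r$, on which the degenerate Gaussian is strictly positive) is a small but valid sharpening of the paper's more cautious ``assign any value on a negligible set'' convention.
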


\begin{proof}
    \textbf{Admissibility (locality and translation equivariance).}
    \begin{itemize}
        \item Locality. By definition, the estimator is defined component-wise by \(\phi^\star_n = f_{\phi^\star} \circ \Pi_n\), so it is local.

        \item Translation equivariance on \(\Im B\). Let \(T_g\) denote the circular translation on \(\X\), the selection operators commute with
            \(T_g\) by a simple index check
            \[ \Pi_n \circ T_g = \Pi_{n-g}, \qquad \forall\,n,g\in \llbracket 1, N \rrbracket.\]
            Then, for any \(v\in \Im B\) and any \(n\),
            \begin{equation}
                \big[\phi^\star(T_g v)\big]_n
                = f_{\phi^\star}\big(\Pi_n T_g v\big)
                = f_{\phi^\star}\big(\Pi_{n-g} v\big)
                = \big[\phi^\star(v)\big]_{n-g}
                = \big[T_g \phi^\star(v)\big]_n.
            \end{equation}
            Hence \(\phi^\star \circ T_g = T_g \circ \phi^\star\) on \(\Im B\), i.e., it is translation equivariant. Combining with locality gives
            \(\phi^\star \in \Mtransloc\).

        \item Well-definedness. On each \(\bar{E}_r\), \(f_{\phi^\star}\) is defined by the ratio \(a_r/b_r\).
            If \(b_r(v)=0\) on a negligible set (w.r.t. \(\Haus^r\)), assign any fixed value (\eg \(0\)); this does not affect admissibility nor optimality.
    \end{itemize}

    \textbf{Optimality.} For any \(\varphi \in \Mtransloc\), we will verify that the optimality condition \cref{prop:supp_optimality_condition} holds.
    By using \cref{eq:supp_linear_term}, we have:
    \begin{align*}
        &\Mean{\inner{\varphi(B \vy), \phi^\star(B \vy) - \vx}} \\
        &= \frac{1}{|\D|}  \sum_{x \in \D} \int_{\Y} \inner{\varphi(B  y), \phi^\star(B  y) - x } \Normal{y; A x, \sigma^2 \Id_N} dy \notag \\
        &= \frac{1}{|\D|} \sum_{x \in \D} \sum_{n=1}^{N} \int_{\Y} f_{\varphi}(\Pi_n B y) \big( f_{\phi^\star}(\Pi_n B y) - x_n \big) \Normal{y; A x,
        \sigma^2 I} \, dy \\
        &\overset{(\star)}{=} \frac{1}{|\D|} \sum_{x \in \D} \sum_{n=1}^{N} \int_{\R^P} f_{\varphi}(v) \big( f_{\phi^\star}(v) - x_n \big) \Normal{v;
        Q_n A x, \sigma^2 Q_n Q_n^\top} \, d\Haus^{r_n}(v),
    \end{align*}
    where \((\star)\) uses \cref{lemma:supp_integration_linear_subspace} and \(r_n = \rank{Q_n}\).
    We decompose the summation over \(n\) by rank. Note that for \(n \in I_r\), the domain is \(\Im Q_n\). We observe that \(\Im Q_n \setminus
    \bar{E}_r \subseteq \bigcup_{r' < r} E_{r'}\). Since the union of lower-dimensional subspaces has \(\Haus^r\)-measure zero, we can restrict the
    integration domain from \(\Im Q_n\) to \(\Im Q_n \cap \bar{E}_r\) without changing the value of the integral.
    Thus, the previous expression becomes:
    \begin{align}
        \sum_{r=0}^{P} &\int_{\bar{E}_r}
        f_{\varphi}(v) \Big( f_{\phi^\star}(v) \underbrace{\sum_{x \in \D} \sum_{n \in I_r} \Normal{v; Q_n A x, \sigma^2 Q_n Q_n^\top}}_{b_r(v)}
            \\& \hspace{4cm}
        - \underbrace{\sum_{x \in \D} \sum_{n \in I_r} x_n \Normal{v; Q_n A x, \sigma^2 Q_n Q_n^\top}}_{a_r(v)} \Big) d\Haus^{r}(v).
    \end{align}
    Choosing \(f_{\phi^\star}(v)=a_r(v)/b_r(v)\) on \(\bar{E}_r\) cancels each integrand, hence the optimality condition
    \cref{prop:supp_optimality_condition} holds.
\end{proof}

\begin{remark}
    If all \(Q_n\) have the same rank \(r\), then \(\bar{E}_r = \bigcup_{n=1}^{N} \Im{Q_n}\) and the formula reduces to the constant-rank case in
    \cref{prop:supp_local_trans_estimator}, with a single ratio over \(n=1,\dots,N\).
\end{remark}

\begin{remark}[Singular limit and rank stratification]\label{remark:rank_deficient_B}
    Consider a regularization of \(B\) as \(B^{(\epsilon)} = U \Sigma_\epsilon V^\top\) and \(Q_n^{(\epsilon)} = \Pi_n B^{(\epsilon)}\), where \(B =
    U \Sigma V^\top\) is the SVD of \(B\)
    and the diagonal matrix \(\Sigma_\epsilon\) is constructed by adding a small positive number \(\epsilon\) to the singular values in \(\Sigma\).
    For \(\epsilon>0\) (and \(M\ge P\)), each \(Q_n^{(\epsilon)}\) has full row rank \(P\). For each \((n,x)\), define the probability measure
    \(\mu_{n,x}^{(\epsilon)}\) on \(\R^P\) with density (Radon--Nikodým derivative) with respect to Lebesgue measure \(\lambda^P\) given by
    \(
        \frac{d\mu_{n,x}^{(\epsilon)}}{d\lambda^P}(v)
        \;=\; \Normal{v; Q_n^{(\epsilon)} A x,
        \sigma^2 Q_n^{(\epsilon)} Q_n^{(\epsilon)T}}.
    \)
    As \(\epsilon \to 0\), \(Q_n^{(\epsilon)} \to Q_n\) and some ranks \(r_n = \rank Q_n\) may drop. Then \(\mu_{n,x}^{(\epsilon)}\) converges weakly
    to a probability measure \(\mu_{n,x}\) supported on the linear subspace \(\Im Q_n\), which is absolutely continuous with respect to the Hausdorff
    measure \(\Haus^{r_n}\) on \(\Im Q_n\), with density
    \(
        \frac{d\mu_{n,x}}{d\Haus^{r_n}}(v)
        \;=\; \Normal{v; Q_n A x, \sigma^2 Q_n Q_n^\top},
        \qquad v \in \Im{Q_n},
    \)
    where, \(\Normal{\cdot;\mu,\Sigma}\) denotes the degenerate Gaussian density~\cref{def:supp_degenerate_gaussian} with respect to the appropriate
    Hausdorff measure on its support (and vanishes off that support).

    For \(\epsilon>0\), the estimator reads (constant-rank case, similar to \cref{prop:supp_local_trans_estimator})
    \(
        f_{\phi^\star}^{(\epsilon)}(v)
        \;=\; \frac{\sum_{x \in \D} \sum_{n=1}^{N} x_n \, \Normal{v; Q_n A x, \sigma^2 Q_n Q_n^\top}}{\sum_{x \in \D} \sum_{n=1}^{N} \Normal{v; Q_n A
        x, \sigma^2 Q_n Q_n^\top}}.
    \)
    In the singular limit, for each \(r\) and \(\Haus^r\)-a.e. \(v \in \bar{E}_r\) with \(b_r(v)>0\),
    \(
        \lim_{\epsilon\to 0} f_{\phi^\star}^{(\epsilon)}(v)
        \;=\; \frac{a_r(v)}{b_r(v)},
    \)
    \ie the \(\epsilon\)-regularized estimator converges (stratum-wise, \(\Haus^r\)-a.e.) to the rank-stratified formula
    \(
        f_{\phi^\star}(v)
        \;=\; \sum_{r=0}^{P} \frac{a_r(v)}{b_r(v)}\, \mathbbm{1}_{\bar{E}_r}(v),
    \)
    interpreted up to \(\Haus^r\)-null sets on each \(\bar{E}_r\). If all \(Q_n\) have the same rank \(r\), only the stratum \(\bar{E}_r\) is
    nonempty, and the above reduces to the constant-rank expression in ~\cref{prop:supp_local_trans_estimator}.
\end{remark}

\begin{proof}[Proof of i) of~\cref{cor:local_trans_estimator_properties} --- Physics-agnostic LE-MMSE estimator]
When \(B = \Id\), the weights of the LE-MMSE estimator in~\cref{theorem:local_trans_estimator} simplifies to
\begin{equation*}
    w_{n', n}(x \vert y) \propto \Normal{\Pi_{n'} y; \Pi_{n} A x, \sigma^2 \Pi_{n} \Pi_{n}^\top} \propto \exp \left( - \frac{1}{2\sigma^2}
    \norm{y[\omega_{n'}] - (A x)[\omega_{n}]}^2 \right)
\end{equation*}
Therefore, the LE-MMSE estimator at pixel \(n'\) reads:
\begin{equation*}
    \xtransloc(y)[n'] = \frac{\sum_{x \in \D} \sum_{n = 1}^{N} x_n \exp \left( - \frac{1}{2\sigma^2} \norm{y[\omega_{n'}] - (A x)[\omega_{n}]}^2
    \right)}{\sum_{x \in \D} \sum_{n = 1}^{N} \exp \left( - \frac{1}{2\sigma^2} \norm{y[\omega_{n'}] - (A x)[\omega_{n}]}^2 \right)}.
\end{equation*}
For small noise level \(\sigma \to 0\), it returns the central pixel value of the patches in the dataset \(\D\) whose degraded version \((A
x)[\omega_n]\) is closest to the observed patch \(y[\omega_{n'}]\). Hence, the LE-MMSE estimator is a patch-work of training patches.
\end{proof}
\begin{proof}[Proof of ii) of~\cref{cor:local_trans_estimator_properties} --- LE-MMSE is not a posterior mean]
We focus on the simplest case of denoising, that is \(\vy = \vx + \ve\).
Recall that the posterior mean satisfies the so-called Tweedie formula:
\begin{equation*}
\Mean{\vx \vert y} = y + \sigma^2 \nabla \log p_{\vy}(y).
\end{equation*}
Taking the gradient of both sides w.r.t \(y\) yields:
\begin{equation*}
\nabla_y \Mean{\vx \vert y} = \Id + \sigma^2 \nabla^2 \log p_{\vy}(y).
\end{equation*}
Therefore, the Jacobian of any pure MMSE estimator (posterior mean) must be symmetric since the Hessian of \(\log p_{\vy}\) is symmetric.
To simplify the notation, we use \(f\) instead of \(\xtransloc\) in what follows.
Using the fact that the scaling by \(\sigma^{-2}\) does not affect the symmetry (note that \(p_{\vy}\) is the convolution of \(p_{\vx}\) and a
Gaussian so it's \(\mathcal{C}^{\infty}\)), if \(\xtransloc\) were a posterior mean, we must have:
\begin{equation}\label{eq:supp_posterior_mean_condition}
\frac{\partial}{\partial y_m} f_{n}(y) = \frac{\partial}{\partial y_{n}} f_m(y), \quad \forall n,m \text{ and } \forall y.
\end{equation}
For notation simplicity, we let \(e_{n, l}(x, y) = \exp \left( - \frac{\|y[\omega_{n}] - x[\omega_l]\|^2}{2\sigma^2}\right)\).
We have the weights of the LE-MMSE estimator in~\cref{theorem:local_trans_estimator} becomes:
\begin{equation*}
w_{n, l} (x \vert y) = \exp \left( - \frac{\norm{y[\omega_{n}] - x[\omega_l]}}{2\sigma^2}\right) / Z_{n}(y) = \frac{e_{n, l}(x, y)}{Z_n(y)}.
\end{equation*}
where \(Z_n(y) = \sum_{x' \in \D} \sum_{l'=1}^N e_{n, l'}(x', y)\) is the normalization constant.
Therefore, the LE-MMSE estimator at pixel \(n\) reads:
\begin{equation*}
f_{n}(y) = \frac{1}{Z_{n}(y)} \sum_{x\in \D} \sum_{l=1}^N x_l \cdot e_{n, l}(x, y).
\end{equation*}
That is:
\begin{equation*}
f_{n}(y) = \frac{S_{n}(y)}{Z_{n}(y)} \qquad \text{where} \qquad S_{n}(y) = \sum_{x\in \D} \sum_{l=1}^N x_l \cdot e_{n, l}(x, y).
\end{equation*}
A key observation is that \emph{this estimator is real-analytic for \(\sigma>0\)}.
To prove it, we remind that the product of polynomials and exponentials are real-analytic functions, and that the sum and quotient (with
non-vanishing denominator) of real-analytic functions are also real-analytic.
The denominator \(Z_n(y)\) does not vanish for \(\sigma>0\), which concludes the proof of real-analyticity.

We will use the following classical result (see e.g. \citep[Section 3.1.24]{federer2014geometric} and \citep{mityagin2020zero} for an elementary proof):
\begin{lemma}[Zeros of real-analytic functions]\label{lemma:zero_measure_analytic_eq}
Let \(f : \R^{p} \to \R\) be a real-analytic function for some \(p \in \mathbb{N}^*\). If \(f\) is not identically zero, then the set of zeros of
\(f\) has Lebesgue measure zero.
\end{lemma}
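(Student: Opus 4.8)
The plan is to prove the lemma by induction on the dimension $p$, peeling off one coordinate at a time with Tonelli's theorem and relying only on the one-dimensional identity theorem for real-analytic functions. For the base case $p=1$, I would argue that if $f:\R\to\R$ is real-analytic and some zero $a$ of $f$ is an accumulation point of $f^{-1}(0)$, then all derivatives $f^{(k)}(a)$ vanish, so $f$ agrees with its (identically zero) Taylor series near $a$; since $\R$ is connected, the identity theorem then forces $f\equiv 0$. Hence, when $f\not\equiv 0$, the zeros of $f$ are isolated, so $f^{-1}(0)$ is countable and $\lambda^1\bigl(f^{-1}(0)\bigr)=0$.

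For the inductive step I would assume the claim in dimension $p-1$, take $f:\R^p\to\R$ real-analytic with $f\not\equiv 0$, write $x=(x',x_p)$ with $x'\in\R^{p-1}$, and set $Z=f^{-1}(0)$ together with the ``bad set'' $E=\{x'\in\R^{p-1}: f(x',\cdot)\equiv 0 \text{ on } \R\}$. For each $k\ge 0$ the map $g_k(x')=\partial_{x_p}^k f(x',0)$ is real-analytic on $\R^{p-1}$ (partial differentiation and restriction to a hyperplane both preserve analyticity), and by the one-dimensional identity theorem in the variable $x_p$ one has $x'\in E$ if and only if $g_k(x')=0$ for every $k$, i.e.\ $E=\bigcap_k g_k^{-1}(0)$. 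If every $g_k$ vanished identically, then $f(x',\cdot)\equiv 0$ for all $x'$, i.e.\ $f\equiv 0$, contradicting the hypothesis; so some $g_{k_0}\not\equiv 0$, and the inductive hypothesis yields $\lambda^{p-1}\bigl(g_{k_0}^{-1}(0)\bigr)=0$, whence $\lambda^{p-1}(E)=0$ since $E\subseteq g_{k_0}^{-1}(0)$. Finally, Tonelli's theorem gives
\[
    \lambda^p(Z)=\int_{\R^{p-1}}\lambda^1\bigl(\{x_p\in\R: f(x',x_p)=0\}\bigr)\,d\lambda^{p-1}(x'),
\]
and for $x'\notin E$ the slice $f(x',\cdot)$ is a nonzero real-analytic function of one variable, so by the base case the inner integrand is $0$; the only remaining contribution comes over the $\lambda^{p-1}$-null set $E$, which is also $0$ (integration over a null set contributes nothing, even where the integrand is infinite). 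Hence $\lambda^p(Z)=0$, closing the induction.

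I expect the only delicate point to be the handling of the bad set $E$: one must verify that $g_k$ inherits real-analyticity in $x'$ from $f$, and invoke the one-dimensional identity theorem both to characterize $E$ as the common zero set of the $g_k$ and to exclude the degenerate case in which all $g_k$ vanish identically. Everything else is a routine Tonelli argument. (Alternatively, one could localize, use a Weierstrass-preparation-type normal form on small balls, and take a countable union; but the inductive argument above is self-contained given the $1$-D identity theorem, and of course one may simply cite~\citep{federer1996geometric,mityagin2020zero}.)
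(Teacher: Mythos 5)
Your proof is correct. Note that the paper does not actually prove this lemma: it states it as a classical fact and cites Federer and Mityagin, so there is no in-paper argument to compare against. What you have written is essentially the standard self-contained proof (and, in fact, the one given in the cited Mityagin reference): isolated zeros in dimension one via the identity theorem, then induction on $p$ with Tonelli, controlling the degenerate slices through the bad set $E=\bigcap_k g_k^{-1}(0)$. The delicate points are all handled properly --- the $g_k$ are real-analytic because partial differentiation and restriction to a hyperplane preserve analyticity; $Z=f^{-1}(0)$ is closed hence measurable, so Tonelli applies and the slice-measure function is measurable; some $g_{k_0}\not\equiv 0$ since otherwise $E=\R^{p-1}$ and $f\equiv 0$; and the contribution of the slices over $E$ vanishes because $\lambda^{p-1}(E)=0$ and the convention $0\cdot\infty=0$ applies to the integral of a (possibly infinite) nonnegative function over a null set. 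Nothing further is needed.
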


Applying this lemma, it therefore suffices to find a dataset \(\D\) and a point \(y\) such that \(\frac{\partial}{\partial y_m} f_{n}(y) -
\frac{\partial}{\partial y_{n}} f_m(y) \neq 0\).
To this end, we consider the case of overlapping patches \(\omega_n\cap \omega_m \neq \emptyset\) with \(n \neq m\), which is always possible for
patch sizes \(P \geq 2\).
Now consider the case where \(n \in \omega_{m}\) and \(m \in \omega_{n}\) (they are equivalent).

The chain rule gives
\[
\frac{\partial e_{n, l}}{\partial y_m}(x, y) = -\sigma^{-2} (y_m - x_{l + m - n}) \cdot e_{n, l}(x, y)
\]
and the quotient rule gives:
\begin{align*}
&\frac{\partial w_{n, l}}{\partial y_m}(x, y) = \frac{\partial e_{n, l}}{\partial y_m}(x, y) \cdot \frac{1}{Z_{n}(y)} - e_{n, l}(x, y) \cdot
\frac{\partial Z_{n}}{\partial y_m}(y) \cdot \frac{1}{Z_{n}(y)^2} \\
&= -\sigma^{-2} (y_m - x_{l + m - n}) \cdot \frac{e_{n, l}(x, y)}{Z_{n}(y)} -  \frac{e_{n, l}(x, y)}{Z_{n}(y)^2} \cdot \frac{\partial Z_{n}(y)}{\partial y_m} \\
&= -\sigma^{-2} (y_m - x_{l + m - n}) \cdot w_{n, l}(x, y) -  w_{n, l}(x, y) \cdot \frac{1}{Z_{n}(y)} \cdot \left( \sum_{x' \in \D} \sum_{l'=1}^{N}
\frac{\partial e_{n, l'}}{\partial y_m}(x',y) \right) \\
&= -\sigma^{-2} \left( (y_m - x_{l + m - n}) \cdot w_{n, l}(x, y) +  w_{n, l}(x, y) \cdot \frac{1}{Z_{n}(y)} \cdot \left( \sum_{x' \in \D}
\sum_{l'=1}^{N} (y_m - x_{l' + m - n}) \cdot e_{n, l'}(x',y) \right) \right)\\
&= -\sigma^{-2} \left((y_m - x_{l + m - n}) \cdot w_{n, l}(x, y) +  w_{n, l}(x, y) \cdot \frac{1}{Z_{n}(y)} \cdot \left( y_m \cdot Z_{n}(y) -
\sum_{x' \in \D} \sum_{l'=1}^{N} x_{l' + m - n} \cdot e_{n, l'}(x',y) \right)\right) \\
&=  \sigma^{-2} \cdot \left( x_{l + m - n} \cdot w_{n, l}(x, y) - w_{n, l}(x, y) \cdot \frac{1}{Z_{n}(y)} \cdot \sum_{x' \in \D} \sum_{l'=1}^{N}
x_{l' + m - n} \cdot e_{n, l'}(x',y) \right)\\
&=  \sigma^{-2} \cdot \left( x_{l + m - n} \cdot w_{n, l}(x, y) - w_{n, l}(x, y) \cdot \sum_{x' \in \D} \sum_{l'=1}^{N} x_{l' + m - n} \cdot w_{n,
l'}(x',y) \right)\\
&= \sigma^{-2} w_{n, l}(x, y)  \left( x_{l + m - n} - \bar{x}_n^{(m - n)}(y)  \right)
\end{align*}
where we define \(\bar{x}_n^{(k)}(y) = \sum_{x' \in \D} \sum_{l'=1}^{N} x_{l' + k} \cdot w_{n, l'}(x',y)\) the local posterior mean of pixel \(n\)
with offset \(k\).
Finally, differentiating \(f_n(y)\) gives:
\begin{align*}
\frac{\partial f_{n}}{\partial y_m}(y) &= \sum_{x \in \D} \sum_{l=1}^N x_l \cdot \frac{\partial w_{n, l}}{\partial y_m}(x, y) \\
&= \sigma^{-2} \sum_{x \in \D} \sum_{l=1}^N x_l \cdot w_{n, l}(x, y)  \left( x_{l + m - n} - \bar{x}_n^{(m - n)}(y)  \right) \\
% &= \sigma^{-2} \sum_{x \in \D} \sum_{l=1}^N x_l \cdot x_{l + m - n} \cdot w_{n, l}(x, y) - \sigma^{-2} f_{n}(y) \cdot \bar{x}_n^{(m - n)}(y)
\end{align*}

Set \(n = 2\) and \(m=3\).
Set \(\D = \{x, 0, \dots 0\}\) to be dataset containing only one non-zero image \(x\) and the rest are zeros.
We choose \(x\) such that \(x_2 > 0\) and \(x_i = 0\) for \(i \neq 2\).
The derivatives simplify to:
\begin{align*}
\frac{\partial f_2}{\partial y_3}(y) &= \sigma^{-2}x_2 w_{2,2}(x, y) (x_3  - \bar{x}_2^{(1)}(y)) = -\sigma^{-2}x_2^2 w_{2,2}(x, y) w_{2,1}(x,y) \\
\frac{\partial f_3}{\partial y_2}(y) &= \sigma^{-2} x_2 w_{3,2}(x, y) (x_1  - \bar{x}_3^{(-1)}(y)) = -\sigma^{-2}x_2^2 w_{3,2}(x,y)w_{3,3}(x,y)
\end{align*}

Fix an index \(1 \leq i \leq N\) such that \(y_i\) is a variable appearing in the vector \(y[\omega_3]\) but not in the vector \(y[\omega_2]\) (note
that \(i \neq 3\), otherwise \(y_i = y_3\) is the center of the patch \(y[\omega_3]\) and it's also in the patch \(y[\omega_2]\) since the patch size
\(P > 1\)). We have
\begin{align*}
\frac{\partial w_{2,2}(x,y)}{\partial y_i} = \frac{\partial w_{2,1}(x,y)}{\partial y_i} = 0 \quad \text{and} \quad \frac{\partial^2 f_2}{\partial y_3
y_i}(x,y) = 0.
\end{align*}
On the other hand, since \( i \neq 3\), we have
\begin{align*}
\frac{\partial w_{3,2}(x,y)}{\partial y_i} &= \sigma^{-2} w_{3, 2}(x, y)  \left( x_{2 + i - 3} - \bar{x}_3^{(i - 3)}(y)  \right) = -\sigma^{-2} x_2
w_{3, 2}(x, y) w_{3,2 - (i-3)}(x,y) \\
\frac{\partial w_{3,3}(x,y)}{\partial y_i} &= \sigma^{-2} w_{3, 3}(x, y)  \left( x_{i } - \bar{x}_3^{(i - 3)}(y)  \right) = -\sigma^{-2} x_2 w_{3,
3}(x, y) w_{3,2 - (i-3)}(x,y).
\end{align*}
We obtain
\begin{align*}
\frac{\partial^2 f_3(x,y)}{\partial y_2 y_i} &=\sigma^{-4} x_2^3 w_{3, 2}(x, y) w_{3,2 - (i-3)}w_{3,3}(x,y) +\sigma^{-4} x_2^3 w_{3, 3}(x, y) w_{3,2
- (i-3)}(x,y)w_{3,2} (x,y) > 0.
\end{align*}
Therefore, for this value of \(x\), \(\frac{\partial f_3}{\partial y_2}\) and \(\frac{\partial f_2}{\partial y_3}\) are independent, when seen as
functions of \(y\).
We deduce that the equation \(\frac{\partial f_2}{\partial y_3} = \frac{\partial f_3}{\partial y_2}\)
is a nondegenerate analytic equation in variables \(\mathcal{D},y\). Using~\cref{lemma:zero_measure_analytic_eq} and by Fubini's theorem, the set
\(\left\{ \mathcal{D},\, \forall y,\, \frac{\partial f_2}{\partial y_3}(y) = \frac{\partial f_3}{\partial y_2}(y)\right\}\) also has zero measure.

In other words, for almost every dataset \(\D\), there exists \(y\) such that the symmetry condition~\cref{eq:supp_posterior_mean_condition} does not
hold, and the LE-MMSE estimator is not a posterior mean.
\end{proof}
% ###########################################################
% ###########################################################
% ###########################################################
% ###########################################################
% ###########################################################
% ###########################################################

% ###########################################################
%           NUMERICAL
% ###########################################################
\null
\pagebreak
\section{Numerical experiments}\label{sec:supp_numerical}
\subsection{Datasets}
We use the following datasets in our experiments: FFHQ~\citep{karras2019style} downscaled to \(32 \times 32\) or \(64 \times 64\),
CIFAR-10~\citep{krizhevsky2009learning} and Fashion-MNIST~\citep{xiao2017fashion}.
For each dataset, we randomly select \(10,000\) images for training, which is denoted by \(\D\) in the main paper.
\subsection{Network architectures}\label{sec:supp_network_architectures}
For the local and translation equivariant estimator, we examine 3 different architectures with noise level \(\sigma\) conditioning:
\begin{itemize}
\item UNet2D: we use a state-of-the-art UNet2D~\citep{ronneberger2015u} from the
\texttt{diffusers}\footnote{\href{https://huggingface.co/docs/diffusers/en/api/models/unet2d}{https://huggingface.co/docs/diffusers/en/api/models/unet2d}}
library.
The model has several downsampling and upsampling layers and skip connections, with varying channel dimensions (defined by
\texttt{block\_out\_channels}) and kernel size (defined by \texttt{kernel\_size}, for down-blocks and mid-blocks).
The architecture is modified slightly (circular boundary conditions and kernel sizes of convolutional layers) to have a desired receptive field and
to ensure translation equivariance.
The noise level \(\sigma\) is conditioned using a time embedding module with linear layers.

\item ResNet: we use a minimal ResNet with residual block~\citep{he2016deep}, with a \(1 \times 1\) convolutional layer at the beginning and at the
end, similar to~\citep{kamb2025an}. Each residual block contains a convolutional layer with \(3 \times 3\) kernels at a channel dimension defined by
\texttt{num\_channels}, followed by a ReLU nonlinearity. The noise level \(\sigma\) is conditioned using sine-cosine positional embedding.

\item PatchMLP: a fully local MLP acting on patches. The MLP has 5 residual blocks, each containing two linear layers with hidden dimension of
\texttt{hidden\_dim}, followed by a layer normalization and GELU activation.
The noise level \(\sigma\) is conditioned using sine-cosine positional embedding.
\end{itemize}
All convolutional layers use circular padding to ensure translation equivariance and they have approximately \(4\) million parameters.
Details of the architectures with various receptive fields are provided in~\cref{table:supp_architecture_details}.
\begin{table}[t!]
\caption{Details of neural network architectures with various receptive fields used in our experiments for images at \(32 \times 32\) resolution.}
\label{table:supp_architecture_details}
\vspace*{-0.25cm}
\begin{center}
\begin{small}
    \begin{sc}
        \begin{tabular}{lccccc}
            \toprule
            & \multirow{2}{*}{\shortstack{Receptive field \\ (patch size)}} & \multicolumn{2}{c}{\multirow{2}{*}{Archi. hyper-parameters}} &
            \multirow{2}{*}{Num. parameters} \\
            & & & & \\
            \midrule
            % ---------------------------------------
            %         UNet2D
            % \multicolumn{4}{c}{UNet2D} \\
            UNet2D & & {\normalfont  \texttt{block\_out\_channels}} & {\normalfont  \texttt{kernel\_size}} & \\
            \midrule
            & \(5\) & \((96, 224, 480)\)     & \((3, 1, 1, 1)\)--\(1\) & 4.6M  \\
            & \(7\) & \((64, 192, 448)\)     & \((3, 1, 1, 1)\)--\(3\) & 5.1M \\
            & \(9\) & \((96, 192, 288)\)     & \((3, 1, 1, 1)\)--\(5\) & 4.3M \\
            & \(11\) & \((64, 96, 160, 288)\)     & \((3, 1, 1, 1, 3)\)--\(1\) & 4.3M \\
            % ---------------------------------------
            %         ResNet
            \midrule
            % \multicolumn{4}{c}{ResNet} \\
            ResNet & & {\normalfont  \texttt{num\_res\_blocks}} & {\normalfont  \texttt{num\_channels}} & \\
            \midrule
            & \(5\) & \(1\)     & \(640\) & 4.5M  \\
            & \(7\) & \(2\)     & \(448\) & 4.2M \\
            & \(9\) & \(3\)     & \(384\) & 4.5M \\
            & \(11\) & \(4\)     & \(328\) & 4.4M \\
            % ---------------------------------------
            %         MLP
            \midrule
            % \multicolumn{4}{c}{PatchMLP} \\
            PatchMLP & & {\normalfont  \texttt{hidden\_dim}} & {\normalfont  \texttt{num\_blocks}} & \\
            \midrule
            & \(5\) & \(7168\)     & \(5\) & 3.5M  \\
            & \(7\) & \(6144\)     & \(5\) & 3.7M \\
            & \(9\) & \(5120\)     & \(5\) & 4.3M \\
            & \(11\) & \(3072\)     & \(5\) & 4.0M \\
            \bottomrule
        \end{tabular}
    \end{sc}
\end{small}
\end{center}
\vskip -0.1in
\end{table}

\subsection{Training procedure}\label{sec:supp_training_procedure}
All models are trained on a single NVIDIA A100 GPU with the following settings:
\begin{itemize}
\item Optimizer: Adam optimizer~\citep{kingma2015adam}
\item Learning rate: starting at \(10^{-4}\) with cosine decay schedule and minimum learning rate at \(10^{-6}\).
\item Number of epochs: \(600\) for images at \(32 \times 32\) resolution and \(900\) for images at \(64 \times 64\) resolution.
\item Batch size: \(256\)
\item Exponential moving average (EMA) with decay rate \(0.99\) from the 1000-th training step and updates every \(5\) steps. The EMA weights are
used for evaluation.
\end{itemize}
\subsection{Forward operators}
We consider the \(3\) representative inverse problems as forward operators \(A\):
\begin{itemize}
\item Denoising: the forward operator is simply the identity \(A = \Id\) and only the physics-agnostic estimator is applicable in this case.
\item Inpainting: we consider the inpainting operator with a center square mask of size \(15 \times 15\). The forward operator \(A\) is therefore a
diagonal matrix with \(0\) on the masked pixels and \(1\) elsewhere.
In this case, the pseudo-inverse is \(BA^+  \!=\! A\): it simply removes noise inside the masked region and keeps the observed pixels unchanged.
In our experiments, we consider the physics-aware estimator with \(B = A^+ + \epsilon \Id\), where \(\epsilon = 10^{-5}\) is a small regularization parameter.
This ensures that \(B\) is full-rank and we can apply the constant-rank formula of the LE-MMSE estimator in~\cref{theorem:local_trans_estimator}.
As \(\epsilon\) is very small, this estimator closely approximates the ideal physics-aware estimator with \(B = A^+\), as discussed
in~\cref{remark:rank_deficient_B}.
\item Deconvolution: we consider an isotropic Gaussian blur kernel with standard deviation \(1.0\) with circular boundary conditions. The same kernel
is used for all color channels.
In this case, we build the full matrix \(A\) as a block-circulant matrix representing the convolution operation and \(B\) is its pseudo-inverse,
which is computed once, in double precision.
\end{itemize}
\vfill
\subsection{Analytical formula implementation}
Implementing the analytical formulas of the E-MMSE~\cref{thm:equivariant_mmse} and the LE-MMSE~\cref{theorem:local_trans_estimator} estimators
requires computing many distance terms between images or patches in the dataset \(\D\) and the observed measurement \(y\). We process by batch to
avoid memory overflow.
For numerical stability and avoidance of overflow/underflow, the exponential terms are accumulated using an online log-sum-exp trick.
All theoretical estimators are computed in an \emph{exact} manner without any approximation, modular finite precision arithmetic.
We use PyTorch~\citep{paszke2019pytorch} for implementation and all computations are performed in single precision (FP32) on a single A-100 GPU,
unless otherwise specified.
All estimators are implemented, even the rank-deficient cases.
Full implementation details are available at \url{\gitrepo}.

\pagebreak
\section{Additional numerical results}
\subsection{Comparison between trained neural networks and the analytical LE-MMSE estimator}\label{sec:supp_neural_vs_analytical}
We show in~\cref{fig:neural_vs_analytical_unet2d_patch_11,fig:neural_vs_analytical_resnet_patch_11,fig:neural_vs_analytical_patchmlp_patch_11}
additional PSNR results between trained neural networks and the analytical LE-MMSE estimator for different architectures (UNet2D, ResNet, PatchMLP)
on both training and test sets of various datasets.
We recover a consistent conclusion across different settings: the trained neural networks closely approximate the analytical LE-MMSE estimator, with
PSNR values exceeding \(20\) in most cases and often reaching above \(30\) dB.
% PSNR results for different architectures with patch size 11
\begin{figure*}[ht!]
\centering
\def\base{./images/neural_vs_analytical/localequivunet2dcondmodel_patch_11/plots}
\begin{minipage}{0.95\linewidth}
\includegraphics[width=\linewidth]{\base/FFHQ_images32x32_subset_10000_combined_psnr_grid.pdf}
\subcaption{FFHQ-32}
\end{minipage}
\begin{minipage}{0.95\linewidth}
\includegraphics[width=\linewidth]{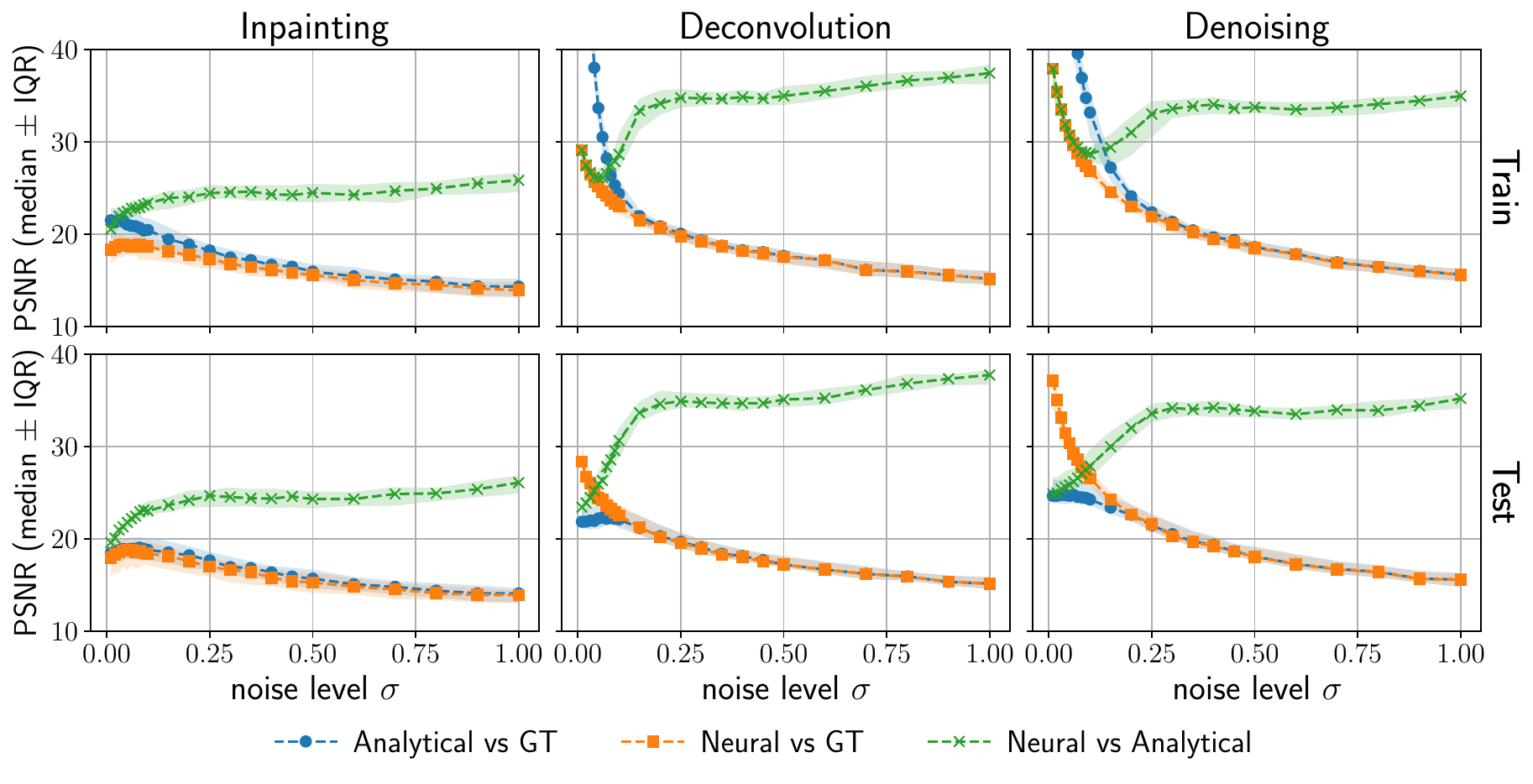}
\subcaption{CIFAR-10}
\end{minipage}
\begin{minipage}{0.95\linewidth}
\includegraphics[width=\linewidth]{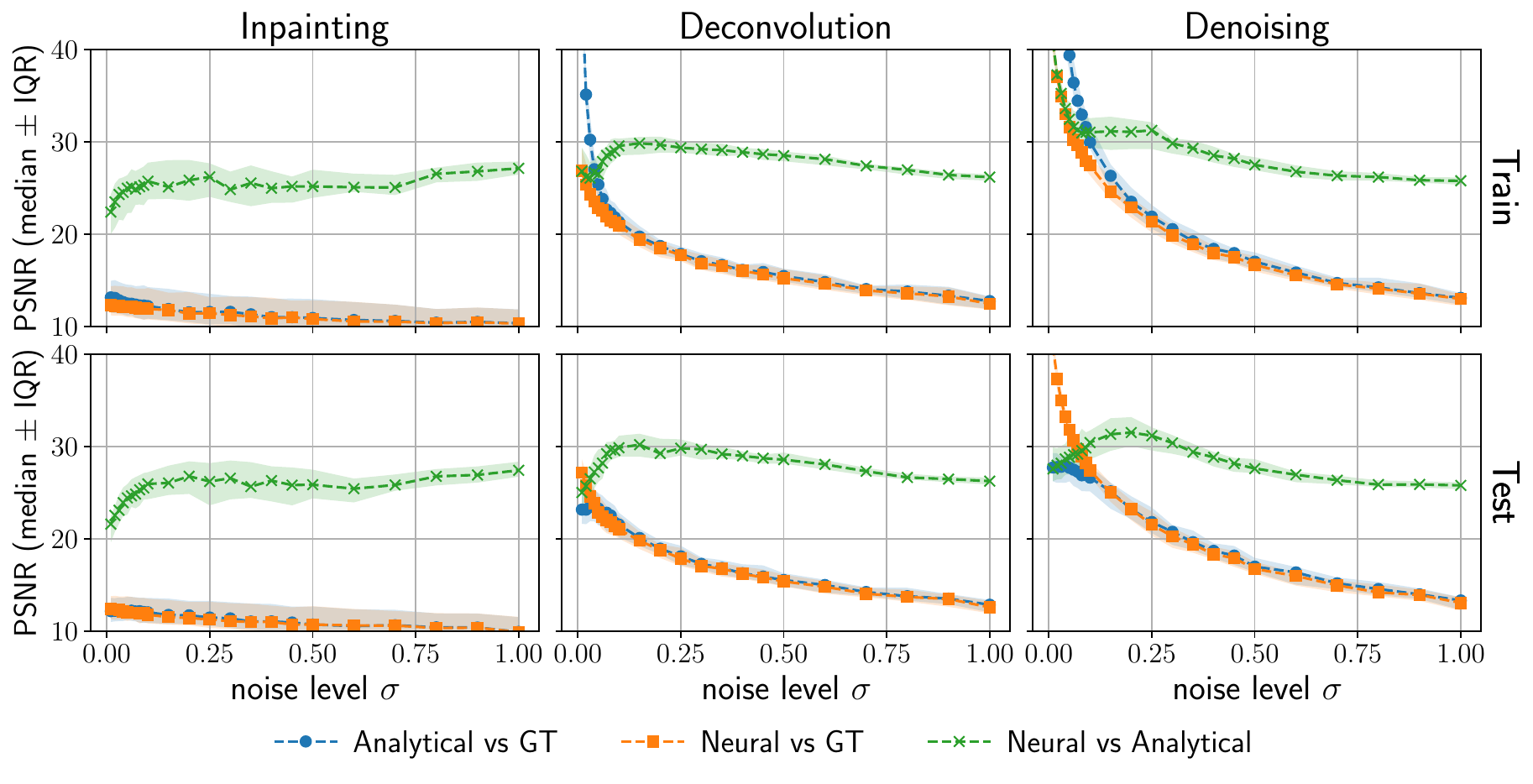}
\subcaption{Fashion-MNIST}
\end{minipage}
\caption{Additional PSNR between trained UNet2D and the analytical formula of the LE-MMSE for different inverse problems on both training and test
sets of various datasets.
The patch size is \(P = 11 \times 11\).
Left: physics-agnostic estimator with \(B\!=\!\Id\). Right: physics-aware estimator with \(B\!=\!A^+\).
We recover the same conclusions as in~\cref{fig:neural_vs_analytical_unet2d_patch_5}.
}
\label{fig:neural_vs_analytical_unet2d_patch_11}
\end{figure*}

\begin{figure*}[h!]
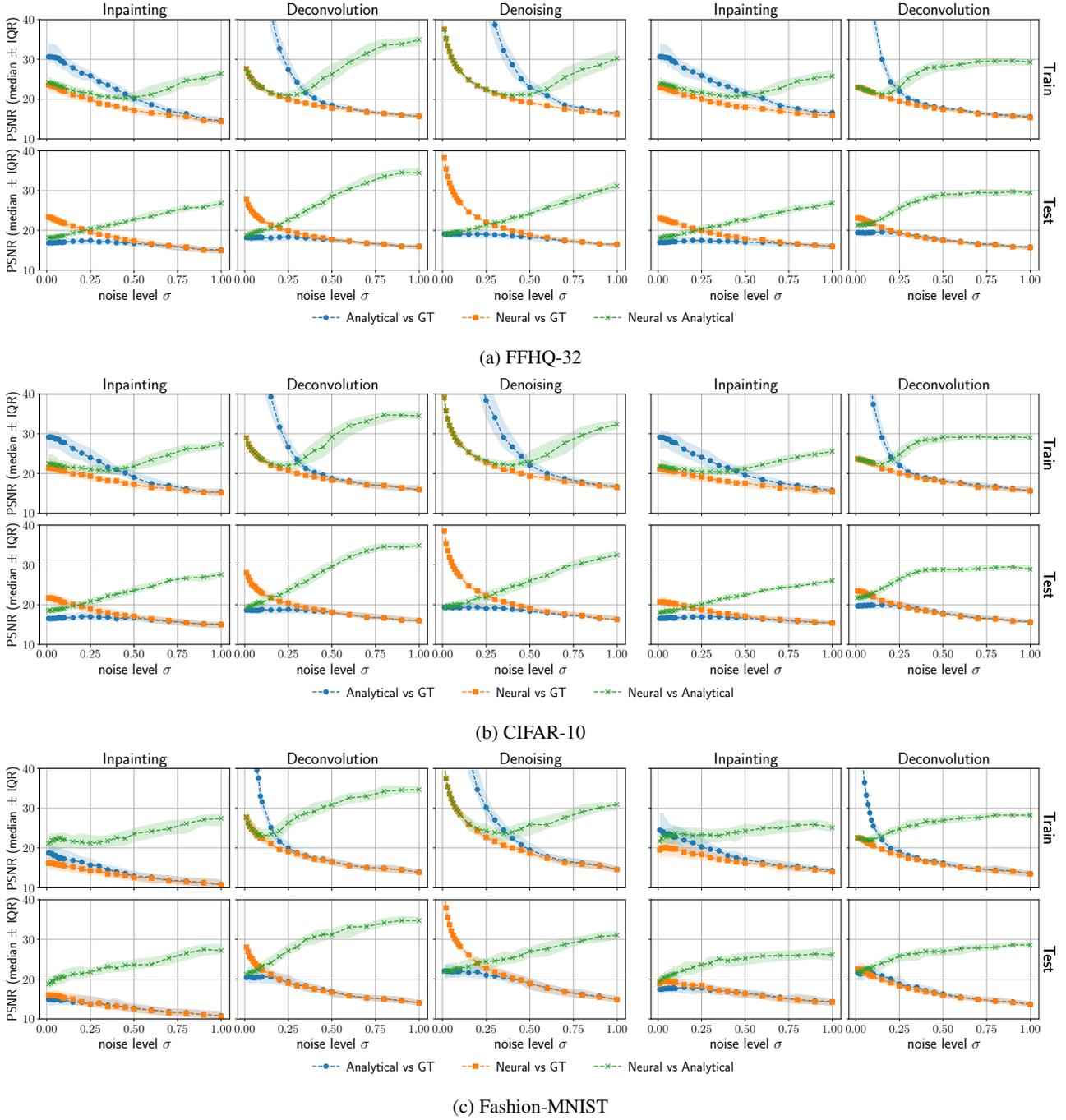

\centering
\def\base{./images/neural_vs_analytical/minimalresnet_patch_11/plots}
\begin{minipage}{\linewidth}
\includegraphics[width=\linewidth]{\base/FFHQ_images32x32_subset_10000_combined_psnr_grid.pdf}
\subcaption{FFHQ-32}
\end{minipage}
\begin{minipage}{\linewidth}
\includegraphics[width=\linewidth]{\base/CIFAR10_subset_10000_combined_psnr_grid.pdf}
\subcaption{CIFAR-10}
\end{minipage}
\begin{minipage}{\linewidth}
\includegraphics[width=\linewidth]{\base/FashionMNIST_subset_10000_combined_psnr_grid.pdf}
\subcaption{Fashion-MNIST}
\end{minipage}
\caption{PSNR between trained ResNet and the analytical formula of the LE-MMSE for different inverse problems on both training and test sets of
various datasets.
The patch size is \(P = 11 \times 11\).
Left: physics-agnostic estimator with \(B\!=\!\Id\). Right: physics-aware estimator with \(B\!=\!A^+\).
We recover the same conclusions accross architectures and settings.
}
\label{fig:neural_vs_analytical_resnet_patch_11}
\end{figure*}

\begin{figure*}[h!]
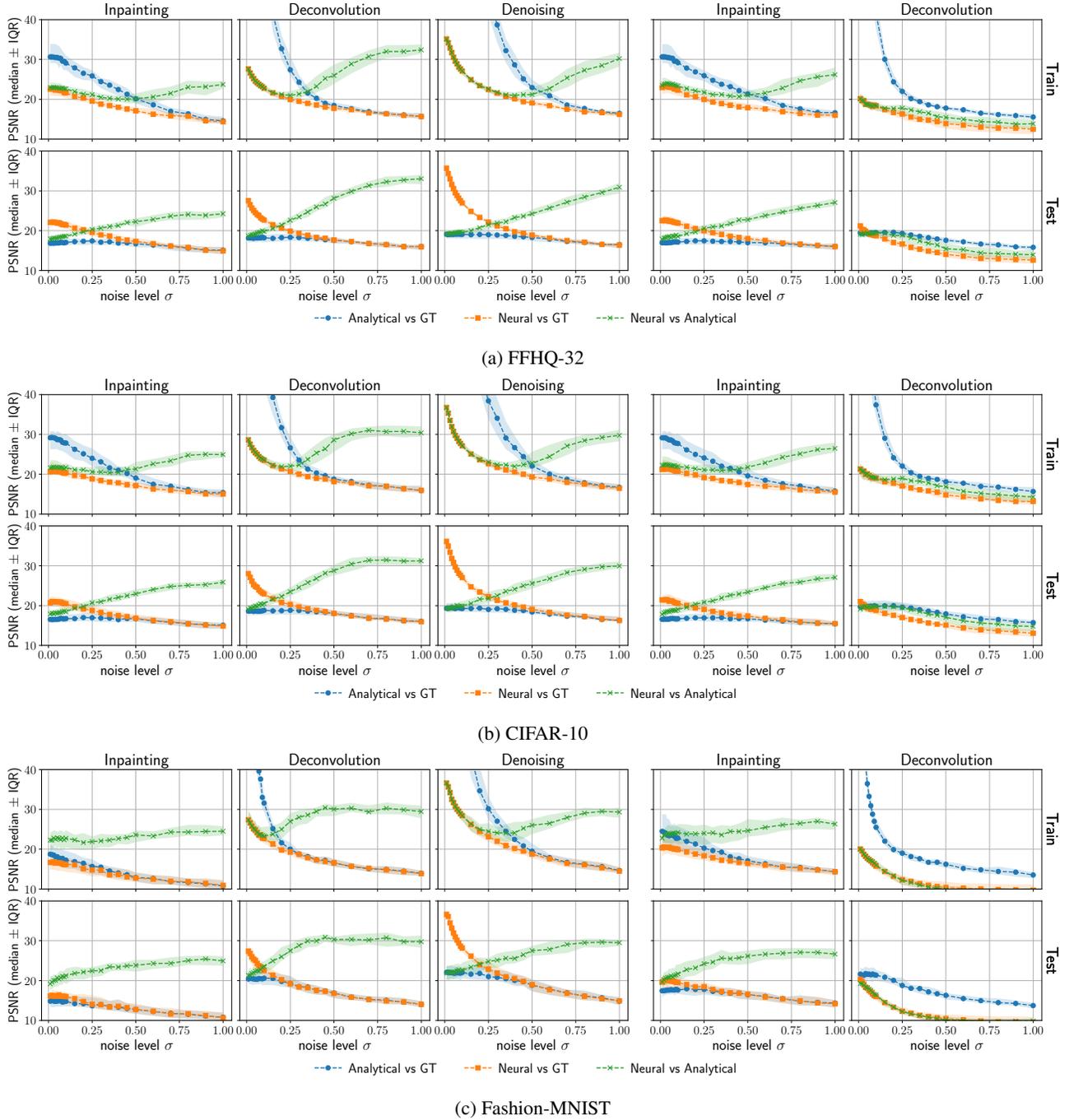

\centering
\def\base{./images/neural_vs_analytical/patchmlp_patch_11/plots}
\begin{minipage}{\linewidth}
\includegraphics[width=\linewidth]{\base/FFHQ_images32x32_subset_10000_combined_psnr_grid.pdf}
\subcaption{FFHQ-32}
\end{minipage}
\begin{minipage}{\linewidth}
\includegraphics[width=\linewidth]{\base/CIFAR10_subset_10000_combined_psnr_grid.pdf}
\subcaption{CIFAR-10}
\end{minipage}
\begin{minipage}{\linewidth}
\includegraphics[width=\linewidth]{\base/FashionMNIST_subset_10000_combined_psnr_grid.pdf}
\subcaption{Fashion-MNIST}
\end{minipage}
\caption{PSNR between trained PatchMLP neural network and the analytical formula of the LE-MMSE for different inverse problems on both training and
test sets of various datasets.
The patch size is \(P = 11 \times 11\).
We recover the same conclusions as
in~\cref{fig:neural_vs_analytical_unet2d_patch_5,fig:neural_vs_analytical_unet2d_patch_11,fig:neural_vs_analytical_resnet_patch_11}. An exception is
observed for the deconvolution task with physics-aware models (rightmost column). Here, the noise is amplified by the inversion of the blurring
operator and the local PatchMLP architecture struggles to accurately reconstruct fine details, leading to a lower PSNR comparing to CNNs. We
hypothesize that CNNs have other inductive biases that are more suited to handle such challenging tasks.
}
\label{fig:neural_vs_analytical_patchmlp_patch_11}
\end{figure*}
\paragraph{Structural and Inception metrics: SSIM and LPIPS}
To complement the PSNR metric, we also compute the structural similarity index (SSIM) and the Learned Perceptual Image Patch Similarity (LPIPS) metrics.
\begin{figure*}[h!]
\centering
\includegraphics[height=0.195\linewidth, trim=0 0 20mm 0, clip]{./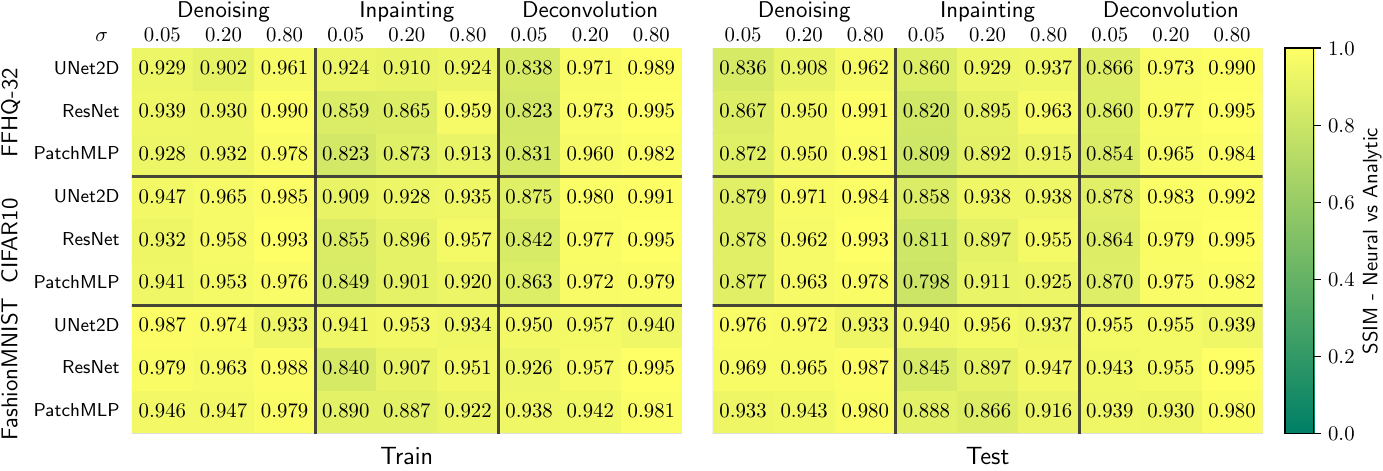}
\includegraphics[height=0.195\linewidth, trim=22mm 0 20mm 0, clip]{./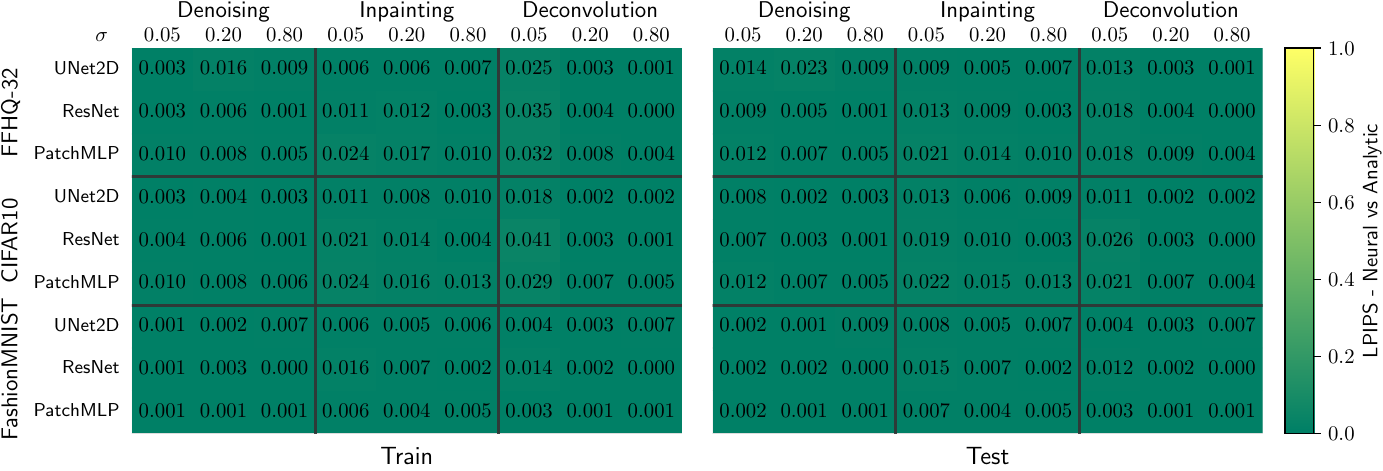}
\caption{Addition metrics for the alignment between the trained UNet2D and the analytical LE-MMSE, with receptive field of \(P = 5 \times 5\).
This complements the PSNR results in~\cref{fig:neural_vs_analytical_unet2d_patch_5} and confirms the same conclusions.
Left: SSIM. Right: LPIPS.}
\label{fig:supp_ssim_lpips}
\end{figure*}
\null
\pagebreak
\null
\pagebreak
% # Qualitative comparison for different patch sizes
\subsection{Addition qualitative comparison}
We provide additional qualitative comparisons between the analytical LE-MMSE estimator and trained neural networks (UNet2D, ResNet, PatchMLP) for
different patch sizes
in~\cref{fig:qualitative_neural_vs_analytical_patch_5,fig:qualitative_neural_vs_analytical_patch_7,fig:qualitative_neural_vs_analytical_patch_9,fig:qualitative_neural_vs_analytical_patch_11}
on the FFHQ, CIFAR10 and FashionMNIST datasets.
Overall, the neural networks closely match the analytical solution on both training and test sets across all datasets and inverse problems,
confirming our theoretical findings.
\begin{figure*}[ht!]
\centering
\vskip -0.1in
\includegraphics[trim=35mm 0 0 0, clip,width=\textwidth]{./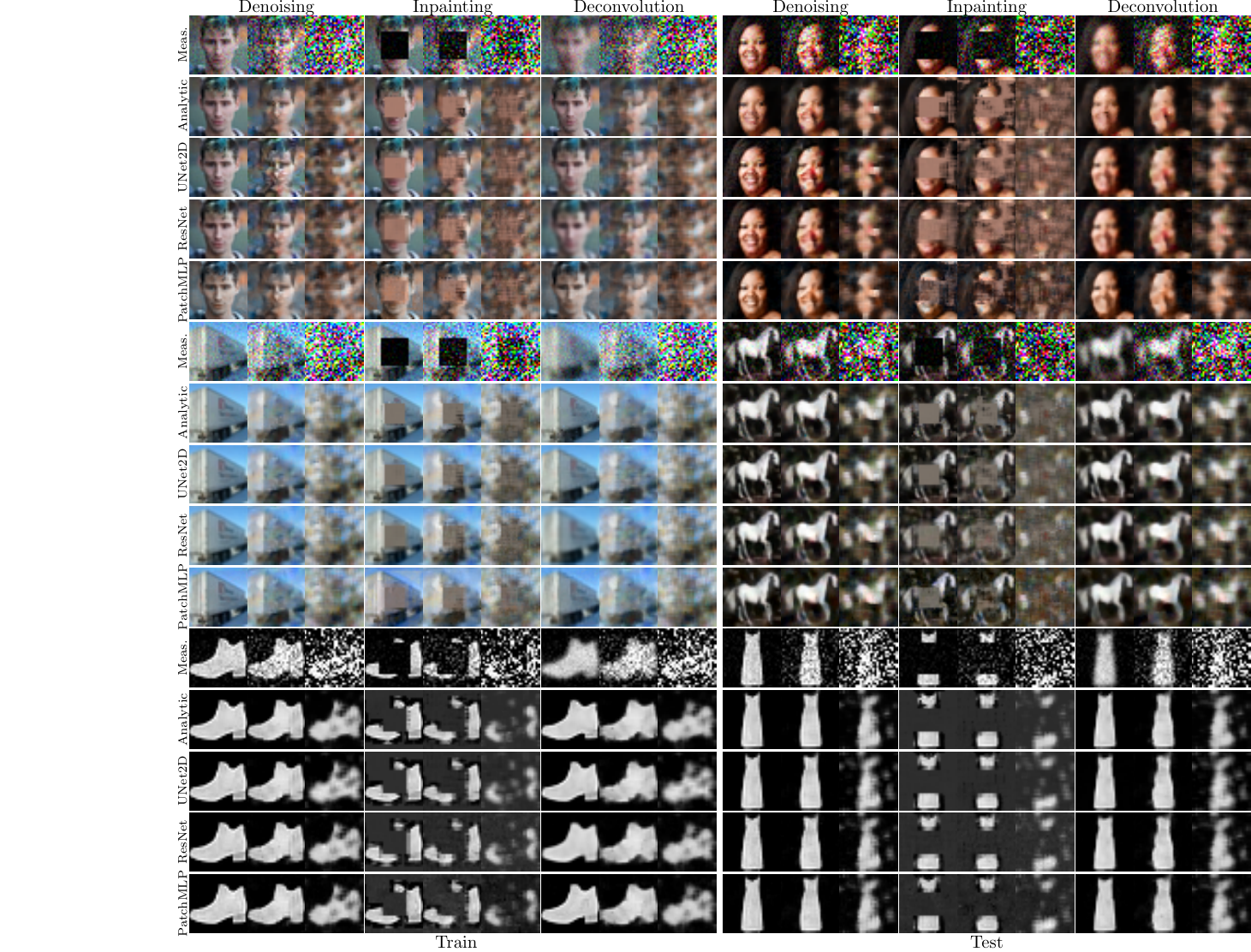}
\vskip -0.1in
\caption{Qualitative comparison.
The patch size is \(P = 5 \times 5\) and \(B\!=\!\Id\).
The noise level is \(\sigma = 0.05,0.2,0.8\) from left to right for each column.
The neural networks (UNet2D, ResNet, and PatchMLP) closely match the analytical LE-MMSE on both training and test sets across all datasets
(FFHQ,CIFAR10, and FashionMNIST) and inverse problems, confirming our theoretical findings.
}
\label{fig:qualitative_neural_vs_analytical_patch_5}
\vskip -0.15in
\end{figure*}

\begin{figure*}[ht!]
\centering
\vskip -0.1in
\includegraphics[trim=35mm 0 0 0, clip,width=\textwidth]{./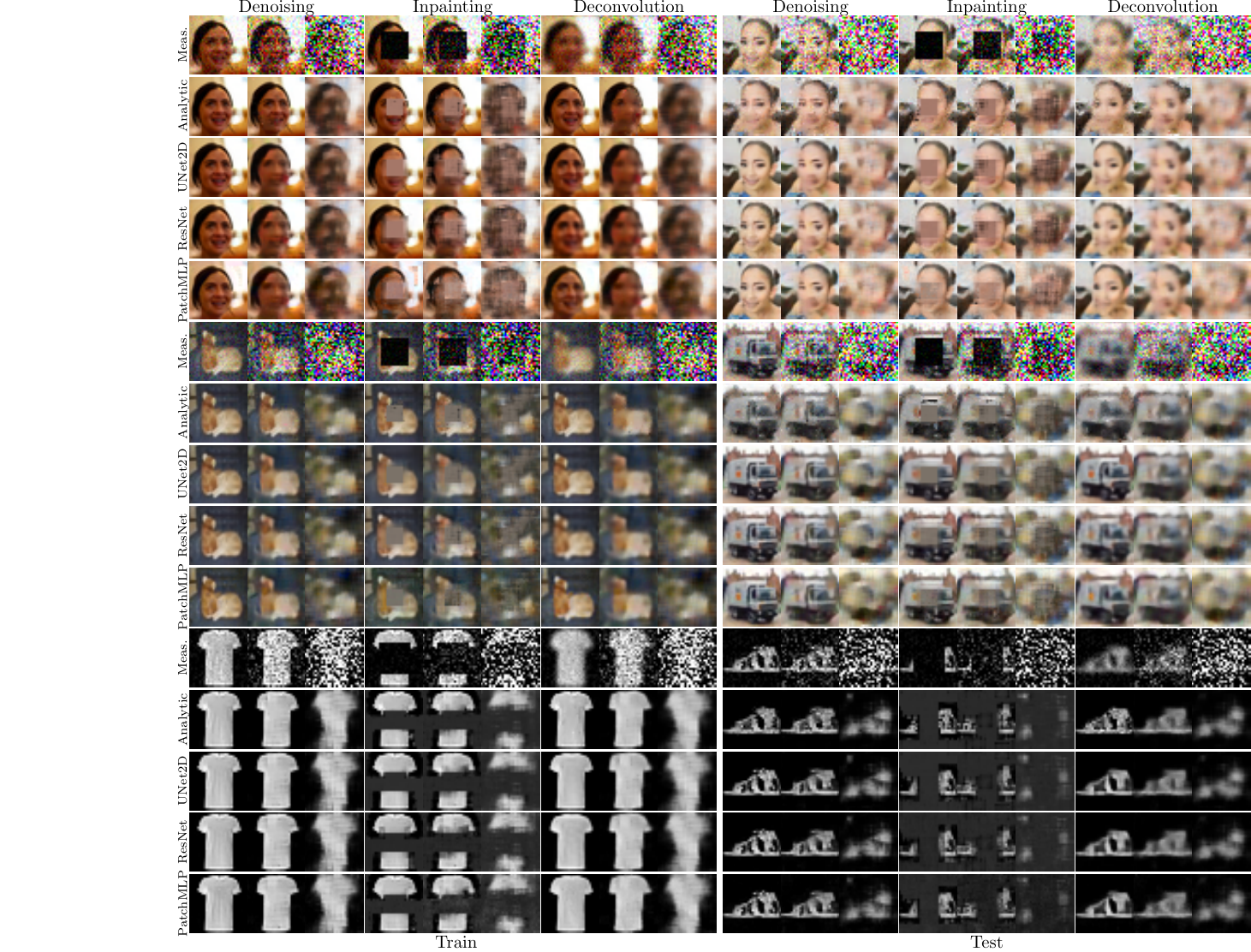}
\vskip -0.15in
\caption{Qualitative comparison between the analytical formula LE-MMSE, UNet2D, ResNet, and PatchMLP, with \(B\!=\!\Id\) on FFHQ, CIFAR10 and FashionMNIST.
The patch size is \(P=7 \times 7\).
The noise level is \(\sigma = 0.05,0.2,0.8\) from left to right for each column.
The neural networks closely match the analytical solution on both training and test sets across all datasets and inverse problems, confirming our
theoretical findings.
Yet, some discrepancies can be observed, especially at low noise levels on the test set, which we attribute to generalization issues discussed
in~\cref{sec:neural_generalization}.
}
\label{fig:qualitative_neural_vs_analytical_patch_7}
\vskip -0.15in
\end{figure*}

\begin{figure*}[ht!]
\centering
\vskip -0.1in
\includegraphics[trim=35mm 0 0 0, clip,width=\textwidth]{./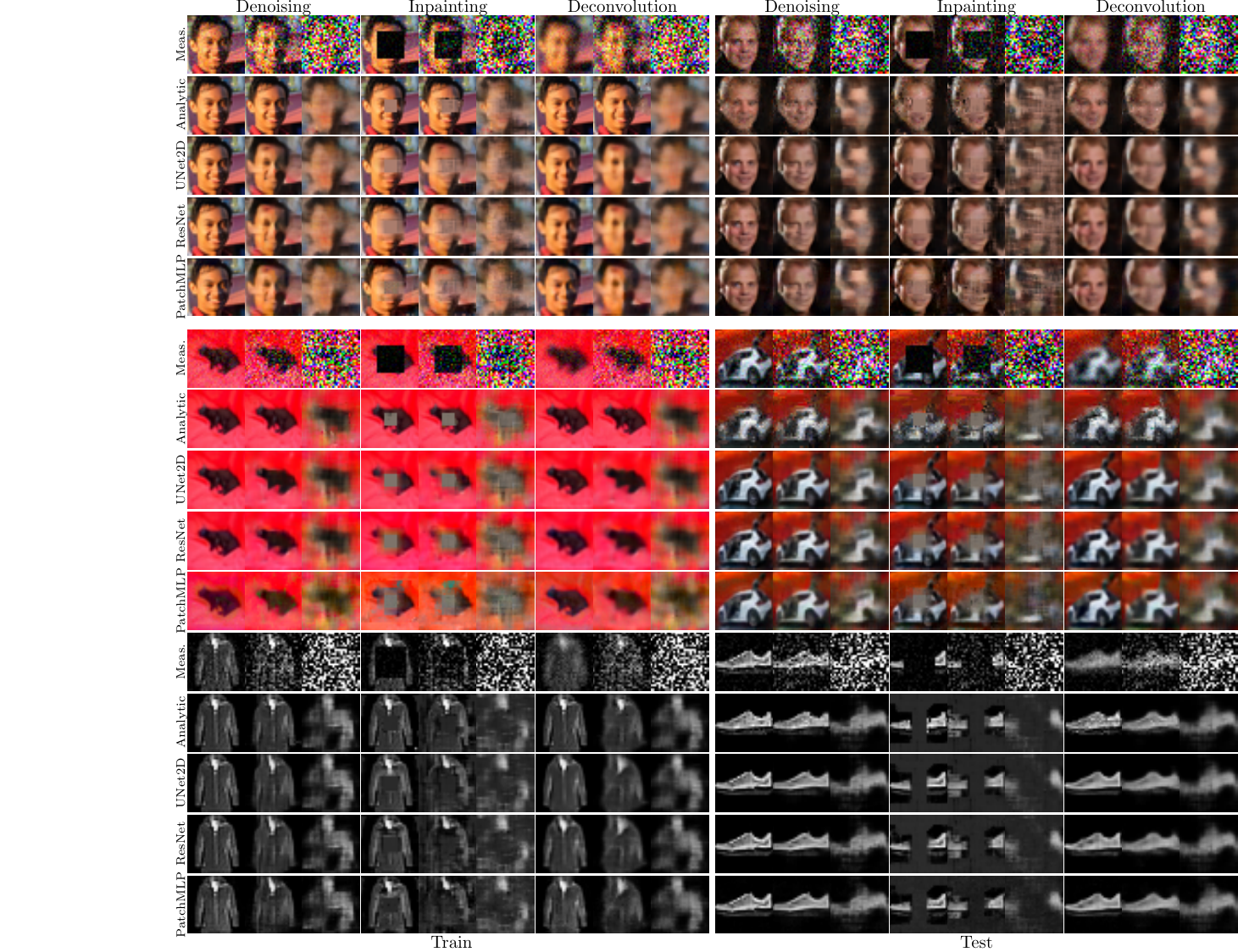}
\vskip -0.15in
\caption{Qualitative comparison between the analytical formula LE-MMSE, UNet2D, ResNet, and PatchMLP, with \(B\!=\!\Id\) on FFHQ, CIFAR10 and FashionMNIST.
The patch size is \(P=9\).
The noise level is \(\sigma = 0.05,0.2,0.8\) from left to right for each column.
The neural networks closely match the analytical solution on both training and test sets across all datasets and inverse problems, confirming our
theoretical findings.
Yet, some discrepancies can be observed, especially at low noise levels on the test set, which we attribute to generalization issues discussed
in~\cref{sec:neural_generalization}.
}
\label{fig:qualitative_neural_vs_analytical_patch_9}
\vskip -0.15in
\end{figure*}

\begin{figure*}[ht!]
\centering
\vskip -0.1in
\includegraphics[trim=35mm 0 0 0, clip,width=\textwidth]{./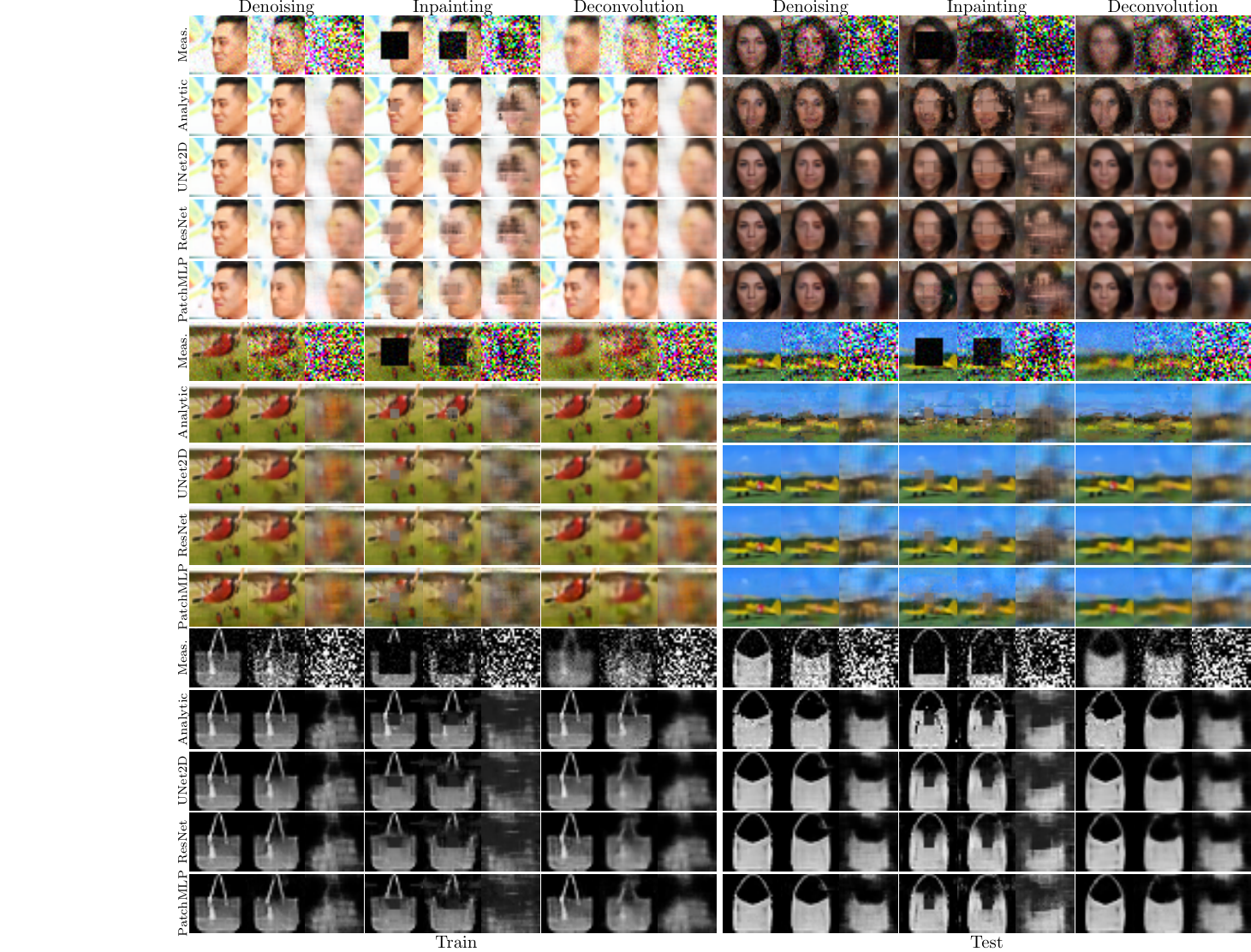}
\vskip -0.15in
\caption{Qualitative comparison between the analytical formula LE-MMSE, UNet2D, ResNet, and PatchMLP, with \(B\!=\!\Id\) on FFHQ, CIFAR10 and FashionMNIST.
The patch size is \(P=11\).
The noise level is \(\sigma = 0.05,0.2,0.8\) from left to right for each column.
The neural networks closely match the analytical solution on both training and test sets across all datasets and inverse problems, confirming our
theoretical findings.
Yet, some discrepancies can be observed, especially at low noise levels on the test set, which we attribute to generalization issues discussed
in~\cref{sec:neural_generalization}.
}
\label{fig:qualitative_neural_vs_analytical_patch_11}
\vskip -0.15in
\end{figure*}

\null
\pagebreak
\null
\pagebreak
\null
\pagebreak
\subsection{Influence of the patch size (receptive field)}\label{sec:supp_patch_size}
\paragraph{Density of the patch distribution}
We analyze here the influence of the patch size on the density of the patch distribution in FFHQ-32 dataset.
We compute the negative-log-density of patches as a function of the patch size, for points on either the training set or the test set.
The patch density is exactly the denominator term in the analytical LE-MMSE formula~\cref{eq:loc_equiv_formula}.
The results are shown in~\cref{fig:supp_patch_size_vs_density}.
\begin{figure}[ht!]
\centering
\vskip -0.1in
\includegraphics[width=\linewidth]{./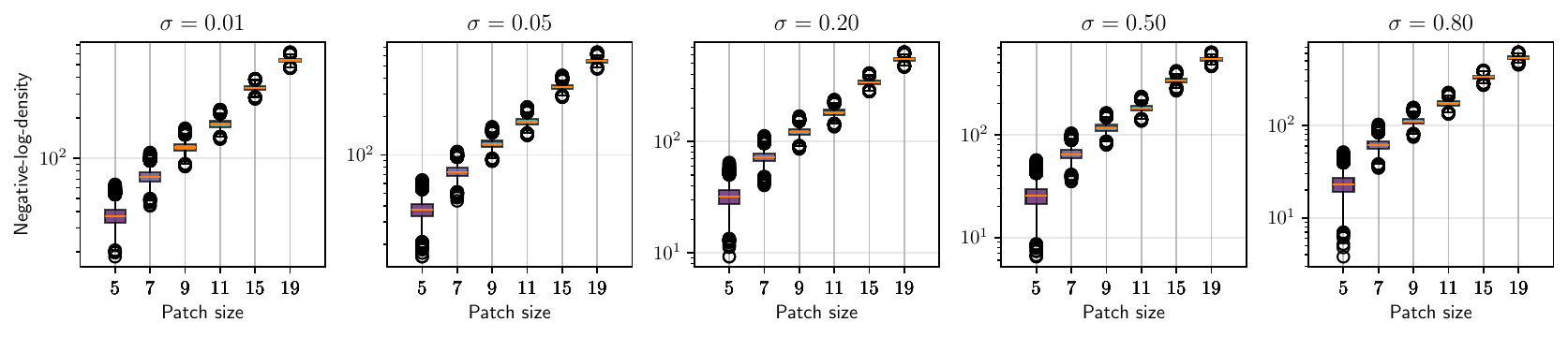}
\includegraphics[width=\linewidth]{./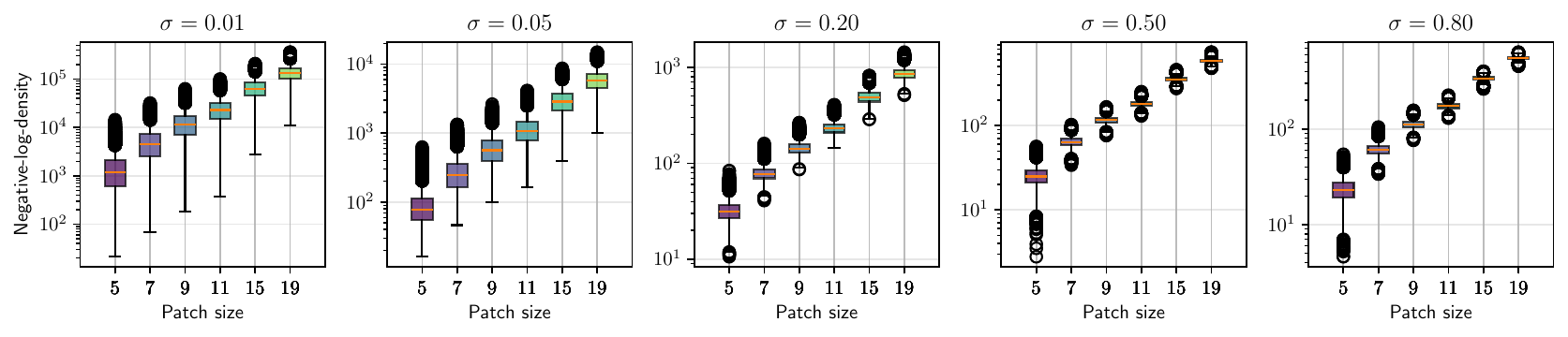}
\vskip -0.1in
\caption{The negative-log-density of patches in FFHQ-32 training set (top) and test set (bottom) as a function of the patch size.
As the patch size increases, the density of patches decreases significantly, indicating a sparser coverage of the patch space by the dataset.
In the training set, the negative-log-density is relatively lower (around \(10^2\)) than in the test set (around \(10^4 \) for small noise levels),
indicating a better coverage of the patch space by the training set.
This observation helps explain the drop in PSNR between trained neural networks and the analytical LE-MMSE estimator in the test set for low noise levels.
}
\label{fig:supp_patch_size_vs_density}
\end{figure}

\null
\pagebreak
\paragraph{Patch size influence}
In~\cref{fig:supp_neural_analytic_vs_patch_size}, we show the PSNR between the trained UNet2D and the analytical LE-MMSE estimator for different
patch sizes on both training and test sets of FFHQ-32 dataset.
For low noise levels and in the test set, the PSNR decreases as the patch size increases, which we attribute to the sparser coverage of the patch
space by the dataset for larger patches, as shown in~\cref{fig:supp_patch_size_vs_density}.

\begin{figure}[ht!]
\centering
\vskip -0.1in
\includegraphics[width=.75\linewidth]{./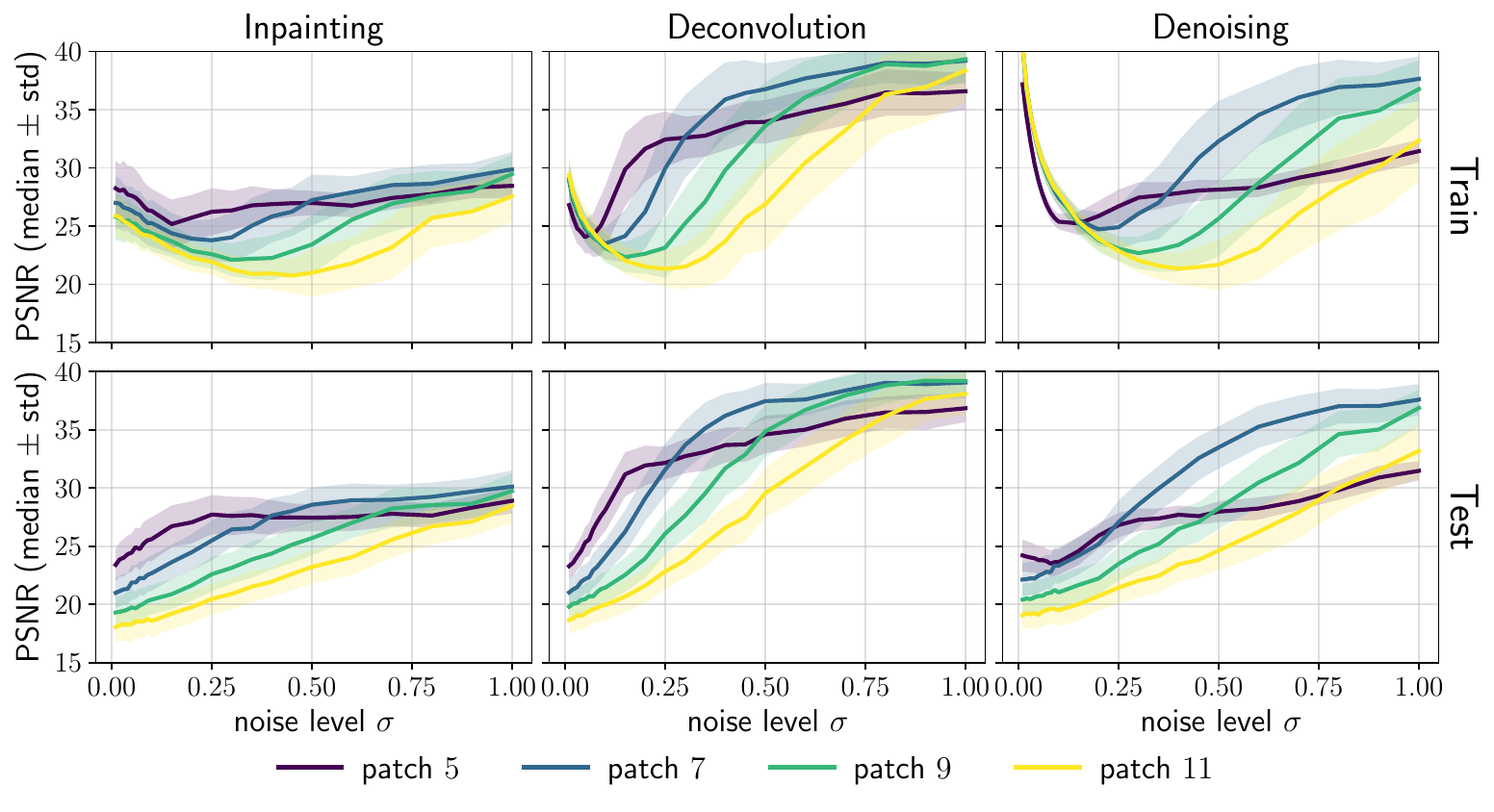}
\vskip -0.1in
\caption{PSNR between trained UNet2D and the analytical formula of the LE-MMSE versus the patch size, on the FFHQ-32 dataset.
Top: on the training set. Bottom: on the test set.
}
\label{fig:supp_neural_analytic_vs_patch_size}
\end{figure}

In~\cref{fig:supp_analytic_vs_gt_patch_size}, we show the PSNR between the analytical LE-MMSE estimator and the ground truth as a function of the
patch size on both training and test sets of FFHQ-32 dataset.
We observe that -- on the test set -- smaller patch size are preferable for low noise levels, while larger patch sizes yield better performance for
high noise levels.
The behavior on the training set is different: for low noise levels, the analytical formula copy-pastes exactly the right patches, explaining the
blow up of PSNR at the origin.

\begin{figure}[ht!]
\centering
\vskip -0.1in
\includegraphics[width=.75\linewidth]{./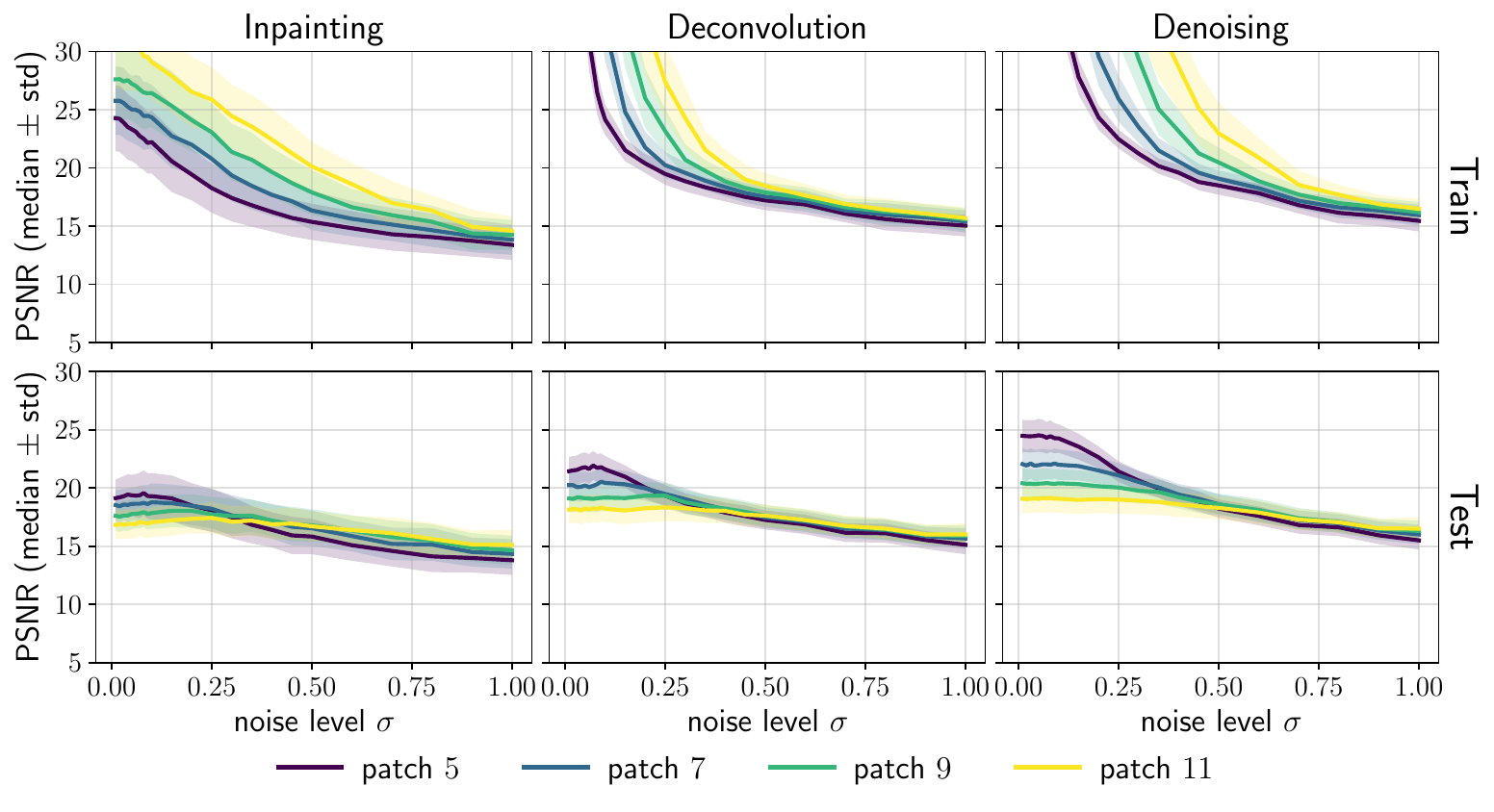}
\vskip -0.1in
\caption{PSNR between the analytical formula of the LE-MMSE and the ground truth versus the patch size, on the FFHQ-32 dataset.
Top: on the training set. Bottom: on the test set.
}
\label{fig:supp_analytic_vs_gt_patch_size}
\end{figure}

% \null
% \pagebreak

\subsection{Mass concentration}\label{sec:supp_mass_concentration}
The LE-MMSE estimator~\cref{theorem:local_trans_estimator} is a weighted average of the central pixel values of patches in the dataset.
To understand the behavior of the estimator, we analyze how many patches contribute significantly to the estimate at each pixel location.
In~\cref{fig:supp_mass_concentration}, we show the number of patches contributing to \(99\%\) of the mass of the LE-MMSE estimator as a function of
the noise level \(\sigma\).
We observe that for low noise levels, the estimator concentrates its mass on fewer patches (even nearest neighbors).
As the noise level increases, the number of contributing patches increases significantly, with a critical value of \(\sigma\) where the number of
patches starts to increase rapidly.
This behavior shows that although the MMSE estimator is typically associated with averaging, it can behave like a nearest-neighbor estimator when the
noise is small, due to the strong concentration of mass on a very limited subset of patches.
We also observe a clear difference between physics-agnostic and physics-informed settings for deconvolution, where the latter has a significantly
higher number of contributing patches due to the amplification of noise by the pseudo-inverse of the blurring operator.
\begin{figure}[ht!]
\centering
\vskip -0.1in
\includegraphics[width=0.6\columnwidth]{./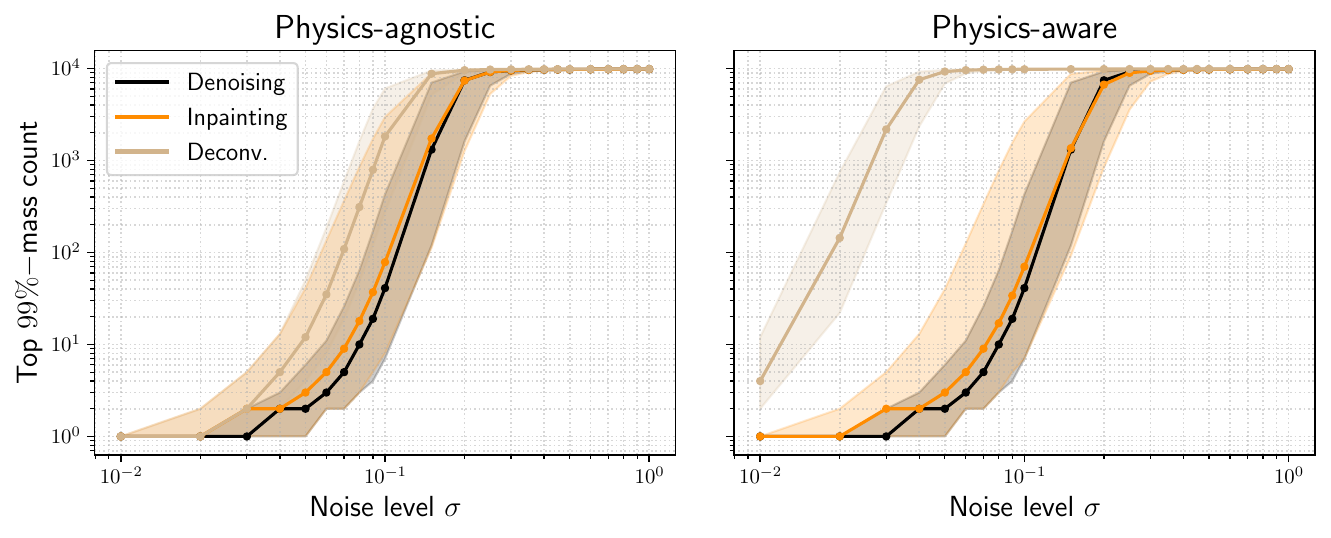}
\vskip -0.1in
\caption{
For low noise levels, the LE-MMSE estimator concentrates its mass on fewer patches (even nearest neighbors).
Here we show the median and IQR (over all pixels of \(50\) samples on the test set, for each \(\sigma\)) of the number of patches contributing to
\(99\%\) of the mass of the LE-MMSE estimator.
There is a significant increase in number of contributed patches as the noise level \(\sigma\) increases,
and a critical value of \(\sigma\) where the number of patches starts to increase rapidly.
The patch size is \(P = 5 \times 5\), on FFHQ-32.
Note that due to computational constraints, we only keep the top \(10^4\) nearest patches (over \(32 \times 32 \times 10^4 \approx 10^7\) patches)
when computing mass concentration, but the estimator is still exact as we accumulate all the mass in an online manner.
}
\label{fig:supp_mass_concentration}
\vskip -0.1in
\end{figure}
We also visualize in~\cref{fig:supp_mass_concentration_visual,fig:supp_mass_concentration_visual_inform} the \(\log_{10}\) of number of patches
contributing to \(99\%\) of the mass of the LE-MMSE estimator at each pixel location for different inverse problems on FFHQ-32 dataset.
We observe that for low noise levels, the nearest patch (or a very small number of patches) contributes \(99\%\) of the mass at each pixel location.
\begin{figure}[ht!]
\centering
\vskip -0.1in
\def\root{./images/patch_works_visuals/visual_FFHQ_images32x32}
\def\size{0.95\linewidth}
\begin{minipage}{\size}
\includegraphics[width=\linewidth]{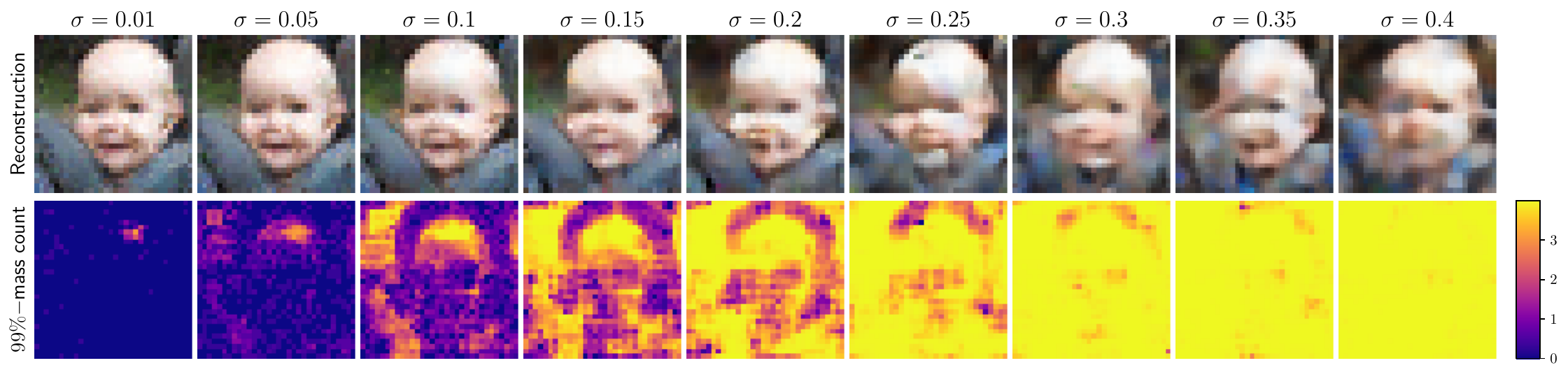}
\subcaption{Denoising}
\end{minipage}
\begin{minipage}{\size}
\includegraphics[width=\linewidth]{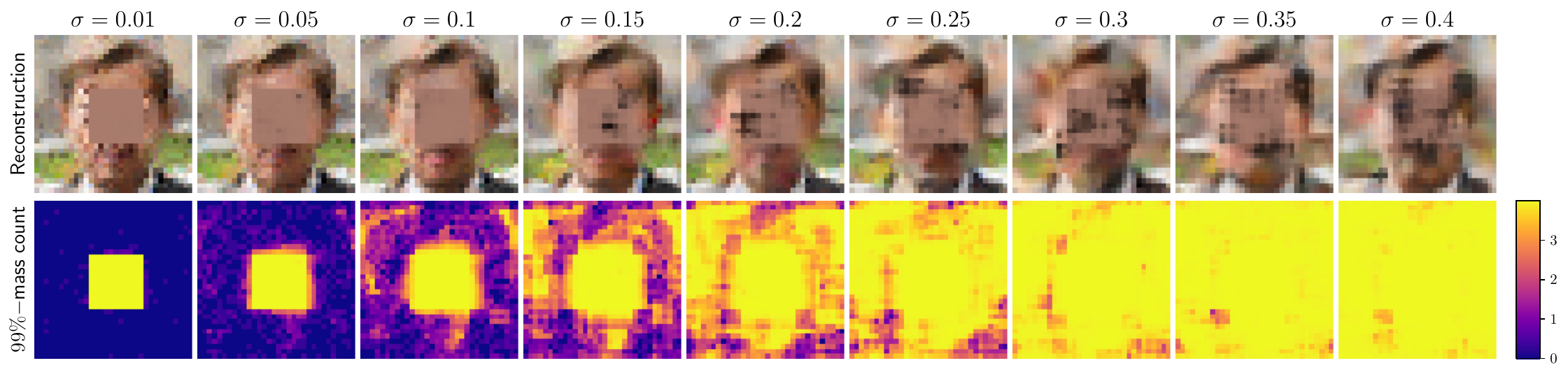}
\subcaption{Inpainting (physic-agnostic)}
\end{minipage}
\begin{minipage}{\size}
\includegraphics[width=\linewidth]{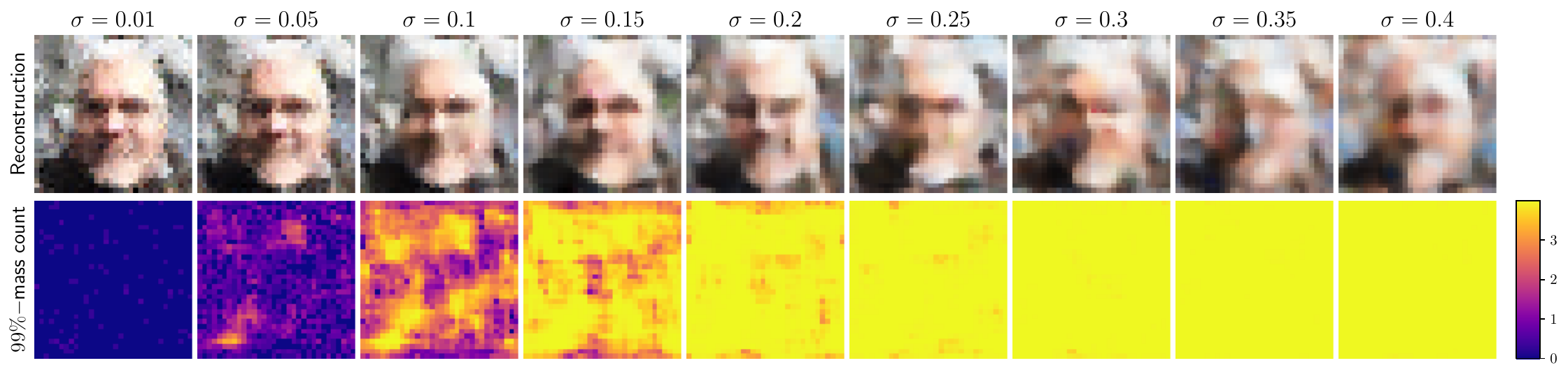}
\subcaption{Deconvolution (physic-agnostic)}
\end{minipage}
% \begin{minipage}{\size}
%     \includegraphics[width=\linewidth]{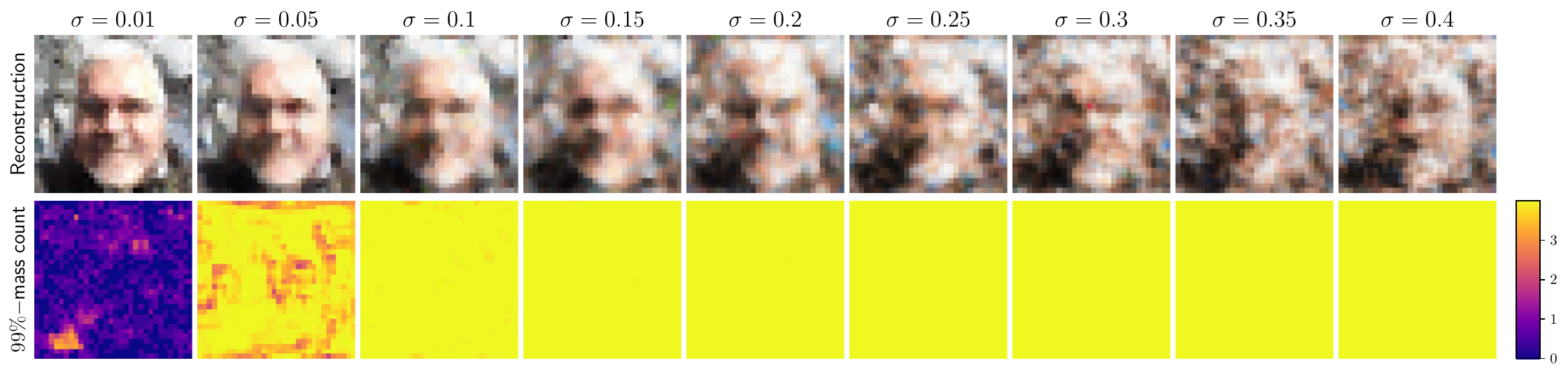}
%     \subcaption{Deconvolution (physic-informed)}
% \end{minipage}
% \begin{minipage}{\size}
%     \includegraphics[width=\linewidth]{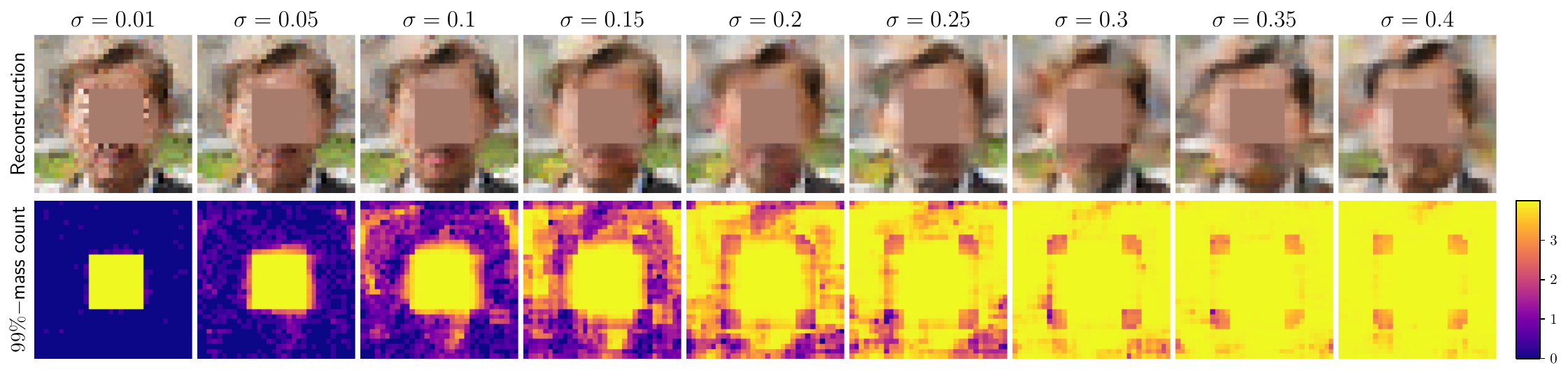}
%     \subcaption{Inpainting (physic-informed)}
% \end{minipage}
\caption{Visualization of the number of patches (in \(\log_{10}\) scale) contributing to \(99\%\) of the mass of the physic-agnostic LE-MMSE
estimator at each pixel location for different inverse problems on FFHQ-32 dataset.
}\label{fig:supp_mass_concentration_visual}
\vskip -0.1in
\end{figure}
\begin{figure}[ht!]
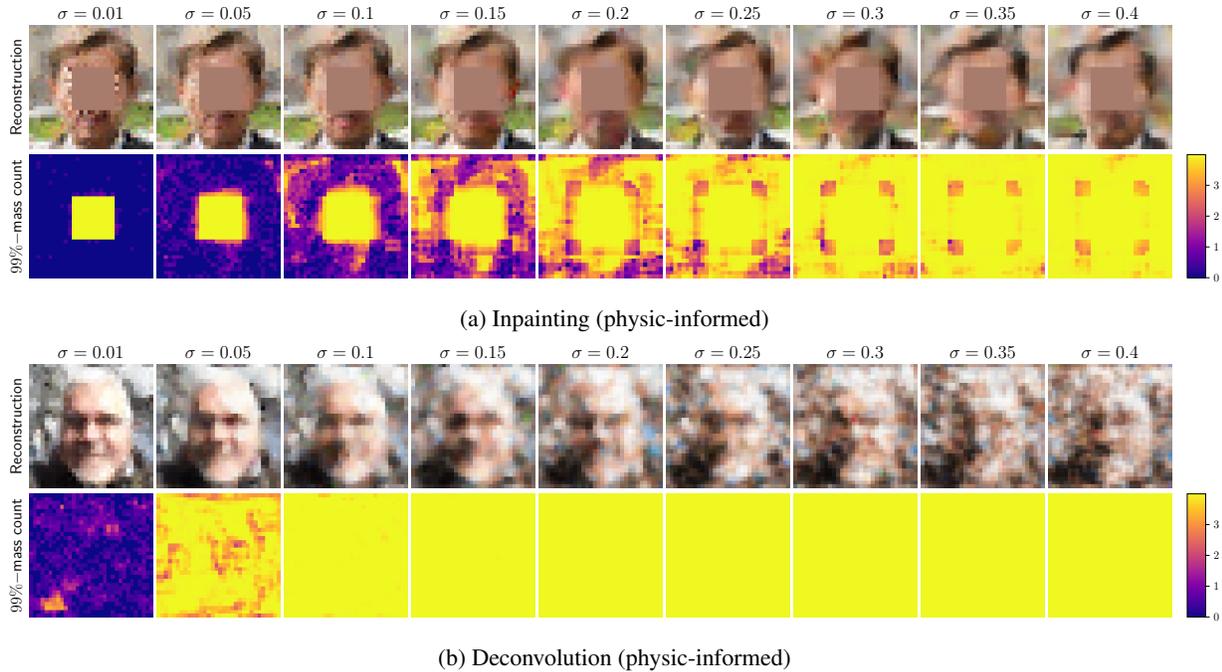

\centering
\vskip -0.1in
\def\root{./images/patch_works_visuals/visual_FFHQ_images32x32}
\def\size{0.95\linewidth}
\begin{minipage}{\size}
\includegraphics[width=\linewidth]{\root_inpainting_center_15_inform_img1_p5.pdf}
\subcaption{Inpainting (physic-informed)}
\end{minipage}
\begin{minipage}{\size}
\includegraphics[width=\linewidth]{\root_convolution_gaussian_1.0_inform_img2_p5.pdf}
\subcaption{Deconvolution (physic-informed)}
\end{minipage}
\caption{Visualization of the number of patches (in \(\log_{10}\) scale) contributing to \(99\%\) of the mass of the physic-inform LE-MMSE estimator
at each pixel location for different inverse problems on FFHQ-32 dataset.
For deconvolution, the number of contributing patches is significantly higher than for the physic-inform case, due to the amplification of noise by
the pseudo-inverse of the blurring operator.
}
\vskip -0.1in
\label{fig:supp_mass_concentration_visual_inform}
\end{figure}

\null
\pagebreak
\null
\pagebreak
\subsection{LE-MMSE is a patchwork}\label{sec:supp_patch_works}
From the LE-MMSE formula~\cref{theorem:local_trans_estimator}, we see that the estimate at each pixel location is a weighted average of the central
pixel values of patches in the dataset.
In fact, the estimator often concentrates its mass on very few patches, as shown in~\cref{sec:supp_mass_concentration}.
Moreover, contiguous pixels may come from the central pixels of patches of the same image in the dataset, leading to a patchwork behavior.

We visualize in~\cref{fig:supp_patch_index_ffhq,fig:supp_patch_index_FashionMNIST} this behavior for different inverse problems on FFHQ-32 and
FashionMNIST datasets.
More precisely, at each pixel location, the source index is the index of the image in the dataset that contains the patch whose central pixel
contributes at least \(50\%\) of the mass of the LE-MMSE estimator at that pixel location.
We observe that, large contiguous regions of the estimate come from the same source image in the dataset, which we can interpret as the estimator to
be a patchwork of training patches.
As the noise level increases, more pixels become white, meaning that the corresponding pixel value is not mostly due to a single patch: the
``patchwork effect'' diminishes, as more patches contribute to the estimate at each pixel location.
\begin{figure*}[ht!]
\centering
% \vskip -0.1in
\includegraphics[trim=10mm 0 0 0,clip,width=0.9\linewidth]{./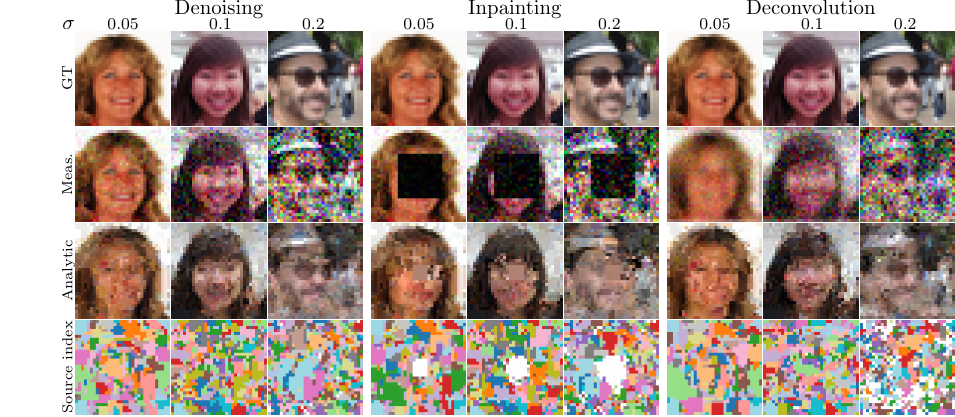}
\vskip -0.1in
\caption{Illustration of the patchwork behavior of the LE-MMSE estimator (with \(B\!=\!\Id\)) on FFHQ-32 dataset for different inverse problems and
noise levels. We use a patch size of \(P = 11 \times 11\).
We observe contiguous regions coming from the same source image in the dataset for low noise levels.
White pixels correspond to locations where no patch contributes more than \(50\%\) of the mass.
}
\label{fig:supp_patch_index_ffhq}
\end{figure*}
\begin{figure*}[ht!]
\centering
\includegraphics[trim=10mm 0 0 0,clip,width=0.9\linewidth]{./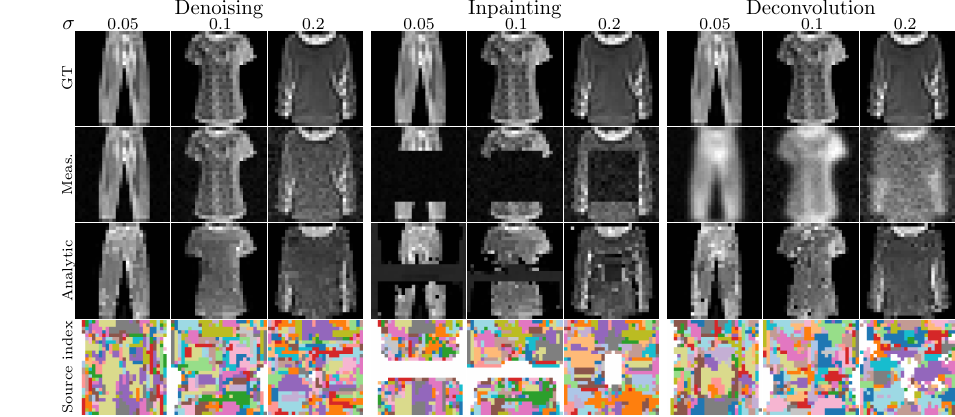}
\vskip -0.1in
\caption{Similar illustration as in~\cref{fig:supp_patch_index_ffhq} but on FashionMNIST dataset.}
\label{fig:supp_patch_index_FashionMNIST}
\end{figure*}

Note that we use a patch size of \(P = 11 \times 11\) to better visualize the patchwork behavior.
This choice results in slightly weaker reconstruction performance, as discussed in~\cref{sec:supp_patch_size}.

\null
\pagebreak
\subsection{Out-of-distribution data}\label{sec:supp_ood}
We provide additional results on out-of-distribution (OOD) data using UNet2D architecture with receptive field of size \(5 \times 5\).
The models and the formula are trained / evaluated on \(\D=\) FFHQ-32. They are then evaluated on OOD dataset \(\D'=\) CIFAR10.
We show in~\cref{fig:supp_ood_ffhq_cifar10} the PSNR between trained UNet2D and the analytical LE-MMSE estimator.
For large noise levels, as the Gaussians overlap significantly even on OOD data, the value of \(-\log p(y)\) is low and the trained neural network
closely matches the analytical LE-MMSE estimator with PSNR values above \(25\) dB.
For low noise levels, the PSNR is about \(3\) dB lower than in the in-distribution case shown in~\cref{fig:neural_vs_analytical_unet2d_patch_5},
which we attribute to the low-density of the measurement density \(p(y)\), as discussed in~\cref{sec:neural_generalization} and can be seen
in~\cref{fig:supp_ood_ffhq_cifar10}.
\begin{figure}[ht!]
\centering
\vskip -0.1in
\includegraphics[width=0.65\columnwidth]{./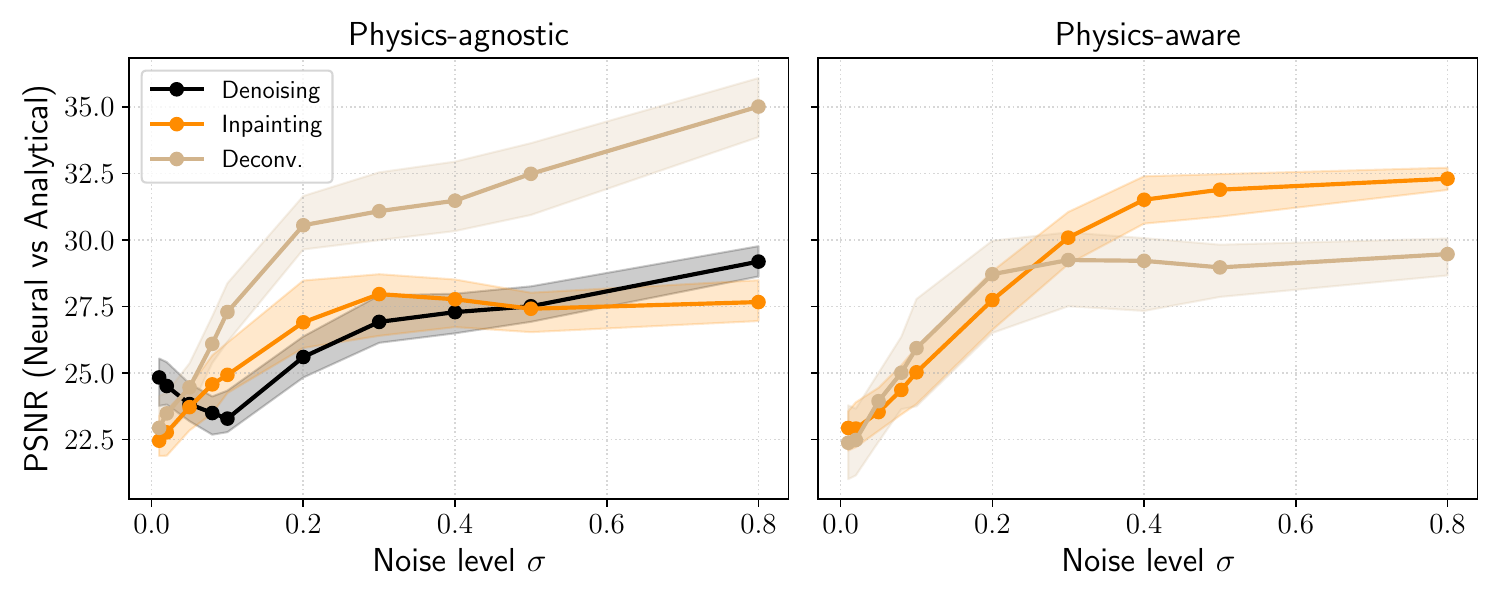} \\
\includegraphics[width=0.65\columnwidth]{./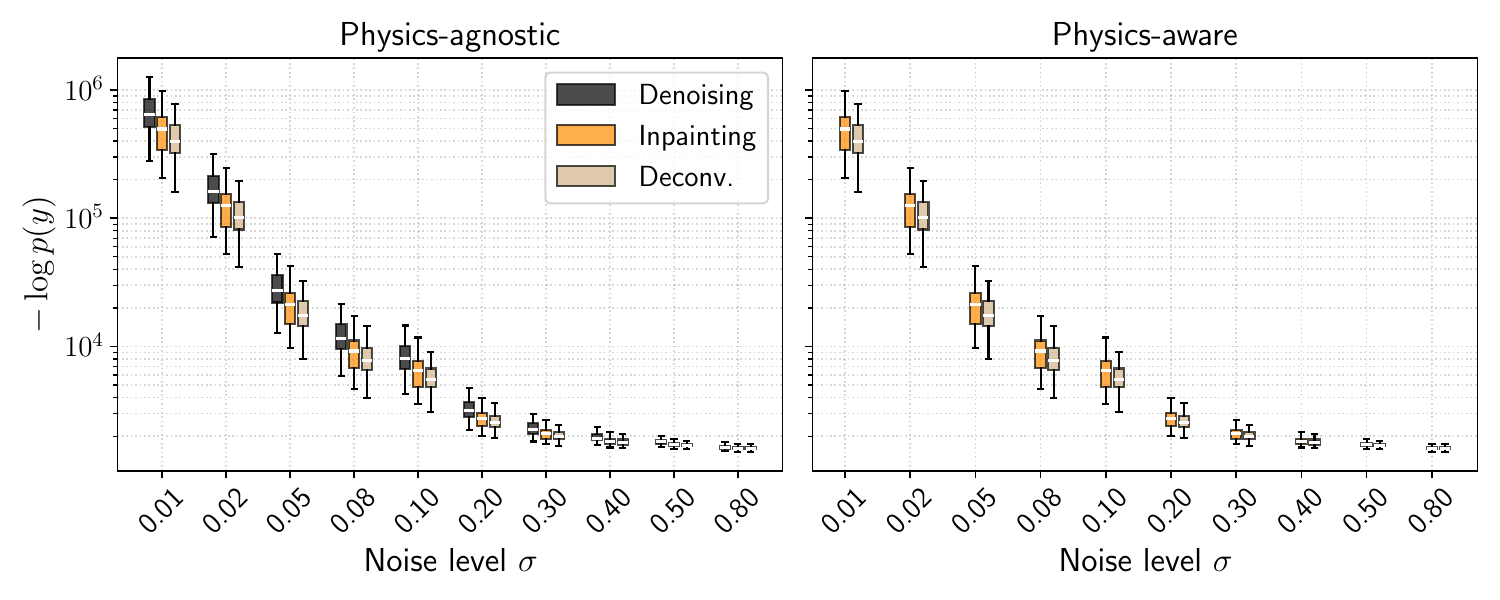}
\vskip -0.1in
\caption{Comparison of UNet2D and the analytical LE-MMSE estimator on OOD dataset CIFAR10 when both are trained on FFHQ-32.
Median and IQR using \(50\) images per \(\sigma\), \(P = 5\times 5\) and \(B\!=\!\Id\).
On top: PSNR between UNet2D and the analytical LE-MMSE estimator.
On bottom: negative-log-density of measurements \(y\) in CIFAR10 under the measurement distribution induced by FFHQ-32.
}
\label{fig:supp_ood_ffhq_cifar10}
\end{figure}

\null
\pagebreak
\subsection{Dataset size influence}\label{sec:supp_experiment_dataset_size}
We analyze here the influence of the dataset size on the alignment between trained neural networks and the analytical LE-MMSE estimator.
We train UNet2D models with receptive field of size \(P = 5 \times 5\) and \(B\!=\!\Id\) on various dataset sizes from \(10^3\) to \(5 \times 10^4\)
images from FFHQ-32.
The PSNR between trained UNet2D and the analytical LE-MMSE estimator is reported in~\cref{fig:dataset_size_influence}.
We observe that the dataset size has limited influence on the alignment between trained neural networks and the analytical LE-MMSE formula, with a
slight improvement when increasing the dataset size.
\begin{figure}[ht!]
\centering
\vskip -0.1in
\includegraphics[width=0.75\columnwidth]{./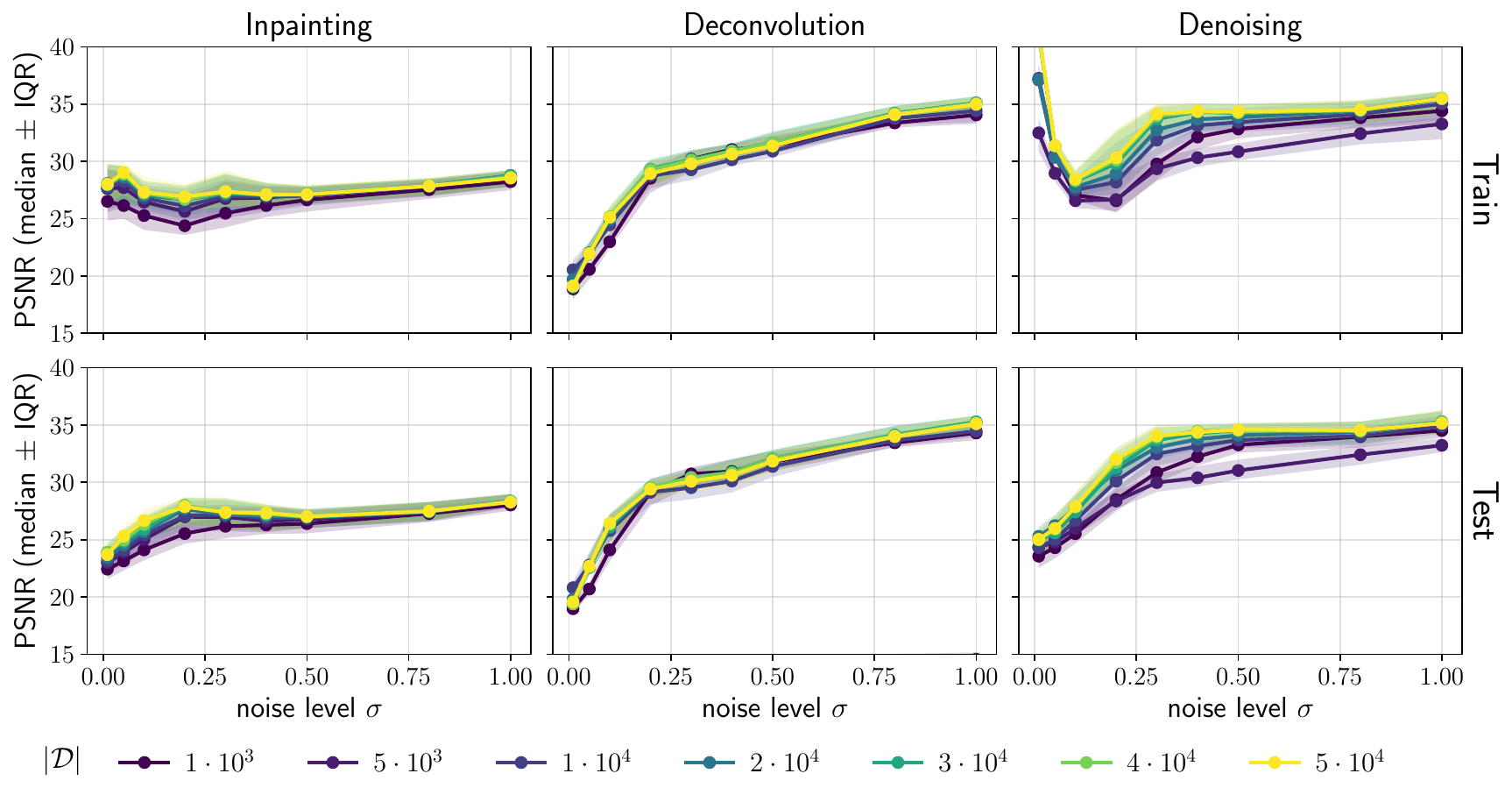}
\vskip -0.1in
\caption{
Dataset size has limited influence on the alignment between neural networks and the LE-MMSE formula.
Median and IQR using \(50\) images per \(\sigma\), \(P = 5\times 5\) and \(B\!=\!\Id\).
}
\label{fig:dataset_size_influence}
\vskip -0.1in
\end{figure}

\null
\pagebreak
\subsection{Results on \(3 \times 64 \times 64\) images}\label{sec:supp_experiment_64x64}
We provide additional results on images at \(3 \times 64 \times 64\) resolution using UNet2D architecture with receptive field of size \(11 \times 11\).
The models are trained on \(10^4\) images from FFHQ downscaled to \(64 \times 64\), with the same training procedure as in~\cref{sec:supp_training_procedure}.
\begin{table}[ht!]
\caption{Details of neural network architectures with various receptive fields used in our experiments for images at \(3 \times 64 \times 64\) resolution.}
\label{table:supp_architecture_unet2d_64}
\vspace*{-0.25cm}
\begin{center}
\begin{small}
    \begin{sc}
        \begin{tabular}{lccccc}
            \toprule
            & \multirow{2}{*}{\shortstack{Receptive field \\ (patch size)}} & \multicolumn{2}{c}{\multirow{2}{*}{Archi. hyper-parameters}} &
            \multirow{2}{*}{Num. parameters} \\
            & & & & \\
            \midrule
            % ---------------------------------------
            %         UNet2D
            % \multicolumn{4}{c}{UNet2D} \\
            UNet2D & & {\normalfont  \texttt{block\_out\_channels}} & {\normalfont  \texttt{kernel\_size}} & \\
            \midrule
            & \(11\) & \((64, 128, 256, 512)\)     & \((3, 1, 1, 1, 3)\)--\(1\) & 13.3M \\
            \bottomrule
        \end{tabular}
    \end{sc}
\end{small}
\end{center}
\vskip -0.1in
\end{table}

\begin{figure*}[ht!]
\centering
% \vskip -0.1in
\includegraphics[trim=16mm 0 0 0,clip,width=\linewidth]{./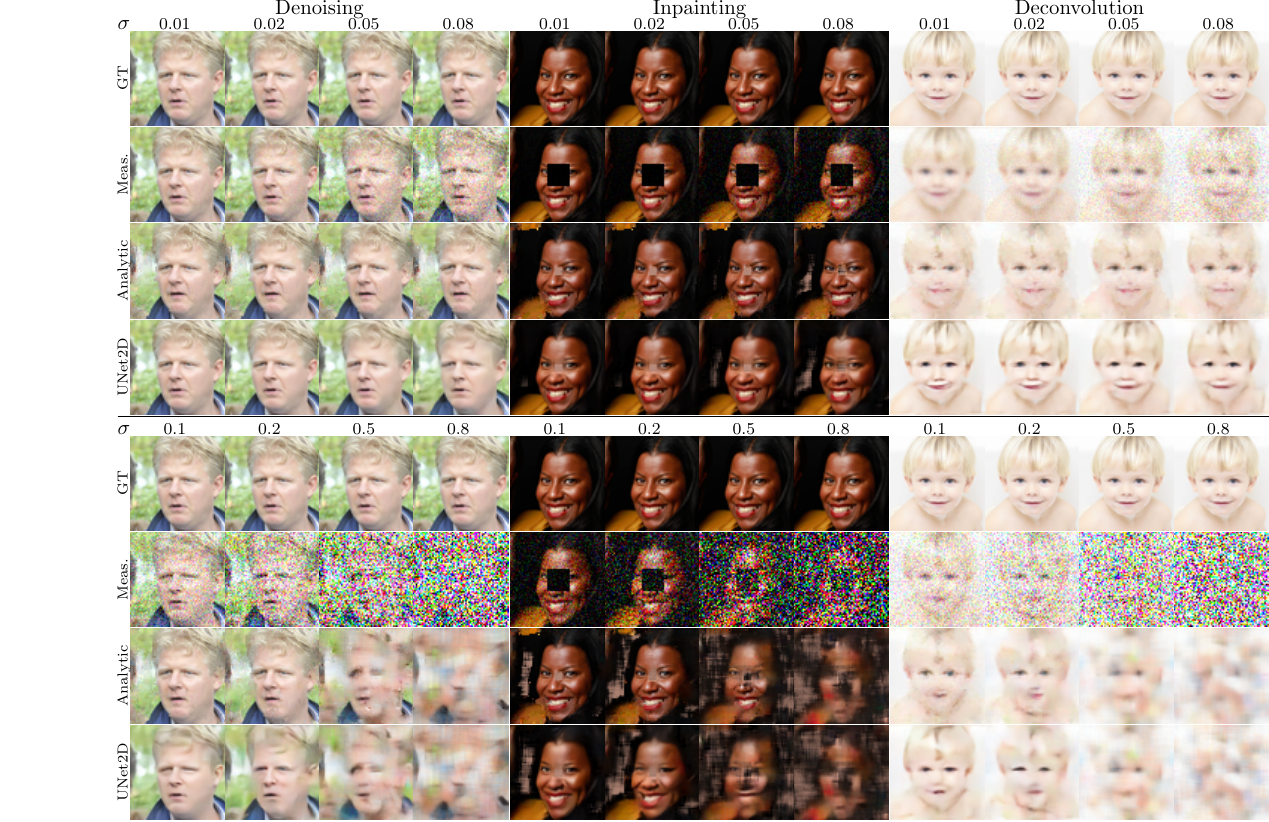}
\caption{Additional qualitative comparison between UNet2D and the analytical LE-MMSE estimator on FFHQ-64 across tasks.}
\label{fig:supp_additional_qualitative_64x64}
\end{figure*}

\null
\pagebreak
\subsection{Results on \(3 \times 128 \times 128\) images}\label{sec:supp_experiment_128x128}
We provide additional results on \(3000\) color images of size \(128\times128\) from the FFHQ dataset (\(\sim 50\) million patches) using ResNet
architecture with receptive field of size \(P = 11 \times 11\), with the same parameters as in~\cref{table:supp_architecture_unet2d_64} and training
procedure as in~\cref{sec:supp_training_procedure}.
We observe that on high-density regions (train, large noise), the LE-MMSE and ResNet have a good adequation with a slight drop in low-density region
(test, small noise).
\begin{table}[ht]
\centering
\caption{The alignment between trained ResNet and the analytical LE-MMSE estimator on FFHQ-128.}
\label{tab:denoise_deconv}
\begin{tabular}{lcccc}
\hline
& \multicolumn{2}{c}{\textbf{Train}}
& \multicolumn{2}{c}{\textbf{Test}} \\
\toprule
& Denoising & Deconvolution
& Denoising & Deconvolution \\
\(\sigma\)
& \(0.05 / 0.5\)
& \(0.05 / 0.5\)
& \(0.05 / 0.5\)
& \(0.05 / 0.5\) \\
\midrule
PSNR \(\uparrow\)
& \(30.13 / 27.02\)
& \(26.99 / 30.04\)
& \(25.03 / 29.33\)
& \(24.72 / 30.83\) \\

SSIM \(\uparrow\)
& \(0.92 / 0.91\)
& \(0.86 / 0.94\)
& \(0.76 / 0.93\)
& \(0.75 / 0.95\) \\

LPIPS \(\downarrow\)
& \(0.027 / 0.084\)
& \(0.089 / 0.060\)
& \(0.118 / 0.057\)
& \(0.175 / 0.047\) \\
\toprule
\end{tabular}
\end{table}
\begin{figure*}[ht!]
\centering
\includegraphics[trim=10mm 0 0 0,clip,width=\linewidth]{./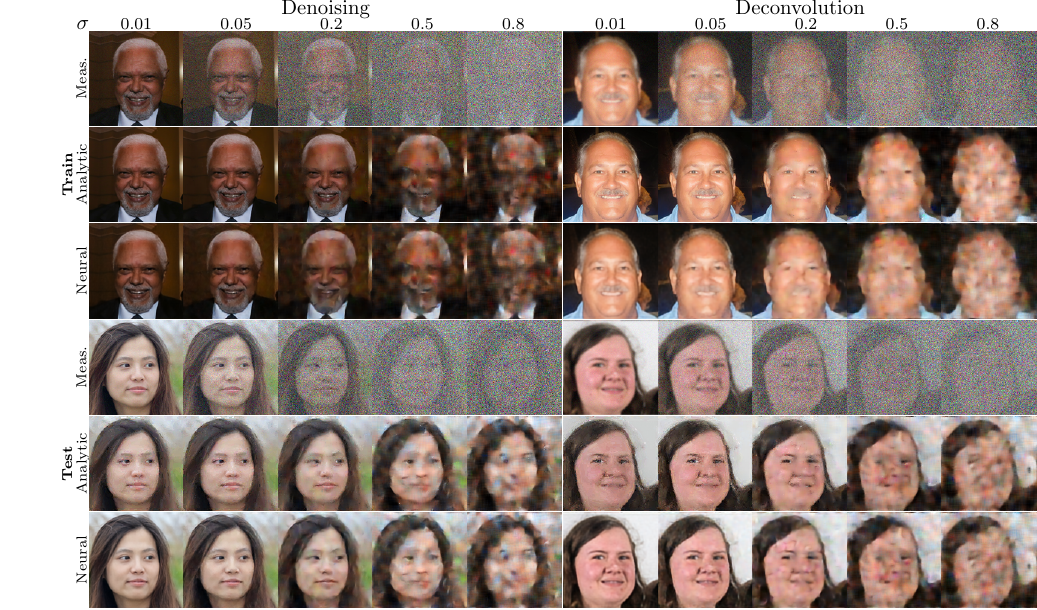}
\caption{Additional qualitative comparison between ResNet and the analytical LE-MMSE estimator on FFHQ-128.}
\label{fig:supp_additional_qualitative_128x128}
\end{figure*}
% ########################################################################################################

%%%%%%%%%%%%%%%%%%%%%%%%%%%%%%%%%%%%%%%%%%%%%%%%%%%%%%%%%%%%%%%%%%%%%%%%%%%%%%%
%%%%%%%%%%%%%%%%%%%%%%%%%%%%%%%%%%%%%%%%%%%%%%%%%%%%%%%%%%%%%%%%%%%%%%%%%%%%%%%

\end{document}